\theoremstyle{plain}
\newtheorem{theorem}{Theorem}[section]
\newtheorem{lemma}[theorem]{Lemma}
\newtheorem{corollary}[theorem]{Corollary}
\theoremstyle{definition}
\theoremstyle{remark}
\newcommand*\rel@kern[1]{\kern#1\dimexpr\macc@kerna}
\newcommand*\widebar[1]{%
  \begingroup
  \def\mathaccent##1##2{%
    \rel@kern{0.8}%
    \overline{\rel@kern{-0.8}\macc@nucleus\rel@kern{0.2}}%
    \rel@kern{-0.2}%
  }%
  \macc@depth\@ne
  \let\math@bgroup\@empty \let\math@egroup\macc@set@skewchar
  \mathsurround\z@ \frozen@everymath{\mathgroup\macc@group\relax}%
  \macc@set@skewchar\relax
  \let\mathaccentV\macc@nested@a
  \macc@nested@a\relax111{#1}%
  \endgroup
}
\newcommand{\argmax}{\mathop{\rm arg~max}\limits}
\newcommand{\argmin}{\mathop{\rm arg~min}\limits}
\title{Learning from Hard Labels with \\ Additional Supervision on Non-Hard-Labeled Classes}
\author{%
  Kosuke Sugiyama \\
  Waseda University\\
  3-4-1 Okubo, Shinjuku, Tokyo 169-8555, Japan \\
  \texttt{kohsuke0322@asagi.waseda.jp} \\
  \And
  Masato Uchida \\
  Waseda University \\
  3-4-1 Okubo, Shinjuku, Tokyo 169-8555, Japan \\
  \texttt{m.uchida@waseda.jp} \\
}
\begin{document}

\maketitle

\begin{abstract}
In scenarios where training data is limited due to observation costs or data scarcity, enriching the label information associated with each instance becomes crucial for building high-accuracy classification models.
In such contexts, it is often feasible to obtain not only hard labels but also {\it additional supervision}, such as the confidences for the hard labels.
This setting naturally raises fundamental questions: {\it What kinds of additional supervision are intrinsically beneficial?} And {\it how do they contribute to improved generalization performance?}
To address these questions, we propose a theoretical framework that treats both hard labels and additional supervision as probability distributions, and constructs soft labels through their affine combination.
Our theoretical analysis reveals that the essential component of additional supervision is not the confidence score of the assigned hard label, but rather the information of the distribution over the non-hard-labeled classes.
Moreover, we demonstrate that the additional supervision and the mixing coefficient contribute to the refinement of soft labels in complementary roles. Intuitively, in the probability simplex, the additional supervision determines the direction in which the deterministic distribution representing the hard label should be adjusted toward the true label distribution, while the mixing coefficient controls the step size along that direction.
Through generalization error analysis, we theoretically characterize how the additional supervision and its mixing coefficient affect both the convergence rate and asymptotic value of the error bound.
Finally, we experimentally demonstrate that, based on our theory, designing additional supervision can lead to improved classification accuracy, even when utilized in a simple manner.
\end{abstract}

\section{Introduction}
\label{sec:introduction}

Building high-accuracy classification models typically requires large-scale training datasets that contain sufficient information about the true data distribution.
However, in many real-world scenarios, collecting a sufficient number of instances is fundamentally challenging due to factors such as the rarity of the target phenomenon, high observation costs, or time constraints.
Under such constraints, increasing the number of instances may be infeasible, making it crucial to enhance the informational content of each supervised label to compensate for data scarcity.

While supervised labels are typically provided as hard labels that indicate a single correct class, the true class assignment is often inherently probabilistic.
This can occurs, for example, when different annotators assign different labels to the same instance, or when the label reflects the outcome of a stochastic event.
From this perspective, soft labels that reflect the true label distribution provide a more informative alternative to deterministic hard labels.
A common approach is to aggregate labels from multiple annotators to estimate true label distribution \cite{sheng2008get,rodrigues2018deep,peterson_2019_ICCV,battleday2020capturing,uma2020case,fornaciari2021beyond,davani2022dealing,collins2022eliciting,wu2023don}.
However, such methods are not applicable when it is difficult to recruit multiple annotators or when labels cannot be re-observed, such as in the case of logged data involving one-time, non-repeatable events.

In contrast, even in such settings, it is often possible to access {\it additional supervision} that conveys partial information about the label distribution, such as confidence scores associated with hard labels or probability estimates over alternative classes.
For example, domain experts may assign confidence scores to hard labels based on prior knowledge.
Leveraging such additional supervision can enable the construction of effective soft labels without the need for multiple annotators, even in cases where labels cannot be re-observed. 
 To fully realize this potential, it is essential to address the following two fundamental questions:

\textbf{RQ1: What types of additional supervision are effective in improving the quality of soft labels?}
Prior work has explored strategies for obtaining soft labels from annotators, such as requesting the probability assigned to the most probable class or to one of the least probable classes~\cite{collins2022eliciting,wu2023don}. 
While these studies provide empirical evidence for the usefulness of such signals, the theoretical relationship between the type of additional supervision and the quality of the resulting soft label, in terms of their closeness to the true label distribution, has not been systematically investigated.

\textbf{RQ2: How does improving label quality affect the generalization performance of classification models?}
Theoretical studies on learning from weakly supervised labels have examined various forms of label imperfection \cite{cour2011learning,ishida2017cll,tang2025confidence,cao2021sconf,wang2023confdiff}. 
 Theoretical analyses also exist for learning with soft labels that correct or refine hard labels \cite{wang2019theoretical}.
However, there is still limited theoretical understanding of how the quality of soft labels affects the generalization performance of classification models.

Addressing these research questions requires a theoretical framework that integrates hard labels and additional supervision to construct soft labels. 
In this work, we propose such a framework and provide its theoretical analysis of its properties.
We begin by formulating the framework as a problem of estimating a probability distribution in Section~\ref{sec:formulation}.
Various forms of additional supervision associated with each instance $\bm{x}$ can be represented as probability distributions, as illustrated in Figure~\ref{fig:eg_additional_supervision}.
Accordingly, we denote this additional supervision by $p_{\mathrm{A}}(y|\bm{x})$.
We then consider a natural way to construct a soft label $p_{\lambda}(y|\bm{x})$ as a mixture distribution obtained by an affine combination of two probability distributions:
The first is the hard label, represented as a deterministic distribution that assigns probability one to a single class. 
The second is $p_{\mathrm{A}}(y|\bm{x})$, the distribution corresponding to the additional supervision. 
These are combined using a mixing coefficient $\lambda(\bm{x})$.
The optimal mixing coefficient $\lambda(\bm{x})$ is then obtained by minimizing the Kullback-Leibler (KL) divergence between the true label distribution and $p_{\lambda}(y|\bm{x})$.

\begin{figure*}[t]
    \centering
    \includegraphics[width=0.98\linewidth]{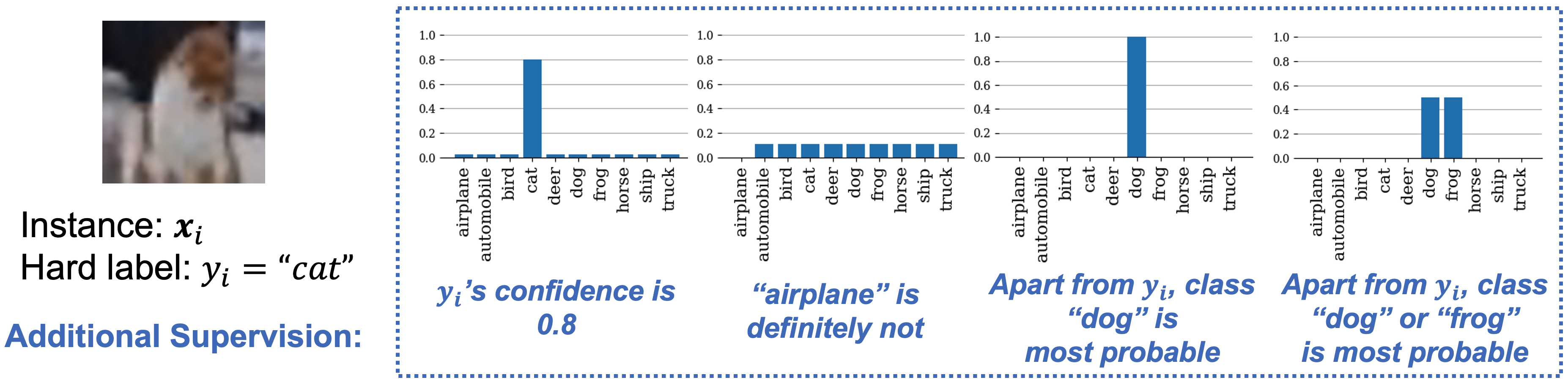}
    \caption{Examples of additional supervision in CIFAR-10 dataset \cite{cifar10}.}
    \label{fig:eg_additional_supervision}
\end{figure*}

\textbf{Answer for RQ1.}
Under the proposed formulation, we show that $p_{\mathrm{A}}(y|\bm{x})$ and the mixing coefficient $\lambda(\bm{x})$ make independent contributions to improving soft labels and play fundamentally different roles in Section~\ref{subsec:interpret_formulation}.
We first show that the proposed objective function admits a bias-variance decomposition using information geometry.
In this decomposition, the bias term quantifies the discrepancy between the true label distribution and the optimal mixture distribution $p_{\mathrm{opt}}(y|\bm{x})$, while the variance term reflects the discrepancy between $p_{\mathrm{opt}}(y|\bm{x})$ and the estimated mixture distribution $p_{\lambda}(y|\bm{x})$.
A detailed analysis of this decomposition reveals that the bias term captures how accurately $p_{\mathrm{A}}(y|\bm{x})$ represents the class proportions excluding the hard-labeled class, and is independent of the mixing coefficient $\lambda(\bm{x})$.
This insight leads to a key conclusion: the additional supervision $p_{\mathrm{A}}(y|\bm{x})$ does not need to contain any information about the hard-labeled class; it is sufficient for it to encode information only about the non-hard-labeled classes.
In contrast, we show that the variance term quantifies how accurately $p_{\lambda}(y|\bm{x})$ estimates the true probability of the hard-labeled class, and that minimizing this term is not affected by the choice of $p_{\mathrm{A}}(y|\bm{x})$.
This analysis further shows that optimizing $\lambda(\bm{x})$ requires no knowledge of the additional supervision and is equivalent to estimating the true probability of the hard-labeled class.

\textbf{Answer for RQ2.}
We further provide a theoretical investigation for how the design of $p_{\mathrm{A}}(y|\bm{x})$ and $\lambda(\bm{x})$ affects classification performance through a generalization error analysis in a finite-sample setting in Section~\ref{subsec:error_analysis}.
We first derive an error bound that explicitly characterizes how the generalization error is affected by the discrepancy between the soft label $p_{\lambda}(y|\bm{x})$ and the true label distribution.
The derived error bound quantitatively characterizes how the design of $p_{\mathrm{A}}(y|\bm{x})$ and $\lambda(\bm{x})$, which determine $p_{\lambda}(y|\bm{x})$, influences the convergence rate and asymptotic value of the error bound.
Specifically, we show that when $p_{\lambda}(y|\bm{x})$ significantly deviates from $p_{*}(y|\bm{x})$, the error bound is dominated by the slower-converging term of order $\mathcal{O}_p(1/n^{1/4})$, rather than the faster-converging term of order $\mathcal{O}_p(1/n^{1/2})$.
This result provides theoretical support for the claim that improving $p_{\mathrm{A}}(y|\bm{x})$ and $\lambda(\bm{x})$ contributes to reducing the generalization error of classification models.

Based on our theoretical insights, we experimentally validate that even simple designs and usage of $p_{\mathrm{A}}(y|\bm{x})$ can improve generalization performance in Section~\ref{sec:experiments}.
Specifically, we demonstrate that selecting the most probable Top-1 or Top-2 class among the non-hard-labeled classes as $p_{\mathrm{A}}(y|\bm{x})$ is effective, even when $\lambda(\bm{x})$ is set as a constant, representing the simplest possible implementation.

\section{Related Work}
\label{seq:related_work}

\subsection{Supervised Learning}
\label{subsec:related_supervised}
In this paper, we consider the supervised learning framework for classification tasks based on Empirical Risk Minimization (ERM) \cite{mohri2018foundations,shalev2014understanding}.
ERM assumes a training dataset where each instance is annotated with a single hard label.
Let the input space be denoted by $\mathcal{X} \subseteq \mathbb{R}^d$ and the label space by $\mathcal{Y} \subset \mathbb{R}$, where $d$ is the dimensionality of the input.
Let $(\bm{X}, Y)$ be random variables over $\mathcal{X} \times \mathcal{Y}$ representing instances and their hard labels. 
We assume that i.i.d.\ samples $(\bm{x}, y)$ are drawn from the true joint distribution $p_{*}(\bm{x}, y)$.
The goal in ERM is to learn a classification model $f: \mathcal{X} \rightarrow \mathcal{Y} \in \mathcal{F}$ that minimizes the expected risk $R_{l}(f) = \mathbb{E}_{p_*(\bm{x}, y)}[l(f(\bm{X}), Y)]$, 
where $l: \mathbb{R} \times \mathbb{R} \rightarrow \mathbb{R}+$ is a loss function, and $\mathcal{F}$ is a hypothesis class for classification models.
In practice, ERM approximates this expected risk using a finite training sample $S := \{(\bm{x}_i, y_i)\}_{i=1}^n$ with $n \in \mathbb{N}+$, leading to the empirical risk $\hat{R}_{l,S}(f) = \mathbb{E}_{p_S(\bm{x}, y)}[l(f(\bm{X}), Y)] = \frac{1}{n}\sum_{i=1}^n l(f(\bm{x}_i), y_i)$, which is minimized to learn $f$.
Here, $p_S$ denotes the empirical distribution defined by $S$, and $y_i$ is the hard label for instance $\bm{x}_i$.

\subsection{Constructing Soft Labels by Multiple Annotators}
\label{subsec:related_annotator}

In some tasks, different annotators may assign different labels to the same instance.
To handle this uncertainty, various methods have been studied that collect hard labels from multiple annotators and combine them to form soft labels that reflect annotation ambiguity \cite{sheng2008get,rodrigues2018deep,peterson_2019_ICCV,battleday2020capturing,uma2020case,fornaciari2021beyond,davani2022dealing}.
These methods can be viewed as using repeated human annotation to approximate the underlying label distribution.
However, this approximation often incurs substantial annotation costs. 

To reduce annotation costs, Collins et al. (2022) and Wu et al. (2023) proposed collecting richer label information from a small number of annotators \cite{collins2022eliciting,wu2023don}. 
Each annotator provides the most probable class with its probability, the next most probable class, and a class assigned zero probability.
However, accurately assigning probability values is often challenging, and multiple annotators may still be required.

Furthermore, all of these methods assume manual annotation as a prerequisite, rendering them inapplicable to scenarios in which labels are passively obtained where repeated observation of the same instance is infeasible.
In contrast, even in such cases, it may be possible to incorporate additional supervision about the label distribution based on domain knowledge.
In this study, we propose a practical framework for generating soft labels by augmenting each hard-labeled instance with a single piece of additional information. 
This allows us to model label distributions effectively without relying on multiple annotators.

\subsection{Constructing Soft Labels Using only the Train Dataset}
\label{subsec:related_using_data}

Several approaches have been proposed to refine supervision solely from existing hard-labeled training data.
Representative methods include label enhancement (LE) \cite{xu2020variational,xu2021label,qianghai2023generalized,zheng2023label}, knowledge distillation (KD) \cite{hinton2015distilling,gou2021knowledge,furlanello2018born}, and label smoothing (LS) \cite{Szegedy_2016_CVPR}.
LE constructs more plausible soft labels by comparing the hard label assigned to a sample with those assigned to its neighbors in the feature space \cite{xu2020variational,xu2021label,qianghai2023generalized,zheng2023label}.
This framework has also been extended to weakly supervised learning settings \cite{xu2019partial}.
KD was originally proposed as a model compression technique \cite{hinton2015distilling,gou2021knowledge}.
In KD, a teacher model is first trained using hard labels, and its output distribution is then used as soft labels to train a student model.
This procedure, known as self-distillation, has been shown to sometimes allow the student to outperform the teacher \cite{furlanello2018born}.
LS constructs soft labels by perturbing the hard labels with uniform noise, which has been shown to improve generalization and calibration \cite{Szegedy_2016_CVPR,muller2019does}.

Despite their effectiveness, these methods have limitations.
LE and KD may be less effective when the amount of data is limited. 
Moreover, LS does not account for label distributions and therefore provides limited improvement in label quality.
In this work, we propose a framework that overcomes these limitations by introducing additional supervision to enable more effective soft label generation.

\section{Formulation}
\label{sec:formulation}

In this section, we formulate the refinement of supervision using additional supervision as a problem of estimating a probability distribution.
Two key considerations in this formulation are: (i) how the additional supervision is represented, and (ii) how it is utilized.
Simple examples of such additional supervision include ``the class most probable to be assigned other than the hard label'' or ``classes with zero probability of being the hard label.''
As illustrated in Figure~\ref{fig:eg_additional_supervision}, such information can be naturally represented as a probability distribution over $y$ for each instance $\bm{x}_i$.
Based on this observation, we denote the additional supervision for each $\bm{x}_i$ by a probability distribution $p_{\mathrm{A}}(y|\bm{x}_i)$.

Given this setup, the problem of refining label can be viewed as constructing a soft label that approximates the true label distribution $p_*(y|\bm{x}_i)$, by integrating the hard label represented as a deterministic distribution $p_{S}(y|\bm{x}_i)=\mathbbm{1}_{[y=y_i]}$ and $p_{\mathrm{A}}(y|\bm{x}_i)$.
In this work, we consider performing this integration via an affine combination, one of the most natural and interpretable approaches to combining distributions.
Specifically, we define the resulting soft label $p_{\lambda}(y|\bm{x}_i)$ as follows:
\begin{align}
    p_{\lambda}(y|\bm{x}_i) := \lambda(\bm{x}_i)p_{S}(y|\bm{x}_i) + (1- \lambda(\bm{x}_i))p_{\mathrm{A}}(y|\bm{x}_i), ~~ \forall i \in [n],
\label{eq:linear_comb}
\end{align}
where the mixing coefficient $\lambda: \mathcal{X} \rightarrow \mathbb{R}$ controls the degree of integration for each instance.
We define $\Delta_{\mathcal{Y}}$ as the set of probability distributions over $\mathcal{Y}$, and consider only those $\lambda$ for which $p_{\lambda}(y|\bm{x}_i) \in \Delta_{\mathcal{Y}}$ holds for any $i \in [n]$.
Here, the value of $\lambda(\bm{x}_i)$ is not restricted to the range $[0, 1]$.

Ideally, the mixing coefficient $\lambda(\bm{x}_i)$ should be optimized so that  $p_{\lambda}(y|\bm{x}_i)$ closely approximates the true label distribution $p_{*}(y|\bm{x}_i)$.
To quantify the quality of $p_{\lambda}(y|\bm{x}_i)$, we adopt the KL divergence between $p_*(y|\bm{x}_i)$ and $p_{\lambda}(y|\bm{x}_i)$, denoted by $D_{\mathrm{KL}}(p_*(Y|\bm{x}_i) || p_{\lambda}(Y|\bm{x}_i)) := \mathbb{E}_{p_*(y|\bm{x}_i)}\big[\log \frac{p_*(Y|\bm{x}_i)}{p_{\lambda}(Y|\bm{x}_i)}\big]$.
Then, the optimization problem for $\lambda(\bm{x}_i)$ is formulated as follows:
\begin{align}
\textstyle \min_{\lambda(\bm{x}_i)} D_{\mathrm{KL}}(p_*(Y|\bm{x}_i) || p_{\lambda}(Y|\bm{x}_i)), ~~~ \forall i \in [n].
\label{eq:mixture_optimization}
\end{align}
In practice, since $p_*(y|\bm{x}_i)$ is unobservable, the optimization in Eq.~\eqref{eq:mixture_optimization} cannot be executed.
While the KL divergence in Eq.~\eqref{eq:mixture_optimization} cannot be explicitly computed, it serves as a theoretical basis for the analysis presented in Section~\ref{subsec:interpret_formulation}.
Through this analysis, we derive both the desirable properties that $p_{\mathrm{A}}(y|\bm{x}_i)$ should satisfy and a principled strategy for optimizing $\lambda(\bm{x}_i)$ without accessing to $p_*(y|\bm{x}_i)$.
Furthermore, Section \ref{subsec:error_analysis} specifically demonstrates how the objective function in Eq.~\eqref{eq:mixture_optimization}, representing the quality of $p_{\lambda}(y|\bm{x})$, influences both the convergence rate and asymptotic value of the error bound for the classification model trained using $p_{\lambda}(y|\bm{x})$.

\section{Theoretical Analysis}
\label{sec:theoretical_analysis}

In our formulation, refining supervision requires both a better form of additional supervision $p_{\mathrm{A}}(y|\bm{x})$ and an appropriately chosen mixing coefficient $\lambda(\bm{x})$.
This leads to two central questions: How do improvements in $p_{\mathrm{A}}(y|\bm{x})$ and $\lambda(\bm{x})$ influence the refinement of labels?
And, how does such refinement impact the generalization performance of a classification model?
This section provides theoretical answers to these questions in Sections~\ref{subsec:interpret_formulation} and~\ref{subsec:error_analysis}, respectively.

\subsection{Theoretical Interpretation for the Formulation}
\label{subsec:interpret_formulation}

In this section, through the analysis of the objective function in Eq.~\eqref{eq:mixture_optimization}, we clarify the desirable properties of $p_{\mathrm{A}}(y|\bm{x}_i)$ and establish a policy for optimizing $\lambda(\bm{x}_i)$.
Our analysis lead to two key insights:
First, for $p_{\mathrm{A}}(y|\bm{x}_i)$, it is not necessary to encode information about the hard-labeled class. 
Instead, it is fundamentally important to accurately represent the occurrence proportions of the non-hard-labeled classes.
Second, the optimization of $\lambda(\bm{x}_i)$ is equivalent to estimating the probability of the hard-labeled class, and this process does not require any reference to $p_{\mathrm{A}}(y|\bm{x}_i)$.

These findings are derived by performing a bias-variance decomposition of the objective function in Eq.~\eqref{eq:mixture_optimization} and analyzing the resulting terms.
This bias–variance decomposition can be derived using information geometry and is expressed as follows.
The proof is provided in Appendix~\ref{apdx_subsec:proof_thm:bias_var_decomposition}.
\begin{theorem}
    For any $i \in [n]$ and any $\lambda$ such that $p_{\lambda}(y|\bm{x}_i) \in \Delta_{\mathcal{Y}}$, the following holds:
    \begin{align}
         D_{\mathrm{KL}}(p_*(Y|\bm{x}_i)|| p_{\lambda}(Y|\bm{x}_i) )
        = \underbrace{D_{\mathrm{KL}}(p_*(Y|\bm{x}_i)|| p_{\mathrm{opt}}(Y|\bm{x}_i) )}_{=:\mathrm{Bias}}
            + \underbrace{D_{\mathrm{KL}}(p_{\mathrm{opt}}(Y|\bm{x}_i)||  p_{\lambda}(Y|\bm{x}_i))}_{=: \mathrm{Variance}},
    \label{eq:bias_var_decomposition}
    \end{align}
    where, $p_{\mathrm{opt}}(y|\bm{x}_i) := p_{\lambda=\lambda^*}(y|\bm{x}_i), ~~ \lambda^* := \arg\min_{\lambda}D_{\mathrm{KL}}(p_*(Y|\bm{x}_i)|| p_{\lambda}(Y|\bm{x}_i) )$.
\label{thm:bias_var_decomposition}
\end{theorem}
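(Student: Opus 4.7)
The plan is to prove Theorem \ref{thm:bias_var_decomposition} by a direct computation that exploits two structural facts. First, $p_S(\cdot|\bm{x}_i)$ is a point mass at $y_i$, so $p_\lambda$ takes a simple piecewise form separating $y = y_i$ from $y \neq y_i$. Second, from an information-geometric perspective, $\{p_\lambda\}$ is a one-parameter mixture (m-flat) submanifold of $\Delta_{\mathcal{Y}}$, and $p_{\mathrm{opt}}$ is the m-projection of $p_*$ onto it; the claimed identity is a Pythagorean-type statement which, thanks to $p_S$ being a delta, reduces to an elementary algebraic check rather than requiring a general projection theorem.

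The main steps are as follows. (i) Expand $p_\lambda(y_i|\bm{x}_i) = \lambda + (1-\lambda) p_A(y_i|\bm{x}_i)$ and $p_\lambda(y|\bm{x}_i) = (1-\lambda) p_A(y|\bm{x}_i)$ for $y \neq y_i$. (ii) Differentiate $D_{\mathrm{KL}}(p_*(Y|\bm{x}_i) || p_\lambda(Y|\bm{x}_i))$ in $\lambda$ using $\frac{d}{d\lambda} p_\lambda(y|\bm{x}_i) = \mathbbm{1}_{[y=y_i]} - p_A(y|\bm{x}_i)$, and set it to zero to obtain
\begin{align*}
\lambda^* = \frac{p_*(y_i|\bm{x}_i) - p_A(y_i|\bm{x}_i)}{1 - p_A(y_i|\bm{x}_i)}.
\end{align*}
This yields two crucial identities: $p_{\mathrm{opt}}(y_i|\bm{x}_i) = p_*(y_i|\bm{x}_i)$, and $p_{\mathrm{opt}}(y|\bm{x}_i) = \tfrac{1 - p_*(y_i|\bm{x}_i)}{1 - p_A(y_i|\bm{x}_i)} p_A(y|\bm{x}_i)$ for $y \neq y_i$. (iii) Substitute these into $D_{\mathrm{KL}}(p_* || p_\lambda) - D_{\mathrm{KL}}(p_* || p_{\mathrm{opt}}) = \sum_y p_*(y|\bm{x}_i) \log \tfrac{p_{\mathrm{opt}}(y|\bm{x}_i)}{p_\lambda(y|\bm{x}_i)}$ and $D_{\mathrm{KL}}(p_{\mathrm{opt}} || p_\lambda) = \sum_y p_{\mathrm{opt}}(y|\bm{x}_i) \log \tfrac{p_{\mathrm{opt}}(y|\bm{x}_i)}{p_\lambda(y|\bm{x}_i)}$, and match them term by term. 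On $y = y_i$ the two contributions coincide because $p_*(y_i|\bm{x}_i) = p_{\mathrm{opt}}(y_i|\bm{x}_i)$. On $\{y \neq y_i\}$ the log-ratio $\log(p_{\mathrm{opt}}(y|\bm{x}_i)/p_\lambda(y|\bm{x}_i))$ equals the constant $\log\tfrac{1 - p_*(y_i|\bm{x}_i)}{(1-\lambda)(1-p_A(y_i|\bm{x}_i))}$ independent of $y$, so both sums collapse to this constant times $1 - p_*(y_i|\bm{x}_i)$, using $\sum_{y \neq y_i} p_{\mathrm{opt}}(y|\bm{x}_i) = 1 - p_*(y_i|\bm{x}_i)$.

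The main obstacle is not the algebra but legitimizing the first-order step. The constraint $p_\lambda \in \Delta_{\mathcal{Y}}$ restricts $\lambda$ to an interval, and I need to verify that $\lambda^*$ lies in its interior: a short check shows the feasible range is $[-\tfrac{p_A(y_i|\bm{x}_i)}{1-p_A(y_i|\bm{x}_i)},\ 1]$ when $p_A(y_i|\bm{x}_i) < 1$, with $\lambda^*$ strictly inside. The degenerate case $p_A(y_i|\bm{x}_i) = 1$ makes $p_\lambda$ independent of $\lambda$ so the identity reduces to $0 = 0$. Strict convexity of $\lambda \mapsto D_{\mathrm{KL}}(p_* || p_\lambda)$ on this interval then confirms the stationary point is the unique minimizer, making $p_{\mathrm{opt}}$ well defined.
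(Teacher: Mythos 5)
Your proof is correct, but it takes a genuinely different route from the paper's. The paper's proof of Theorem~\ref{thm:bias_var_decomposition} is a two-line appeal to the generalized Pythagorean theorem of information geometry: the family $\{p_{\lambda}(y|\bm{x}_i)\}$ is an $m$-geodesic, the segment from $p_*(y|\bm{x}_i)$ to its projection $p_{\mathrm{opt}}(y|\bm{x}_i)$ is an $e$-geodesic, and orthogonality gives the decomposition (the verification of the geodesic conditions is essentially delegated to a figure). You instead verify the identity directly: you compute $\lambda^* = \frac{p_*(y_i|\bm{x}_i)-p_{\mathrm{A}}(y_i|\bm{x}_i)}{1-p_{\mathrm{A}}(y_i|\bm{x}_i)}$ by differentiation, deduce $p_{\mathrm{opt}}(y_i|\bm{x}_i)=p_*(y_i|\bm{x}_i)$ and that $p_{\mathrm{opt}}(y|\bm{x}_i)/p_{\lambda}(y|\bm{x}_i)$ is constant on $y\neq y_i$, and match the cross term; I checked the algebra and it is right, including the feasibility interval $[-\tfrac{p_{\mathrm{A}}(y_i|\bm{x}_i)}{1-p_{\mathrm{A}}(y_i|\bm{x}_i)},1]$ and the degenerate case. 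What your approach buys: it is self-contained (no external projection theorem), it explicitly legitimizes the first-order condition and the boundary cases, which the paper glosses over, and it obtains $p_{\mathrm{opt}}(y_i|\bm{x}_i)=p_*(y_i|\bm{x}_i)$ as a by-product --- this is exactly Eq.~\eqref{eq:equal_star_opt} of Lemma~\ref{lem:dist_equations}, which the paper proves separately via a chain-rule decomposition of the KL divergence over $Z_i=\mathbbm{1}_{[Y=y_i]}$; so your argument effectively folds Lemma~\ref{lem:dist_equations} into the proof of the theorem. What the paper's approach buys is brevity and a cleaner geometric picture that generalizes beyond this specific affine family. One shared caveat, not a flaw relative to the paper: if $p_{\mathrm{A}}(y|\bm{x}_i)=0$ for some $y\neq y_i$ with $p_*(y|\bm{x}_i)>0$, both sides of Eq.~\eqref{eq:bias_var_decomposition} are $+\infty$ and the term-by-term matching should be read as such; neither proof addresses this explicitly.
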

The bias term quantifies the discrepancy between the true label distribution $p_{*}(y|\bm{x}_i)$ and the optimal mixture distribution $p_{\mathrm{opt}}(y|\bm{x}_i)$, which is defined using the optimal mixing coefficient $\lambda^*$.
In contrast, the variance term measures the difference between $p_{\mathrm{opt}}(y|\bm{x}_i)$ and a particular mixture distribution $p_{\lambda}(y|\bm{x}_i)$.

By analyzing the bias and variance terms in Theorem~\ref{thm:bias_var_decomposition}, we further clarify their relationships with $p_{\mathrm{A}}(y|\bm{x}_i)$ and $\lambda(\bm{x}_i)$.
To support this analysis, we first present the following key equations, with their proofs are provided in Appendix~\ref{apdx_subsec:proof_lem:dist_equations}.
\begin{lemma}
    For any $i \in [n]$ and any $\lambda$ such that $p_{\lambda}(y|\bm{x}_i) \in \Delta_{\mathcal{Y}}$, the following holds:
    \begin{align}
        &\textstyle \frac{1}{p_{\mathrm{A}}(Y\neq y_i | \bm{x}_i)} p_{\mathrm{A}}(y'| \bm{x}_i) = \frac{1}{p_{\lambda}(Y\neq y_i | \bm{x}_i)} p_{\lambda}(y'| \bm{x}_i), ~~~~\forall y' \in \mathcal{Y}\setminus \{y_i\}, \label{eq:equal_lam_add} \\
        &p_*(y_i|\bm{x}_i)=p_{\mathrm{opt}}(y_i|\bm{x}_i),
    \label{eq:equal_star_opt}
    \end{align}
    where $p_{\mathrm{A}}(Y\neq y_i|\bm{x}_i) := \sum_{y \in \mathcal{Y}\setminus\{y_i\}}p_{\mathrm{A}}(y|\bm{x}_i)$ and $p_{\lambda}(Y\neq y_i|\bm{x}_i) := \sum_{y \in \mathcal{Y}\setminus\{y_i\}}p_{\lambda}(y|\bm{x}_i)$.
\label{lem:dist_equations}
\end{lemma}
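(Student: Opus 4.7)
The plan is to prove the two equalities in turn, with the first being a direct consequence of how the hard-label part of $p_\lambda$ is supported, and the second following from a one-dimensional first-order condition (or equivalently an m-projection argument in information geometry).

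For Eq.~(\ref{eq:equal_lam_add}), I would substitute the defining formula $p_\lambda(y|\bm{x}_i) = \lambda(\bm{x}_i)p_S(y|\bm{x}_i) + (1-\lambda(\bm{x}_i))p_{\mathrm{A}}(y|\bm{x}_i)$ and observe that for any $y' \neq y_i$ the hard-label distribution vanishes: $p_S(y'|\bm{x}_i) = 0$. Hence $p_\lambda(y'|\bm{x}_i) = (1-\lambda(\bm{x}_i))\,p_{\mathrm{A}}(y'|\bm{x}_i)$. Summing over $y' \in \mathcal{Y}\setminus\{y_i\}$ yields $p_\lambda(Y \neq y_i|\bm{x}_i) = (1-\lambda(\bm{x}_i))\,p_{\mathrm{A}}(Y \neq y_i|\bm{x}_i)$, so the common factor $(1-\lambda(\bm{x}_i))$ cancels when we form the ratio, giving the desired identity. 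A minor caveat is the case $\lambda(\bm{x}_i)=1$: then both sides vanish, but the statement should be read as an equality of conditional distributions on $\mathcal{Y}\setminus\{y_i\}$ wherever the normalizers are positive, and I would state this explicitly.

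For Eq.~(\ref{eq:equal_star_opt}), I would view $\lambda \mapsto D_{\mathrm{KL}}(p_*(Y|\bm{x}_i) \,\|\, p_\lambda(Y|\bm{x}_i))$ as a smooth, strictly convex function of the scalar $\lambda$ on the open interval where $p_\lambda \in \Delta_\mathcal{Y}$, and impose the stationarity condition $\frac{d}{d\lambda}D_{\mathrm{KL}} = 0$ at $\lambda=\lambda^*$. Writing $a := p_{\mathrm{A}}(y_i|\bm{x}_i)$ and $s := p_*(y_i|\bm{x}_i)$, the derivative splits cleanly into a $y=y_i$ term and a $y \neq y_i$ term, the latter collapsing via $\sum_{y \neq y_i}p_*(y|\bm{x}_i) = 1-s$. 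The stationarity equation then reduces by elementary algebra to $\lambda^* = (s-a)/(1-a)$, from which substitution into $p_{\mathrm{opt}}(y_i|\bm{x}_i) = \lambda^* + (1-\lambda^*)a$ yields exactly $s$, proving the claim.

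An equivalent, more conceptual route I would mention is the information-geometric one: the set $\{p_\lambda : p_\lambda \in \Delta_\mathcal{Y}\}$ is an m-flat submanifold (a mixture geodesic between $p_S$ and $p_{\mathrm{A}}$), and $p_{\mathrm{opt}}$ is the m-projection of $p_*$ onto it. The Pythagorean relation of Theorem~\ref{thm:bias_var_decomposition} is itself a manifestation of this, and the orthogonality condition for m-projection onto the one-dimensional family generated by the difference $p_S - p_{\mathrm{A}}$ reads $\mathbb{E}_{p_*}[p_S - p_{\mathrm{A}}] = \mathbb{E}_{p_{\mathrm{opt}}}[p_S - p_{\mathrm{A}}]$ in an appropriate sense, which simplifies (since $p_S$ is the point mass at $y_i$) to $p_*(y_i|\bm{x}_i) = p_{\mathrm{opt}}(y_i|\bm{x}_i)$. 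I do not expect any serious obstacle; the main care needed is ensuring $1 - a > 0$ so that $\lambda^*$ is well-defined, and handling the boundary case $a=1$ (where $p_{\mathrm{A}}$ already equals $p_S$ and the claim is trivial) separately.
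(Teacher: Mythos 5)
Your proof of Eq.~\eqref{eq:equal_lam_add} is exactly the paper's: substitute the affine combination, use $p_S(y'|\bm{x}_i)=0$ for $y'\neq y_i$ to get $p_{\lambda}(y'|\bm{x}_i)=(1-\lambda(\bm{x}_i))p_{\mathrm{A}}(y'|\bm{x}_i)$, and cancel the common factor in the ratio. For Eq.~\eqref{eq:equal_star_opt} you take a genuinely different route. The paper never differentiates: it regroups the KL objective as the binary KL divergence between $(p_*(y_i|\bm{x}_i),\,1-p_*(y_i|\bm{x}_i))$ and $(p_{\lambda}(y_i|\bm{x}_i),\,1-p_{\lambda}(y_i|\bm{x}_i))$ plus a remainder that Eq.~\eqref{eq:equal_lam_add} shows is constant in $\lambda$; the binary KL term is nonnegative and vanishes exactly when $p_{\lambda}(y_i|\bm{x}_i)=p_*(y_i|\bm{x}_i)$, and a feasible $\lambda$ achieving this exists, so the minimizer must satisfy it. Your stationarity computation is correct and reaches the same conclusion: writing $a:=p_{\mathrm{A}}(y_i|\bm{x}_i)$ and $s:=p_*(y_i|\bm{x}_i)$, setting the derivative to zero does give $\lambda^*=(s-a)/(1-a)$ and hence $p_{\lambda^*}(y_i|\bm{x}_i)=\lambda^*(1-a)+a=s$. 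Your route buys an explicit closed form for $\lambda^*$ that the paper never states; the paper's decomposition buys freedom from differentiability and interior-minimum concerns and is reused verbatim in the proofs of Theorem~\ref{thm:transform_bias_variance}. Two small cautions on your version: you should verify that the stationary point is feasible, i.e.\ $p_{\lambda^*}\in\Delta_{\mathcal{Y}}$ (it is, since $p_{\lambda^*}(y_i|\bm{x}_i)=s\ge 0$ and $p_{\lambda^*}(y|\bm{x}_i)=\frac{1-s}{1-a}\,p_{\mathrm{A}}(y|\bm{x}_i)\ge 0$ for $y\neq y_i$, and these sum to one); and your remark that the degenerate case $a=1$ is ``trivial'' is not quite right --- there the whole family collapses to $p_S$, so $p_{\mathrm{opt}}(y_i|\bm{x}_i)=1$ and the claimed identity fails unless $p_*(y_i|\bm{x}_i)=1$. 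That case must be excluded (the KL objective is then typically infinite), and the paper silently excludes it as well, so this is a shared rather than a new gap.
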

Equation~\eqref{eq:equal_lam_add} shows that the class proportions for all classes other than $y_i$ are always matched between $p_{\mathrm{A}}(y|\bm{x}_i)$ and $p_{\lambda}(y| \bm{x}_i)$.
Meanwhile, Eq.~\eqref{eq:equal_star_opt} demonstrates that, by selecting the optimal mixing coefficient, the probability assigned to $y_i$ in $p_{\lambda}(y|\bm{x}_i)$ can be exactly aligned with that of $p_{*}(y|\bm{x}_i)$.

Using Lemma~\ref{lem:dist_equations}, we obtain the following theorem, whose proof is provided in Appendix~\ref{apdx_subsec:proof_thm:transform_bias_variance}.
\begin{theorem}
    For any $i \in [n]$ and any $\lambda$ such that $p_{\lambda}(y|\bm{x}_i) \in \Delta_{\mathcal{Y}}$, the following holds:
    \begin{align}
    \begin{split}
        \mathrm{Bias} 
        &= (1 -p_*(y_i | \bm{x}_i)) D_{\mathrm{KL}}(p_{*, \neq y_i}(Y|\bm{x}_i) || p_{\mathrm{A}, \neq y_i}(Y|\bm{x}_i)),
    \end{split}
    \label{eq:bias_representation}
    \end{align}
    \begin{align}
    \begin{split}
        \mathrm{Variance}
        & \textstyle =p_{*}(y_i|\bm{x}_i) \log \frac{p_{*}(y_i|\bm{x}_i)}{p_{\lambda}(y_i|\bm{x}_i)}  
            + (1- p_{*}(y_i|\bm{x}_i)) \log \frac{1 - p_{*}(y_i|\bm{x}_i)}{1- p_{\lambda}(y_i|\bm{x}_i)},
    \end{split}
    \label{eq:variance_representation}
    \end{align}   

    where $p_{*, \neq y_i}(y|\bm{x}_i) := \begin{cases} \frac{p_*(y|\bm{x}_i)}{p_*(Y\neq y_i|\bm{x}_i)} & \mathrm{if } y \neq y_i \\ 0 & \mathrm{if } y = y_i\end{cases}$ and $p_{\mathrm{A}, \neq y_i}(y|\bm{x}_i) := \begin{cases} \frac{p_{\mathrm{A}}(y|\bm{x}_i)}{p_{\mathrm{A}}(Y\neq y_i|\bm{x}_i)} & \mathrm{if } y \neq y_i \\ 0 & \mathrm{if } y = y_i\end{cases}$.
\label{thm:transform_bias_variance}
\end{theorem}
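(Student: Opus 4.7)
My approach is to expand both KL divergences as sums over $\mathcal{Y}$, split each sum into the $y=y_i$ term and the remaining sum over $y \in \mathcal{Y}\setminus\{y_i\}$, and then repeatedly apply the two identities from Lemma~\ref{lem:dist_equations} to collapse the non-hard-labeled terms into the claimed compact forms. Throughout, I will use the shorthand $p_*(Y\neq y_i|\bm{x}_i)=1-p_*(y_i|\bm{x}_i)$ and similarly for $p_{\mathrm{opt}}$, $p_\lambda$, and will exploit the fact that $p_{\mathrm{opt}}(y|\bm{x}_i)$ is itself of the mixture form, so Eq.~\eqref{eq:equal_lam_add} applies to it with $\lambda=\lambda^*$.

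\textbf{Step 1 (Bias).} I would first write
\[
\mathrm{Bias} = \sum_{y\in\mathcal{Y}} p_*(y|\bm{x}_i)\log\frac{p_*(y|\bm{x}_i)}{p_{\mathrm{opt}}(y|\bm{x}_i)}.
\]
The $y=y_i$ summand vanishes by Eq.~\eqref{eq:equal_star_opt}, since $p_*(y_i|\bm{x}_i)=p_{\mathrm{opt}}(y_i|\bm{x}_i)$ makes the log equal to zero. For $y\neq y_i$, applying Eq.~\eqref{eq:equal_lam_add} to $p_{\mathrm{opt}}$ gives $p_{\mathrm{opt}}(y|\bm{x}_i) = \frac{p_{\mathrm{opt}}(Y\neq y_i|\bm{x}_i)}{p_{\mathrm{A}}(Y\neq y_i|\bm{x}_i)}\, p_{\mathrm{A}}(y|\bm{x}_i)$, and combining this with Eq.~\eqref{eq:equal_star_opt} yields $p_{\mathrm{opt}}(Y\neq y_i|\bm{x}_i) = 1-p_*(y_i|\bm{x}_i) = p_*(Y\neq y_i|\bm{x}_i)$. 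Substituting the definitions of $p_{*,\neq y_i}$ and $p_{\mathrm{A},\neq y_i}$ (so that $p_*(y|\bm{x}_i) = p_*(Y\neq y_i|\bm{x}_i)\,p_{*,\neq y_i}(y|\bm{x}_i)$ and analogously for $p_{\mathrm{A}}$), the normalizing factors cancel inside the log, and I can pull the prefactor $p_*(Y\neq y_i|\bm{x}_i)=1-p_*(y_i|\bm{x}_i)$ out of the sum to obtain Eq.~\eqref{eq:bias_representation}.

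\textbf{Step 2 (Variance).} I would similarly expand
\[
\mathrm{Variance} = \sum_{y\in\mathcal{Y}} p_{\mathrm{opt}}(y|\bm{x}_i)\log\frac{p_{\mathrm{opt}}(y|\bm{x}_i)}{p_\lambda(y|\bm{x}_i)}.
\]
The $y=y_i$ term, using Eq.~\eqref{eq:equal_star_opt}, is exactly $p_*(y_i|\bm{x}_i)\log\bigl(p_*(y_i|\bm{x}_i)/p_\lambda(y_i|\bm{x}_i)\bigr)$. For $y\neq y_i$, the key observation is that applying Eq.~\eqref{eq:equal_lam_add} to both $p_{\mathrm{opt}}$ and $p_\lambda$ gives
\[
\frac{p_{\mathrm{opt}}(y|\bm{x}_i)}{p_\lambda(y|\bm{x}_i)} = \frac{p_{\mathrm{opt}}(Y\neq y_i|\bm{x}_i)}{p_\lambda(Y\neq y_i|\bm{x}_i)} = \frac{1-p_*(y_i|\bm{x}_i)}{1-p_\lambda(y_i|\bm{x}_i)},
\]
which is constant in $y$. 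Therefore the log pulls outside the sum, and the remaining factor $\sum_{y\neq y_i} p_{\mathrm{opt}}(y|\bm{x}_i) = 1-p_*(y_i|\bm{x}_i)$ produces the second summand in Eq.~\eqref{eq:variance_representation}.

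\textbf{Expected obstacle.} There is no deep technical step; the whole argument reduces to careful bookkeeping once one notices that Eq.~\eqref{eq:equal_lam_add} forces $p_{\mathrm{opt}}$ and $p_\lambda$ to share the normalized profile of $p_{\mathrm{A}}$ on $\mathcal{Y}\setminus\{y_i\}$, so their ratio on those classes collapses to a single scalar determined by the mass at $y_i$. The main thing to be careful about is consistently invoking Eq.~\eqref{eq:equal_star_opt} to rewrite $p_{\mathrm{opt}}(y_i|\bm{x}_i)$ as $p_*(y_i|\bm{x}_i)$ in every appearance, including in $p_{\mathrm{opt}}(Y\neq y_i|\bm{x}_i)$, so that the final expressions involve only $p_*$, $p_{\mathrm{A}}$, and $p_\lambda$ as claimed.
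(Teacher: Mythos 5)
Your proposal is correct and follows essentially the same route as the paper's proof: both expand each KL divergence, kill or isolate the $y=y_i$ term via Eq.~\eqref{eq:equal_star_opt}, and use Eq.~\eqref{eq:equal_lam_add} (applied to $p_{\mathrm{opt}}$ and $p_{\lambda}$, which share the normalized profile of $p_{\mathrm{A}}$ on $\mathcal{Y}\setminus\{y_i\}$) to collapse the remaining sums into the claimed forms. The only cosmetic difference is that for the variance term the paper splits the log ratio into a normalized-profile part (which vanishes) plus a total-mass part, whereas you directly observe that the ratio $p_{\mathrm{opt}}(y|\bm{x}_i)/p_{\lambda}(y|\bm{x}_i)$ is constant on the non-hard-labeled classes; these are the same computation.
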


Theorem~\ref{thm:transform_bias_variance} provides a concrete interpretation of the quantities measured by the bias and variance terms.
Specifically, Eq.~\eqref{eq:bias_representation} shows that the bias term quantifies how accurately $p_{\mathrm{A}}(y|\bm{x}_i)$ represents the class proportions excluding the hard labeled class $y_i$.
Meanwhile, Eq.~\eqref{eq:variance_representation} indicates that the variance term measures how accurately $p_{\lambda}(y|\bm{x}_i)$ estimates the true probability $p_*(y_i|\bm{x}_i)$ of $y_i$.

Theorem~\ref{thm:transform_bias_variance} first establishes that $p_{\mathrm{A}}(y|\bm{x}_i)$ and $\lambda(\bm{x}_i)$ independently contribute to refining $p_{\lambda}(y|\bm{x}_i)$.
Equation~\eqref{eq:bias_representation} makes it clear that reducing the bias term depends solely on $p_{\mathrm{A}}(y|\bm{x}_i)$ and is independent of $\lambda(\bm{x}_i)$.
Conversely, Eq.~\eqref{eq:variance_representation} shows that reducing the variance term requires setting $p_{\lambda}(y_i|\bm{x}_i)$ close to $p_{*}(y_i|\bm{x}_i)$.
According to Eq.~\eqref{eq:equal_star_opt} in Lemma~\ref{lem:dist_equations}, this alignment can be achieved solely by adjusting $\lambda(\bm{x}_i)$, resulting in $p_{\lambda}(y_i|\bm{x}_i) = p_{*}(y_i|\bm{x}_i)$ and reducing the variance term to zero.
Therefore, only $\lambda(\bm{x}_i)$ has a substantive effect on reducing the variance term.
Taken together, $p_{\mathrm{A}}(y|\bm{x}_i)$ and $\lambda(\bm{x}_i)$ independently contribute to reducing the bias and variance terms, respectively.

Furthermore, Theorem~\ref{thm:transform_bias_variance} reveals two key insights: first, that only the information about non-hard-labeled classes is required as $p_{\mathrm{A}}(y|\bm{x}_i)$, and second, that optimizing $\lambda(\bm{x}_i)$ is equivalent to estimating the true probability of the hard label.
As shown earlier, improving $p_{\mathrm{A}}(y|\bm{x}_i)$ reduces only the bias term and has no effect on the variance term.
Combined with Eq.~\eqref{eq:bias_representation}, this implies that $p_{\mathrm{A}}(y|\bm{x}_i)$ does not need to contain any information such as confidence scores about the hard labeled class $y_i$.
Instead, it only needs to represent the class proportions over the non-hard-labeled classes accurately.
This result establishes a theoretically grounded requirement for additional supervision and offers a concrete guideline for practical implementation.
Furthermore, as discussed above, since $\lambda(\bm{x}_i)$ should be chosen to satisfy $p_{\lambda}(y_i|\bm{x}_i) = p_{*}(y_i|\bm{x}_i)$, optimizing $\lambda(\bm{x}_i)$ is equivalent to estimating the true probability for $y_i$.
This equivalence indicates that the optimization of $\lambda(\bm{x}_i)$ does not require any information from $p_{\mathrm{A}}(y|\bm{x}_i)$ and can be based solely on the probability of $y_i$. 
This provides a useful guideline for choosing $\lambda(\bm{x}_i)$ when $p_{*}(y_i|\bm{x}_i)$ is not available.

This result can also be interpreted from a geometric perspective.
Figure~\ref{fig:geometric_interpretation} depicts the probability simplex where $\mathcal{Y}=\{y_1,y_2,y_3\}$ and visually represents the findings described above.
Using this figure, our framework can be interpreted as follows:
First, the specification of $p_{\mathrm{A}}(y|\bm{x}_i)$ constrains $\{p_{\lambda}(y|\bm{x}_i) | \lambda(\bm{x}_i) \in \mathbb{R} \land p_{\lambda}(y|\bm{x}_i) \in \Delta_{\mathcal{Y}} \}$ to lie along a single blue line.
Then, choosing $\lambda(\bm{x}_i)$ corresponds to selecting the point on this line that is closest to $p_{*}(y|\bm{x}_i)$.
In other words, on the simplex, $p_{\mathrm{A}}(y|\bm{x}_i)$ specifies the direction in which $p_{S}(y|\bm{x}_i)$ moves toward $p_{*}(y|\bm{x}_i)$, while $\lambda(\bm{x}_i)$ determines the magnitude of that movement.

\begin{figure*}[t]
    \centering
    \includegraphics[width=0.90\linewidth]{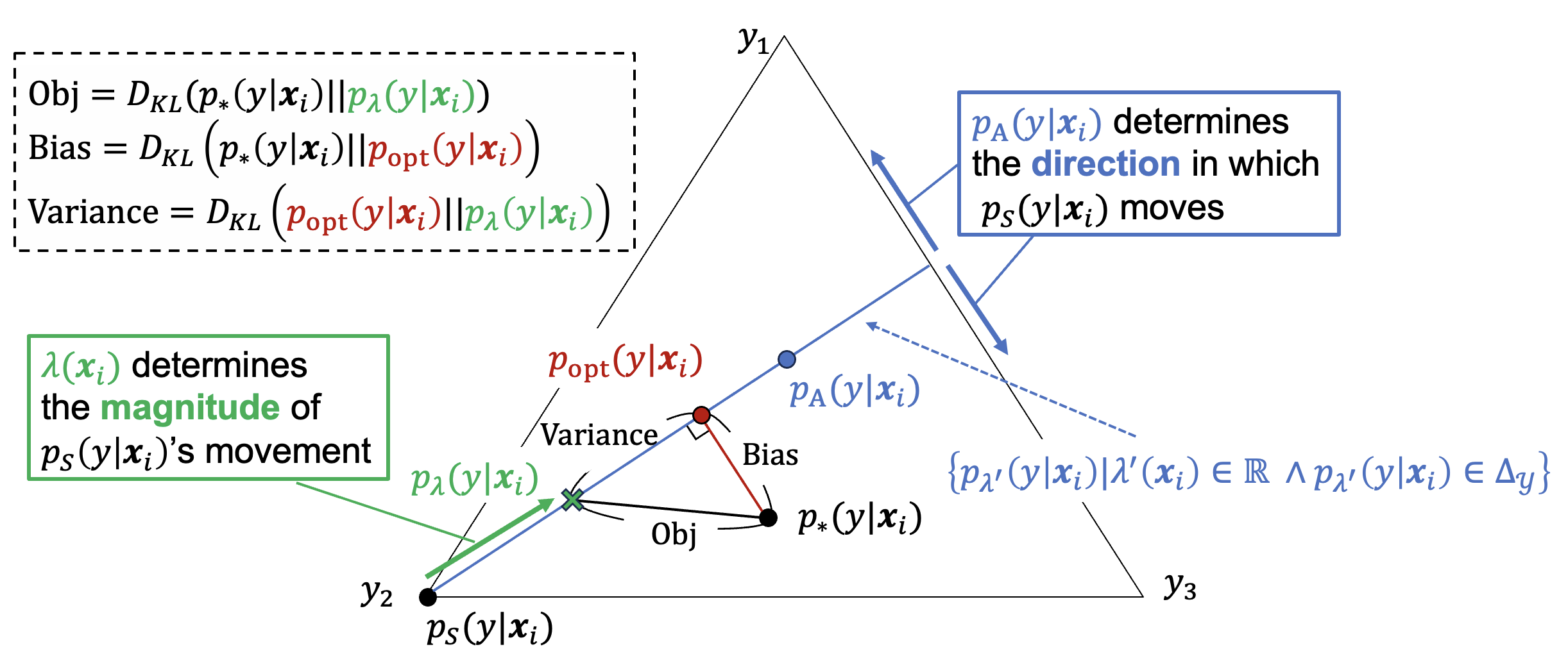}
    \caption{Geometric interpretation for our framework.}
    \label{fig:geometric_interpretation}
\end{figure*}

\subsection{Generalization Error Analysis}
\label{subsec:error_analysis}

In this section, we theoretically analyze how refinements in $p_{\lambda}(y|\bm{x})$, that is, improvements in $p_{\mathrm{A}}(y|\bm{x})$ and $\lambda(\bm{x})$, affect the generalization performance of a classification model trained using $p_{\lambda}(y|\bm{x})$.

We consider training a classification model using $p_{\lambda}(y|\bm{x})$, obtained via the framework in Section~\ref{sec:formulation}, as supervision under the ERM.
First, given a dataset $S = \{(\bm{x}_i, y_i)\}_{i=1}^n$ where each sample is drawn independently from $p_*(\bm{x},y)$, we construct a new dataset $\widehat{S} := \{(\bm{x}_i, p_{\lambda}(y|\bm{x}_i))\}_{i=1}^n$ using the our framework.
We then define the empirical risk on $\widehat{S}$ with respect to a loss function $l$ as follows:
\begin{align}
    \textstyle \widehat{R}_{l, \widehat{S}}(f) := \frac{1}{n}\sum^n_{i=1}\mathbb{E}_{p_{\lambda}(y|\bm{x}_i)}[l(f(\bm{x}_i), Y)],
\end{align}

and obtain the empirical risk minimizer $f_{\widehat{S}} := \arg\min_{f \in \mathcal{F}}\widehat{R}_{l, \widehat{S}}(f)$.
The objective is to make $f_{\widehat{S}}$ approximate the true risk minimizer $f_{\mathcal{F}} := \arg \min_{f \in \mathcal{F}} R_{l}(f)$. 

Our analysis aims to elucidate the relationship between the quality of $p_{\lambda}(y|\bm{x})$ and $R_l(f_{\widehat{S}})$.
Existing analyses of generalization error in training with soft labels yield error bounds that are independent of label quality, and thus cannot capture the relationship we seek to investigate \cite{wang2019theoretical}.
This limitation stems from the fact that error bounds derived from conventional statistical learning theory \cite{mohri2018foundations,shalev2014understanding} inadequately account for label characteristics.
A label-quality-aware analysis requires deriving error bounds that explicitly depend on the quality of the soft labels used for training.
To this end, we establish the following inequality, which relates the difference in risk under different label distributions to the divergence between those distributions.
This result serves as a fundamental tool for deriving error bounds that are sensitive to label quality.
The proof is provided in Appendix~\ref{apdx_subsec:proof_lem:risk_div_inequality}.
\begin{lemma}
    For any probability distribution $p(\bm{x})$ over $\mathcal{X}$, any conditional distributions $p(y|\bm{x})$ and $q(y|\bm{x})$ over $\mathcal{Y}$, any bounded loss function $l \le M_l$, and any measurable $f: \mathcal{X} \rightarrow \mathcal{Y}$, the difference between 
    $R_{l, p}(f):= \mathbb{E}_{p(\bm{x})p(y|\bm{x})}[l(f(\bm{X}),Y)]$ and $R_{l, q}(f):= \mathbb{E}_{p(\bm{x})q(y|\bm{x})}[l(f(\bm{X}),Y)]$ 
    is upper bounded as follows:
    \begin{align}
        \textstyle |\sqrt{R_{l, p}(f)} - \sqrt{R_{l,q}(f)}|
         \le \big\{2 M_l D_{\mathrm{KL}}(p(Y|\bm{X}) || q(Y|\bm{X})) \big\}^{\frac{1}{2}}.
    \label{eq:risk_div_inequality}
    \end{align}
\label{lem:risk_div_inequality}
\end{lemma}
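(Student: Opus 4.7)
The plan is to express both $\sqrt{R_{l,p}(f)}$ and $\sqrt{R_{l,q}(f)}$ as $L^2$-norms of a common integrand, apply the reverse triangle inequality, and then reduce the resulting Hellinger-type quantity to the conditional KL divergence. Concretely, I would set $h(\bm{x},y) := \sqrt{l(f(\bm{x}),y)}$, which is well-defined since $l \ge 0$ and satisfies $h^2 \le M_l$. Treating the expectations as integrals against a common dominating measure on $\mathcal{X}\times\mathcal{Y}$, we may write
\[
\sqrt{R_{l,p}(f)} = \bigl\| h(\bm{x},y)\sqrt{p(\bm{x})\,p(y|\bm{x})} \bigr\|_2, \qquad \sqrt{R_{l,q}(f)} = \bigl\| h(\bm{x},y)\sqrt{p(\bm{x})\,q(y|\bm{x})} \bigr\|_2,
\]
where $\|\cdot\|_2$ denotes the $L^2$-norm against that measure. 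The reverse triangle inequality then yields
\[
|\sqrt{R_{l,p}(f)} - \sqrt{R_{l,q}(f)}| \le \bigl\| h(\bm{x},y)\sqrt{p(\bm{x})}\bigl(\sqrt{p(y|\bm{x})} - \sqrt{q(y|\bm{x})}\bigr) \bigr\|_2.
\]

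Next I would square both sides, use $h^2 \le M_l$, and apply Fubini to isolate the inner integral over $y$:
\[
|\sqrt{R_{l,p}(f)} - \sqrt{R_{l,q}(f)}|^2 \le M_l \int p(\bm{x}) \int \bigl(\sqrt{p(y|\bm{x})} - \sqrt{q(y|\bm{x})}\bigr)^2 dy\, d\bm{x}.
\]
For each fixed $\bm{x}$ the inner integral is controlled by the pointwise KL divergence via the Hellinger-KL inequality $\int(\sqrt{p}-\sqrt{q})^2\,dy \le D_{\mathrm{KL}}(p\|q)$, which follows by applying Jensen to the elementary bound $\log t \le 2(\sqrt{t}-1)$ with $t = q/p$: this gives $D_{\mathrm{KL}}(p\|q) \ge 2 - 2\int \sqrt{pq}\,dy = \int(\sqrt{p}-\sqrt{q})^2\,dy$. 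Integrating this pointwise bound against $p(\bm{x})$ reassembles the conditional KL divergence $D_{\mathrm{KL}}(p(Y|\bm{X})\|q(Y|\bm{X}))$, and taking a square root yields the target inequality (in fact with the sharper constant $\sqrt{M_l\, D_{\mathrm{KL}}}$, from which the stated $\sqrt{2 M_l\, D_{\mathrm{KL}}}$ follows a fortiori).

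The step requiring the most care is the Hellinger-KL inequality; everything else is reverse triangle inequality plus bookkeeping. Measure-theoretic subtleties are benign: when $D_{\mathrm{KL}} = \infty$ the bound is vacuous, and otherwise $p(\cdot|\bm{x}) \ll q(\cdot|\bm{x})$ for $p(\bm{x})$-a.e.\ $\bm{x}$, which justifies the ratio $q/p$ used in the Jensen step. An alternative route would be to expand $R_{l,p}(f) - R_{l,q}(f)$ and combine Cauchy-Schwarz with Pinsker's inequality, but the Hellinger path is the most direct way to obtain the symmetric square-root form that appears on both sides of the stated inequality.
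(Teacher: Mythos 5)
Your proof is correct, and it takes a genuinely different (and slightly cleaner) route than the paper's. The paper first bounds the un-square-rooted difference $|R_{l,p}(f)-R_{l,q}(f)|$ by writing $|p-q|=(\sqrt{p}+\sqrt{q})|\sqrt{p}-\sqrt{q}|$, applying Cauchy--Schwarz to each of the two resulting pieces, invoking the bound $\mathbb{E}[l^2]\le 2M_l\,\mathbb{E}[l]$ (justified there via the $2M_l$-Lipschitzness of $x\mapsto x^2$ on $[0,M_l]$) to extract the factor $\sqrt{R_{l,p}(f)}+\sqrt{R_{l,q}(f)}$, and finally dividing by that factor; the Hellinger--KL comparison is applied at the end exactly as in your argument. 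You instead view $\sqrt{R_{l,p}(f)}$ and $\sqrt{R_{l,q}(f)}$ directly as $L^2$-norms of $\sqrt{l}\cdot\sqrt{p(\bm{x})p(y|\bm{x})}$ and $\sqrt{l}\cdot\sqrt{p(\bm{x})q(y|\bm{x})}$ and apply the reverse triangle (Minkowski) inequality once. This buys two things: it avoids the final division by $\sqrt{R_{l,p}(f)}+\sqrt{R_{l,q}(f)}$ (which requires a separate, if trivial, remark when both risks vanish), and it yields the sharper constant $\{M_l D_{\mathrm{KL}}\}^{1/2}$ rather than $\{2M_l D_{\mathrm{KL}}\}^{1/2}$, since you use $l\le M_l$ directly instead of the Lipschitz bound $l^2\le 2M_l\,l$; the stated inequality then follows a fortiori, as you note. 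Two cosmetic points: since $\mathcal{Y}$ is discrete here, your inner integrals over $y$ are sums, and the Hellinger--KL step $\sum_y(\sqrt{p}-\sqrt{q})^2\le D_{\mathrm{KL}}(p\|q)$ needs only the pointwise bound $\log t\le 2(\sqrt{t}-1)$ integrated against $p$ --- no appeal to Jensen is actually required --- but the inequality you use is the correct one for the paper's (unnormalized) squared Hellinger distance.
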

By setting $p(\bm{x}) = p_S(\bm{x})$, $p(y|\bm{x}) = p_{*}(y|\bm{x})$, and $q(y|\bm{x}) = p_{\lambda}(y|\bm{x})$ in Lemma~\ref{lem:risk_div_inequality}, we can relate the difference in empirical risks between using the true soft label $p_{*}(y|\bm{x})$ and the constructed soft label $p_{\lambda}(y|\bm{x})$ to the divergence between the corresponding distributions.
\begin{corollary}
    for any measurable $f \in \mathcal{F}$ and bounded $l \le M_l$, the following holds:
    \begin{align}
    \textstyle \Big|\sqrt{\widehat{R}_{l,S^*}(f)} - \sqrt{\widehat{R}_{l,\widehat{S}}(f)} \Big|
        \le  \big\{2 M_l \frac{1}{n}\sum^n_{i=1}  D_{\mathrm{KL}}(p(Y|\bm{x}_i) || p_{\lambda}(Y|\bm{x}_i)) \big\}^{\frac{1}{2}},    
    \end{align}
    where $S_*:= \{(\bm{x}_i, p_*(y|\bm{x}_i))\}^n_{i=1}$, $\widehat{R}_{l,S^*}(f):=\frac{1}{n}\sum^n_{i=1}\mathbb{E}_{p_*(y|\bm{x}_i)}[l(f(\bm{x}_i),Y)]$.
\label{cor:diff_bound_best_pred}
\end{corollary}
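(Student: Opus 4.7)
The plan is to apply Lemma \ref{lem:risk_div_inequality} directly, instantiating it at $p(\bm{x}) = p_S(\bm{x}) = \frac{1}{n}\sum_{i=1}^n \delta_{\bm{x}_i}$, $p(y|\bm{x}) = p_*(y|\bm{x})$, and $q(y|\bm{x}) = p_{\lambda}(y|\bm{x})$. Because the corollary is exactly this specialization, the proof reduces to bookkeeping: matching each term produced by the lemma with the corresponding term in the corollary's statement.

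First, I would rewrite the risks under the empirical measure. Since $p_S$ is discrete and puts mass $1/n$ on each $\bm{x}_i$, we have
\begin{align*}
R_{l,p}(f) = \mathbb{E}_{p_S(\bm{x})\,p_*(y|\bm{x})}[l(f(\bm{X}),Y)] = \tfrac{1}{n}\sum_{i=1}^n \mathbb{E}_{p_*(y|\bm{x}_i)}[l(f(\bm{x}_i),Y)] = \widehat{R}_{l,S^*}(f),
\end{align*}
and an identical computation with $p_{\lambda}(y|\bm{x})$ in place of $p_*(y|\bm{x})$ gives $R_{l,q}(f) = \widehat{R}_{l,\widehat{S}}(f)$. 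Next, I would expand the conditional KL divergence appearing in Lemma \ref{lem:risk_div_inequality} using the standard convention $D_{\mathrm{KL}}(p(Y|\bm{X}) \,\|\, q(Y|\bm{X})) = \mathbb{E}_{p(\bm{x})}[D_{\mathrm{KL}}(p(Y|\bm{X}=\bm{x}) \,\|\, q(Y|\bm{X}=\bm{x}))]$; taking the expectation under $p_S$ turns this into $\frac{1}{n}\sum_{i=1}^n D_{\mathrm{KL}}(p_*(Y|\bm{x}_i) \,\|\, p_{\lambda}(Y|\bm{x}_i))$, which is precisely the quantity inside the square root in the corollary.

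Substituting these three identifications into the bound of Lemma \ref{lem:risk_div_inequality} immediately yields the claimed inequality. There is essentially no technical obstacle, since all the heavy lifting has been carried out in the proof of Lemma \ref{lem:risk_div_inequality}; the only point requiring care is fixing the definition of the conditional KL divergence so that its specialization to the empirical marginal $p_S$ produces the empirical average of per-sample KL divergences. Once that convention is stated, the corollary follows from the lemma in a single line.
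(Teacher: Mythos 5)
Your proposal is correct and matches the paper's own derivation exactly: the paper obtains this corollary by setting $p(\bm{x}) = p_S(\bm{x})$, $p(y|\bm{x}) = p_*(y|\bm{x})$, and $q(y|\bm{x}) = p_{\lambda}(y|\bm{x})$ in Lemma~\ref{lem:risk_div_inequality}, so that the two risks become the empirical risks and the conditional KL divergence becomes the empirical average of per-sample KL divergences. Your explicit bookkeeping of these identifications (including the convention that the conditional KL is an expectation over $p(\bm{x})$) is exactly the intended one-line specialization.
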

Using this result, we derive the following generalization error bound for $f_{\widehat{S}}$ which is directly influenced by the objective function defined in Eq.~\eqref{eq:mixture_optimization} that quantifies the quality of $p_{\lambda}(y|\bm{x})$.
\begin{theorem}
For any $L_l$-Lipschitz and bounded $l \le M_l$ and any $\delta \in (0,1)$, the following inequality holds with probability at least $1-\delta$:
\begin{align}
\begin{split}
    R_{l}(f_{\widehat{S}}) 
    &\textstyle \le \Big( 4 L_l \mathfrak{R}_{n}(\mathcal{F}) + 2M_l\sqrt{\frac{\log(1/\delta)}{2n}} \Big) \\
    &\textstyle + 2\sqrt{2M_l D(p_{\mathrm{A}}, \lambda)}\Big\{ 2 L_l \mathfrak{R}_{n}(\mathcal{F}) + M_l\sqrt{\frac{\log(1/\delta)}{2n}} + R_l(f_{\mathcal{F}}) +  M_l \sqrt{2 D(p_{\mathrm{A}}, \lambda)} \Big\}^{\frac{1}{2}} \\ 
    &+ M_l \Big( 2D(p_{\mathrm{A}}, \lambda) + \sqrt{2D(p_{\mathrm{A}}, \lambda)} \Big) + R_l(f_{\mathcal{F}}),
\end{split}
\label{eq:error_bound}
\end{align}
where $D(p_{\mathrm{A}}, \lambda):= \frac{1}{n}\sum^n_{i=1} D_{\mathrm{KL}}(p_*(Y|\bm{x}_i) || p_{\lambda}(Y|\bm{x}_i))$.
\label{thm:error_bound}
\end{theorem}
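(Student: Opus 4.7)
The approach is a telescoping decomposition of the excess risk $R_l(f_{\widehat{S}}) - R_l(f_{\mathcal{F}})$ through an ``oracle'' empirical risk
\begin{align*}
\widehat{R}_{l,S_*}(f) := \frac{1}{n}\sum_{i=1}^{n}\mathbb{E}_{p_*(y|\bm{x}_i)}[l(f(\bm{x}_i),Y)]
\end{align*}
evaluated with the true conditional label distributions on the observed inputs. I would telescope through $\widehat{R}_{l,S_*}(f_{\widehat{S}})$, $\widehat{R}_{l,\widehat{S}}(f_{\widehat{S}})$, $\widehat{R}_{l,\widehat{S}}(f_{\mathcal{F}})$, and $\widehat{R}_{l,S_*}(f_{\mathcal{F}})$, so that the excess risk splits into (i) two sampling-error gaps between $R_l$ and $\widehat{R}_{l,S_*}$, (ii) two label-shift gaps between $\widehat{R}_{l,S_*}$ and $\widehat{R}_{l,\widehat{S}}$, and (iii) the ERM-suboptimality term $\widehat{R}_{l,\widehat{S}}(f_{\widehat{S}}) - \widehat{R}_{l,\widehat{S}}(f_{\mathcal{F}}) \le 0$.

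For the sampling-error gaps I would apply Talagrand's contraction lemma to strip off the $L_l$-Lipschitz loss and McDiarmid's inequality to pass from expectation to a high-probability bound, working with the function class $\{(\bm{x},y)\mapsto l(f(\bm{x}),y):f\in\mathcal{F}\}$. Since the inputs $\bm{x}_i$ are i.i.d.\ from $p_*(\bm{x})$, this yields the standard Rademacher bound
\begin{align*}
\sup_{f\in\mathcal{F}}\bigl|R_l(f)-\widehat{R}_{l,S_*}(f)\bigr| \le 2L_l\mathfrak{R}_n(\mathcal{F}) + M_l\sqrt{\tfrac{\log(1/\delta)}{2n}}
\end{align*}
with probability at least $1-\delta$. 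Applying this once at $f_{\widehat{S}}$ and once at $f_{\mathcal{F}}$ accounts for the leading $4L_l\mathfrak{R}_n(\mathcal{F}) + 2M_l\sqrt{\log(1/\delta)/(2n)}$ term in Eq.~\eqref{eq:error_bound}.

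For the two label-shift gaps I would invoke Corollary~\ref{cor:diff_bound_best_pred}. Writing $\alpha := \sqrt{2M_l D(p_{\mathrm{A}},\lambda)}$, the corollary gives $\bigl|\sqrt{\widehat{R}_{l,S_*}(f)} - \sqrt{\widehat{R}_{l,\widehat{S}}(f)}\bigr| \le \alpha$ uniformly in $f$, hence by squaring $\widehat{R}_{l,S_*}(f) \le \widehat{R}_{l,\widehat{S}}(f) + 2\alpha\sqrt{\widehat{R}_{l,\widehat{S}}(f)} + \alpha^2$ and its symmetric reverse. I would apply the sqrt-form bound to the gap at $f_{\widehat{S}}$, use ERM monotonicity $\sqrt{\widehat{R}_{l,\widehat{S}}(f_{\widehat{S}})} \le \sqrt{\widehat{R}_{l,\widehat{S}}(f_{\mathcal{F}})}$, apply Corollary~\ref{cor:diff_bound_best_pred} again to replace $\sqrt{\widehat{R}_{l,\widehat{S}}(f_{\mathcal{F}})}$ by $\sqrt{\widehat{R}_{l,S_*}(f_{\mathcal{F}})}+\alpha$, and close with the uniform-convergence bound $\widehat{R}_{l,S_*}(f_{\mathcal{F}}) \le R_l(f_{\mathcal{F}}) + U$ where $U := 2L_l\mathfrak{R}_n(\mathcal{F}) + M_l\sqrt{\log(1/\delta)/(2n)}$. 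The surviving square-root argument then reduces to $U + R_l(f_{\mathcal{F}}) + M_l\sqrt{2D(p_{\mathrm{A}},\lambda)}$, matching the $\{\cdot\}^{1/2}$ factor of Eq.~\eqref{eq:error_bound}. The leftover $\alpha^2$ residuals, together with one crude estimate $\sqrt{\widehat{R}_{l,\widehat{S}}(f)}\le\sqrt{M_l}$ used to absorb an $\alpha\sqrt{M_l}$ remainder into $M_l\sqrt{2D(p_{\mathrm{A}},\lambda)}$, produce the $M_l\bigl(2D(p_{\mathrm{A}},\lambda) + \sqrt{2D(p_{\mathrm{A}},\lambda)}\bigr)$ summand.

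The main obstacle is the bookkeeping of these repeated square-root manipulations. Each invocation of Corollary~\ref{cor:diff_bound_best_pred} introduces both a $2\alpha\sqrt{\cdot}$ factor and an $\alpha^2$ remainder, and a naive double-square chaining yields coefficient $4\alpha$ in front of the surviving square root; keeping it at $2\alpha$ as in Eq.~\eqref{eq:error_bound} requires performing the ERM substitution \emph{inside} the square root before resquaring, and using $\sqrt{a+b}\le\sqrt{a}+\sqrt{b}$ together with the crude $\sqrt{M_l}$ bound at exactly the step where the $\alpha\sqrt{\widehat{R}_{l,S_*}(f_{\mathcal{F}})}$ remainder must be converted into an $M_l\sqrt{2D(p_{\mathrm{A}},\lambda)}$ contribution rather than inflating $R_l(f_{\mathcal{F}})$. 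Once the constants align, a simple rearrangement of the telescoped inequality yields Eq.~\eqref{eq:error_bound}.
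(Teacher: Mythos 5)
Your plan reproduces the paper's architecture almost exactly: the same telescoping through the oracle empirical risk $\widehat{R}_{l,S_*}$, the same uniform Rademacher/Talagrand bound supplying the $4L_l\mathfrak{R}_n(\mathcal{F})+2M_l\sqrt{\log(1/\delta)/(2n)}$ term, the same ERM monotonicity step, and the same use of the KL-based risk-difference corollary at $f_{\widehat{S}}$. The genuine gap is in the label-shift gap at $f_{\mathcal{F}}$ and the constant bookkeeping you yourself flag as the main obstacle. The paper does \emph{not} derive the $M_l\sqrt{2D(p_{\mathrm{A}},\lambda)}$ contributions from Corollary~\ref{cor:diff_bound_best_pred}: at (essentially) $f_{\mathcal{F}}$ it bounds $|\widehat{R}_{l,S_*}(f)-\widehat{R}_{l,\widehat{S}}(f)|$ directly by $M_l$ times the averaged $L^1$ distance between $p_*(\cdot|\bm{x}_i)$ and $p_{\lambda}(\cdot|\bm{x}_i)$, then applies Pinsker and Jensen to get exactly $M_l\sqrt{2D(p_{\mathrm{A}},\lambda)}$. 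That first-order bound is not recoverable from the square-root corollary with the same constant: from $|\sqrt{a}-\sqrt{b}|\le\alpha$ with $\alpha:=\sqrt{2M_lD}$, the best available is $|a-b|\le(\sqrt{a}+\sqrt{b})\,\alpha\le 2\sqrt{M_l}\,\alpha=2M_l\sqrt{2D}$, and your squared form adds a further $\alpha^2=2M_lD$ — a factor of two worse on the leading term plus an extra quadratic residual.

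Concretely, the step that fails is your claim that, after replacing $\sqrt{\widehat{R}_{l,\widehat{S}}(f_{\mathcal{F}})}$ by $\sqrt{\widehat{R}_{l,S_*}(f_{\mathcal{F}})}+\alpha$ and then using $\widehat{R}_{l,S_*}(f_{\mathcal{F}})\le R_l(f_{\mathcal{F}})+U$, ``the surviving square-root argument reduces to $U+R_l(f_{\mathcal{F}})+M_l\sqrt{2D}$.'' What you actually hold at that point is $\sqrt{U+R_l(f_{\mathcal{F}})}+\alpha$, and matching it to $\{U+R_l(f_{\mathcal{F}})+M_l\sqrt{2D}\}^{1/2}$ would require $2\alpha\sqrt{U+R_l(f_{\mathcal{F}})}+\alpha^2\le M_l\sqrt{2D}$, which fails in general (e.g.\ when $R_l(f_{\mathcal{F}})$ is of order $M_l$). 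Tracing your chain honestly, the two label-shift gaps leave residuals of roughly $8M_lD+2M_l\sqrt{2D}$ beyond the $2\alpha\{\cdot\}^{1/2}$ term, versus the paper's $2M_lD+M_l\sqrt{2D}$; you would therefore prove an inequality of the same form but with strictly larger constants, not Eq.~\eqref{eq:error_bound} itself. The fix is to use the intermediate $L^1$/Pinsker estimate (it is already implicit in the proof of Lemma~\ref{lem:risk_div_inequality}) for the single gap at $f_{\mathcal{F}}$, both standalone and inside the square root, and to reserve the squared form of the corollary for the one gap at $f_{\widehat{S}}$; with that substitution your telescoping closes with the stated constants.
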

The proof is provided in Appendix~\ref{apdx_subsec:proof_thm:error_bound}.
Here, $\mathfrak{R}_n(\mathcal{F})$ denotes the Rademacher complexity of $\mathcal{F}$ with respect to the true distribution $p_*(\bm{x}, y)$, measuring the capacity of the function class $\mathcal{F}$.
For function classes such as kernel ridge regression or multilayer perceptrons, it is known that $\mathfrak{R}_{n}(\mathcal{F})$ scales as $\mathcal{O}_p(1/n^{1/2})$ and converges to $0$ as $n \rightarrow \infty$ \cite{mohri2018foundations,neyshabur2015norm}.
Therefore, in the following discussion, we assume $\mathfrak{R}_n(\mathcal{F}) = \mathcal{O}_p(1/n^{1/2})$ and $\mathfrak{R}_n(\mathcal{F}) \rightarrow 0$ as $n \rightarrow \infty$.

Theorem~\ref{thm:error_bound} provides a detailed characterization of how improvements in the constructed soft label $p_{\lambda}(y|\bm{x})$, that is, improvements in $p_{\mathrm{A}}(y|\bm{x})$ and $\lambda(\bm{x})$, affect the generalization performance of $f_{\widehat{S}}$.
Specifically, the theorem shows that the mean value of the objective function $D(p_{\mathrm{A}}, \lambda)$, which quantifies the quality of $p_{\lambda}(y|\bm{x})$, directly affects the convergence rate of the error bound.
The first and second terms on the right-hand side of Eq.~\eqref{eq:error_bound} scale as $\mathcal{O}_p(1/n^{1/2})$ and $\mathcal{O}_p(1/n^{1/4})$, respectively.
Notably, $D(p_{\mathrm{A}}, \lambda)$ appears as the coefficient of the second term.
Hence, a larger $D(p_{\mathrm{A}}, \lambda)$ causes the slower-converging second term to dominate the bound.
Moreover, the quality of $p_{\lambda}(y|\bm{x})$ influences not only the convergence rate but also the asymptotic value of the error bound.
As $n \rightarrow \infty$, both $\mathfrak{R}_n(\mathcal{F}) \rightarrow 0$ and $\sqrt{\log(1/\delta)/2n} \rightarrow 0$, and thus the error bound in Eq.~\eqref{eq:error_bound} converges to a quantity that depends solely on $D(p_{\mathrm{A}}, \lambda)$ and $R_l(f_{\mathcal{F}})$.
Therefore, the quality of $p_{\lambda}$ also affects the asymptotic value of the error bound.
Taken together, these results demonstrate that refining $p_{\lambda}(y|\bm{x})$ contributes to improving both the convergence rate and the asymptotic value of the error bound for $f_{\widehat{S}}$.
This result provides theoretical justification that the proposed framework can enhance classification performance even with a limited number of training samples.

\section{Experiments}
\label{sec:experiments}

In this section, we experimentally validate our theoretical insights developed in Section~\ref{subsec:interpret_formulation} by designing $p_{\mathrm{A}}(y|\bm{x}_i)$ and evaluating it.
In Section~\ref{subsec:setting_method}, we introduce specific designs of $p_{\mathrm{A}}(y|\bm{x}_i)$ and a simple method that utilizes them.
Section~\ref{subsec:exp_settings_and_result} outlines the experimental setup and presents the evaluation results.

\subsection{Designing Additional Supervision and Introducing Concise Methods}
\label{subsec:setting_method}

In Section~\ref{subsec:interpret_formulation}, we showed that $p_{\mathrm{A}}(y|\bm{x}_i)$ only needs to convey information about the non-hard-labeled classes.
Based on this, we designed two types of $p_{\mathrm{A}}(y|\bm{x}_i)$ that can be easily provided from external sources and are expected to encode meaningful information about non-hard-labeled classes:

1. {\it Top-1 class among Other Classes (T1OC)}: Among the classes excluding $y_i$, we identify the most probable class $y_i'$ and define $p_{\mathrm{A}}(y|\bm{x}_i)$ such that $p_{\mathrm{A}}(y'_i|\bm{x}_i)=1$.

2. {\it Top-2 class among Other Classes (T2OC)}: Among the classes excluding $y_i$, we select the two most probable labels $y_i'$ and $y_i''$, and define $p_{\mathrm{A}}(y|\bm{x}_i)$ such that $p_{\mathrm{A}}(y'_i|\bm{x}_i)=p_{\mathrm{A}}(y''_i|\bm{x}_i)=0.5$.

These two types of additional supervision correspond to the rightmost two examples in Figure~\ref{fig:eg_additional_supervision}.
Since these forms of $p_{\mathrm{A}}(y|\bm{x}_i)$ do not rely on detailed information such as exact class probabilities or their relative order, they are expected to be applicable in a broad range of scenarios.
Within our framework, the simplest strategy for incorporating $p_{\mathrm{A}}(y|\bm{x}_i)$ is to fix $\lambda(\bm{x}_i)$ to a constant value for all instances, and then construct $p_{\lambda}(y|\bm{x}_i)$ using the affine combination in Eq.~\eqref{eq:linear_comb}.
As suggested by our theoretical analysis, $p_{\mathrm{A}}(y|\bm{x}_i)$ mainly contributes to bias reduction, while $\lambda(\bm{x}_i)$ affects the variance. 
From this perspective, the method can be interpreted as primarily aiming to reduce the bias term.
This implementation constitutes a minimal example of our framework and facilitates a systematic investigation of the effect of $p_{\mathrm{A}}(y|\bm{x}_i)$.

\subsection{Experimental Setting and Experimental Results}
\label{subsec:exp_settings_and_result}

\textbf{Datasets.} 
Four image datasets were used: MNIST~\cite{mnist}, Fashion-MNIST~\cite{fashion_mnist}, Kuzushiji-MNIST~\cite{kuzushiji_mnist}, and CIFAR-10~\cite{cifar10}, along with four datasets from the UCI repository \cite{Dua:2019}: HAR~\cite{har}, Letter~\cite{letter}, Optdigits~\cite{optical}, and Pendigits~\cite{pendigits}.
Details of these datasets are provided in Appendix~\ref{apdx:subsec_dataset}.
To simulate a probabilistic labeling scenario under the assumption that hard labels follow the true label distribution $p_*(y|\bm{x})$, we first trained a neural network as a label generation model and treat it as $p_*(y|\bm{x})$ for each dataset.
Based on the output distribution of the label generation model, each instance was probabilistically assigned a hard label.
The model design and key characteristics of the label generation models are provided in Appendix~\ref{apdx:subsec_label_generation}.

\textbf{Compared Method.} 
As baselines for comparison with T1OC and T2OC, the following methods were employed: {\it Hard}, {\it Soft}, label smoothing (LS) \cite{Szegedy_2016_CVPR}, self-distillation (SD) \cite{furlanello2018born}, and Label Enhancement (LE) \cite{xu2021label,zheng2023label}.
For T1OC and T2OC, the mixing coefficient was fixed as $\lambda(\bm{x}_i)=0.9, \forall \in [n]$ and $p_{\mathrm{A}}(y|\bm{x})$ was assigned based on $p_{*}(y|\bm{x})$.
A detailed comparison of different $\lambda(\bm{x}_i)$ values is presented in Appendix~\ref{apdx:subsec_compare_lambda}.
{\it Hard} uses only the observed hard labels for training.
{\it Soft} uses $p_*(y|\bm{x})$ as a soft label for training.
This evaluation aims to assess (i) the extent to which T1OC and T2OC outperform {\it Hard}, and (ii) how closely their performance approaches that of {\it Soft}.
Additionally, T1OC and T2OC were compared with soft labeling methods that generate soft labels solely from hard labels, without using any additional supervision.
To this end, LS, SD, and LE are used as representative baselines.
For LS, the confidence for the hard label was set to $0.9$ to match the certainty level used in T1OC and T2OC.
For SD, the method of Furlanello et al. (2018) \cite{furlanello2018born} was followed, in which a teacher model is first trained using hard labels and its outputs are used as soft labels for training a student model, where the teacher and student models are implemented with the same architecture, optimization method, and hyperparameters.
Here, the use of the teacher model is regarded as a soft labeling method.
For LE, two approaches were adopted: GLLE \cite{xu2021label}, which reconstructs label distributions using convex optimization, and LIB \cite{zheng2023label}, which uses neural networks for this purpose.
Further implementation details of these methods are provided in Appendix~\ref{apdx:subsec_detail_compared_method}.

\textbf{Learning.} 
Classification models were trained using the soft labels generated by each method. 
For the image datasets, a multi-layer perceptron with four hidden layers of width $5000$ was employed, while for the UCI datasets, a single hidden layer of the same width was used; in both cases, ReLU was adopted as the activation function.
Cross-entropy loss was used, and the models were optimized using the Adam \cite{kingma2014adam} optimizer with a learning rate of $0.0005$, batch size of $32$, $100$ epochs, and weight decay of $0.0002$.
The test dataset were disjoint from the training dataset: $10000$ fixed samples for image datasets, and half of the total samples for UCI datasets.
To simulate scenarios with a small number of training instances, the training data size was varied from $1000$ to $10000$ for image datasets and from $500$ to $2500$ for UCI datasets.
Each experimental setting was repeated five times by varying the random seeds used for training data selection, each comparison method, and model training, and we report the mean and twice the standard deviation.

\textbf{Results: Comparison with All Methods.}
Figure~\ref{fig:compare_size_acc_main} presents the generalization performance of the trained classification models across all eight datasets, evaluated under varying the training data size.
GLLE and LIB were excluded from Figure~\ref{fig:compare_size_acc_main}, as their performance was comparable to or worse than that of LS and exhibited high variance.
Comprehensive results for these methods are reported in Appendix~\ref{apdx:subsec_compare_le}.
As shown in Figure~\ref{fig:compare_size_acc_main}, T1OC and T2OC consistently outperform all baseline methods except {\it Soft}, across all datasets and training data sizes.
These results demonstrate that even the simplest forms of our introduced T1OC and T2OC can substantially improve classification accuracy when guided by our theoretical framework.

\textbf{Results: Comparison between T1OC and T2OC.}
Figure~\ref{fig:compare_size_acc_main} also reveals a notable trend: T2OC generally perform better on image datasets, whereas T1OC tends to outperform or match it on UCI datasets.
This may be attributed to the fact that image datasets often contain multiple plausible alternative classes besides the hard-labeled one.
For example, in the MNIST dataset containing digit images (0 to 9), images labeled as ``1'' may resemble ``7'' or ``9,'' highlighting the benefit of incorporating multiple plausible alternatives.
This observation illustrates that the suitability of $p_{\mathrm{A}}(y|\bm{x})$ depends on the true underlying data distribution, characterized by $p_*(y|\bm{x})$.

\begin{figure*}[t]
    \centering
    \includegraphics[width=0.97\linewidth]{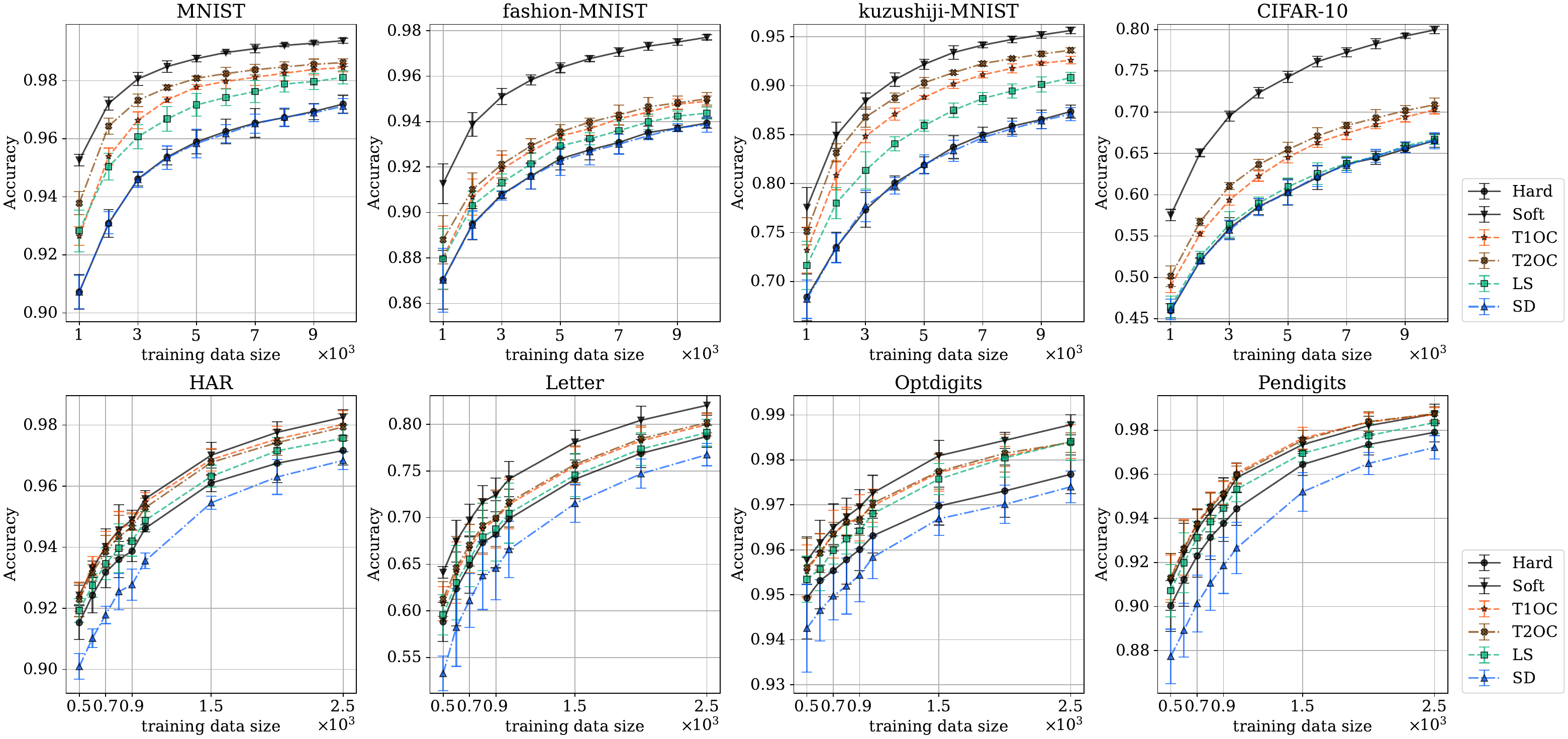}
    \caption{
             Comparison of the classification accuracy of models trained using the labels produced by each method on the image datasets (upper) and UCI datasets (lower).
             The plotted points represent the average of five trials, and the error bars indicate twice the standard deviation.
             }
    \label{fig:compare_size_acc_main}
\end{figure*}

\section{Conclusion}
\label{sec:conclusion}

In this study, we proposed a novel theoretical framework that incorporates additional supervision, represented by $p_{\mathrm{A}}(y|\bm{x})$, in conjunction with hard labels for each instance.
The proposed framework enhances instance-wise supervision by an affine combination $p_{\mathrm{A}}(y|\bm{x})$ and the deterministic distribution representing hard label.
We formulated this framework as an estimation problem of the true label distribution.
We subsequently derived a bias-variance decomposition of the objective function, clarifying the detailed relationships between the bias and variance terms and both $p_{\mathrm{A}}(y|\bm{x})$ and its mixing coefficient $\lambda(\bm{x})$.
Our analysis showed that $p_{\mathrm{A}}(y|\bm{x})$ and $\lambda(\bm{x})$ independently reduce the bias and variance terms, respectively, and can be improved separately.
Furthermore, we demonstrated that $p_{\mathrm{A}}(y|\bm{x})$ requires no information about the hard-labeled class, relying solely on the distribution over the non-hard-labeled classes.
We also showed that optimizing $\lambda(\bm{x})$ is equivalent to estimating the true probability of the hard-labeled class.
In addition, we performed a generalization error analysis to examine the relationship between label quality and classification model's performance, revealing that both $p_{\mathrm{A}}(y|\bm{x})$ and $\lambda(\bm{x})$ influence the convergence rate and asymptotic value of the error bound for the trained classification model.
Finally, based on our theoretical insights, we introduced a simple method that selects one or two most probable non-hard-labeled classes for $p_{\mathrm{A}}(y|\bm{x})$, with a fixed $\lambda(\bm{x})$, and experimentally validated that even such simple methods can improve classification accuracy.
Future work involves developing more practical methods for designing $p_{\mathrm{A}}(y|\bm{x})$ and optimizing $\lambda(\bm{x})$, building upon in our theory.

\section*{Acknowledgment}
This work was supported in part by the Japan Society for the Promotion of Science through Grants-in-Aid for Scientific Research (C) (23K11111).

\bibliographystyle{unsrt}

\bibliography{reference}

\newpage
\appendix

\section{Proof of Section~\ref{subsec:interpret_formulation}}
\label{apdx_sec:proof_interpretation}

\subsection{Proof of Theorem~\ref{thm:bias_var_decomposition}}
\label{apdx_subsec:proof_thm:bias_var_decomposition}

\begin{proof}[Proof of Theorem~\ref{thm:bias_var_decomposition}]

Figure~\ref{fig:geometric_proof} provides a supplemental illustration for the proof.
Since $p_{\lambda}(y|\bm{x}_i)$ is an affine combination of $p_{S}(y|\bm{x}_i)$ and $p_{\mathrm{A}}(y|\bm{x}_i)$ and satisfies $p_{\lambda}(y|\bm{x}_i) \in \Delta_{\mathcal{Y}}$, the set of possible values that $p_{\lambda}(y|\bm{x}_i)$ can take (depicted by the blue line) lies on an $m$-geodesic.
By the definition of $p_{\mathrm{opt}}(y|\bm{x}_i)$, the red line connecting $p_{*}(y|\bm{x}_i)$ and $p_{\mathrm{opt}}(y|\bm{x}_i)$ is an $e$-geodesic.
Therefore, by the generalized Pythagorean theorem \cite{amari2000methods}, the decomposition in Eq.~\eqref{eq:bias_var_decomposition} holds among $p_{*}(y|\bm{x}_i)$, $p_{\mathrm{opt}}(y|\bm{x}_i)$, and $p_{\lambda}(y|\bm{x}_i)$.
\end{proof}

\begin{figure*}[h]
    \centering
    \includegraphics[width=\linewidth]{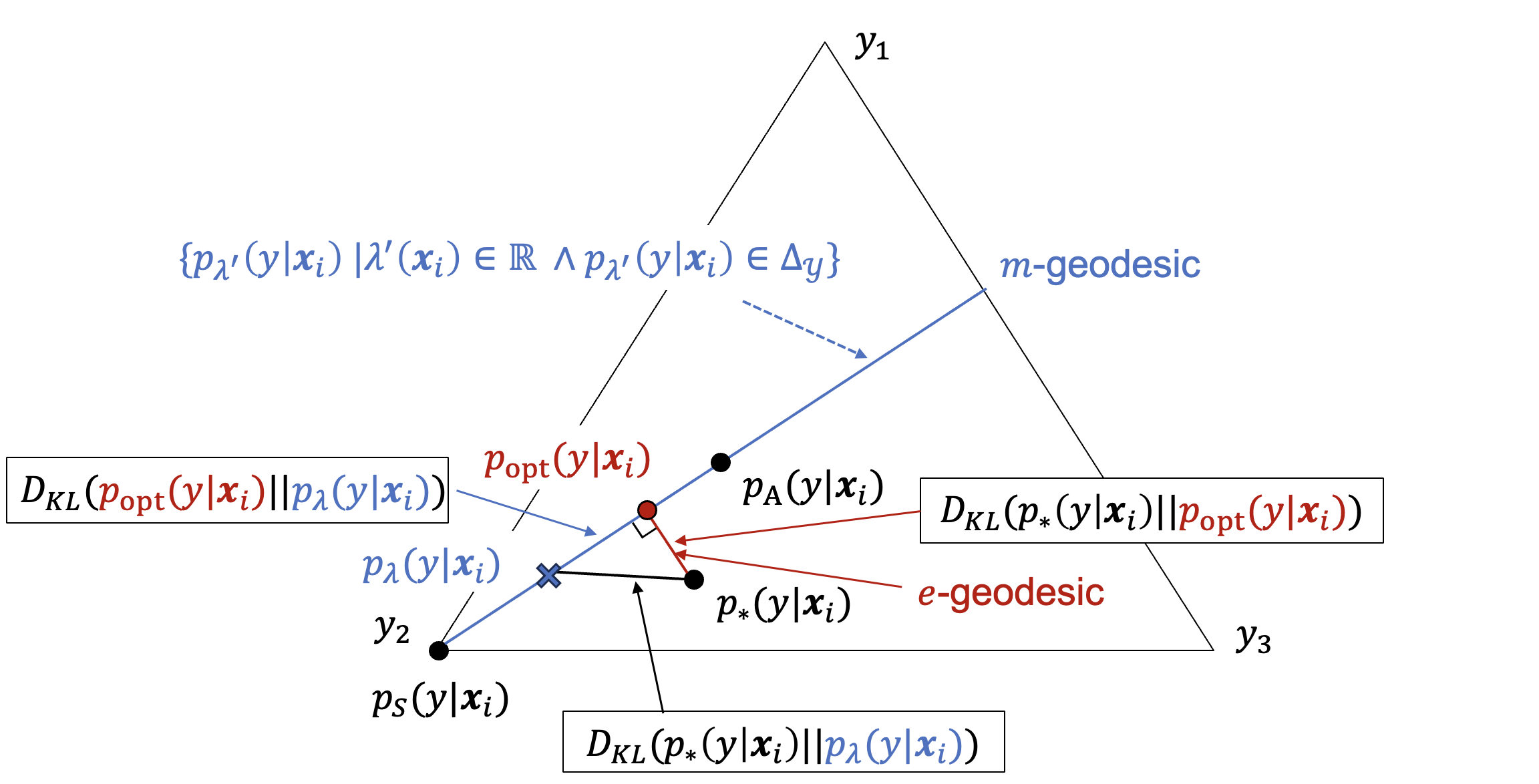}
    \caption{Geometric interpretation for our framework.}
    \label{fig:geometric_proof}
\end{figure*}

\subsection{Proof of Lemma~\ref{lem:dist_equations}}
\label{apdx_subsec:proof_lem:dist_equations}

\begin{proof}[Proof of Lemma~\ref{lem:dist_equations}]

We begin by proving Eq.~\eqref{eq:equal_lam_add}.
Since $p_{\lambda}(y|\bm{x}_i)$ is an affine combination of the deterministic distribution $p_{S}(y|\bm{x}_i)$, which satisfies $p_{S}(y_i|\bm{x}_i) = 1$, and $p_{\mathrm{A}}(y|\bm{x}_i)$, it follows that for any $i \in [n]$ and any $\lambda$ such that $p_{\lambda}(y|\bm{x}_i) \in \Delta_{\mathcal{Y}}$, we have:
\begin{align}
    p_{\lambda}(y|\bm{x}_i) = \begin{cases}
        \lambda(\bm{x}_i) + (1-\lambda(\bm{x}_i))p_{\mathrm{A}}(y|\bm{x}_i) & \text{if } y=y_i \\
        (1-\lambda(\bm{x}_i))p_{\mathrm{A}}(y|\bm{x}_i) & \text{if } y \neq y_i
    \end{cases}.
\label{eq:lam_detail}
\end{align}
Thus, for any $\lambda$ such that $p_{\lambda}(y|\bm{x}_i) \in \Delta_{\mathcal{Y}}$, the following holds:
\begin{align}
    & \frac{1}{p_{\mathrm{A}}(Y\neq y_i | \bm{x}_i)} p_{\mathrm{A}}(y'| \bm{x}_i) = \frac{1}{p_{\lambda}(Y\neq y_i | \bm{x}_i)} p_{\lambda}(y'| \bm{x}_i), ~~~~\forall y' \in \mathcal{Y}\setminus \{y_i\}, 
\end{align}
which proves Eq.~\eqref{eq:equal_lam_add}.

Moreover, since $p_{\mathrm{opt}}(y|\bm{x}_i)=p_{\lambda=\lambda^*}(y|\bm{x})$, it follows similarly that for any $y' \in \mathcal{Y} \setminus \{ y_i \}$, we have:
\begin{align}
\begin{split}
    \frac{1}{p_{\mathrm{A}}(Y\neq y_i | \bm{x}_i)} p_{\mathrm{A}}(y'| \bm{x}_i) 
    &= \frac{1}{p_{\mathrm{opt}}(Y\neq y_i | \bm{x}_i)} p_{\mathrm{opt}}(y'| \bm{x}_i) \\
    &= \frac{1}{p_{\lambda}(Y\neq y_i | \bm{x}_i)} p_{\lambda}(y'| \bm{x}_i),
\end{split}
\label{eq:equal_lam_opt_add}
\end{align}
where $p_{\mathrm{opt}}(Y\neq y_i|\bm{x}_i) := \sum_{y \in \mathcal{Y}\setminus\{y_i\}}p_{\mathrm{opt}}(y|\bm{x}_i)$.

Next, we prove Eq.~\eqref{eq:equal_star_opt}.
For any $i \in [n]$, the objective function in Eq.~\eqref{eq:mixture_optimization} can be expanded as:
\begin{align}
    &D_{\mathrm{KL}}(p_*(Y|\bm{x}_i) || p_{\lambda}(Y|\bm{x}_i))   \notag \\
    &= p_*(y_i|\bm{x}_i) \log \frac{p_*(y_i|\bm{x}_i)}{p_{\lambda}(y_i|\bm{x}_i)} 
    + \sum_{y \in \mathcal{Y}\setminus \{y_i\}} p_*(y|\bm{x}_i) \log \frac{p_*(y|\bm{x}_i)}{p_{\lambda}(y|\bm{x}_i)} \notag \\
    &=p_*(y_i|\bm{x}_i) \log \frac{p_*(y_i|\bm{x}_i)}{p_{\lambda}(y_i|\bm{x}_i)} \notag \\
    &~~~~+ \sum_{y \in \mathcal{Y}\setminus \{y_i\}} p_*(y|\bm{x}_i) 
                \Bigg\{
                    \log \frac{p_*(y|\bm{x}_i)/p_*(Y\neq y_i|\bm{x}_i)}{p_{\lambda}(y|\bm{x}_i)/p_{\lambda}(Y\neq y_i|\bm{x}_i)}
                    + \log \frac{p_*(Y\neq y_i|\bm{x}_i)}{p_{\lambda}(Y\neq y_i|\bm{x}_i)}
                \Bigg\}
                \notag \\
    &=p_*(y_i|\bm{x}_i) \log \frac{p_*(y_i|\bm{x}_i)}{p_{\lambda}(y_i|\bm{x}_i)} 
        + p_*(Y\neq y_i|\bm{x}_i)\log \frac{p_*(Y\neq y_i|\bm{x}_i)}{p_{\lambda}(Y\neq y_i|\bm{x}_i)} \notag \\
    &~~~~~+ \sum_{y \in \mathcal{Y}\setminus \{y_i\}} p_*(y|\bm{x}_i) \log \frac{p_*(y|\bm{x}_i)/p_*(Y\neq y_i|\bm{x}_i)}{p_{\lambda}(y|\bm{x}_i)/p_{\lambda}(Y\neq y_i|\bm{x}_i)}, \notag
\end{align}
where $p_{*}(Y\neq y_i|\bm{x}_i) := \sum_{y \in \mathcal{Y}\setminus\{y_i\}}p_{*}(y|\bm{x}_i)$.

Hence, we obtain:
\begin{align}
\begin{split}
    &D_{\mathrm{KL}}(p_*(Y|\bm{x}_i) || p_{\lambda}(Y|\bm{x}_i))   \\
    &=p_*(y_i|\bm{x}_i) \log \frac{p_*(y_i|\bm{x}_i)}{p_{\lambda}(y_i|\bm{x}_i)} 
        + (1- p_*(y_i|\bm{x}_i))\log \frac{1-p_*(y_i|\bm{x}_i)}{1-p_{\lambda}(y_i|\bm{x}_i)}  \\
    &~~~~~+ \sum_{y \in\mathcal{Y}\setminus \{y_i\}} p_*(y|\bm{x}_i) \log \frac{p_*(y|\bm{x}_i)/p_*(Y\neq y_i|\bm{x}_i)}{p_{\lambda}(y|\bm{x}_i)/p_{\lambda}(Y\neq y_i|\bm{x}_i)}. \\
\end{split}
\label{eq:obj_representation_lambda}
\end{align}
From Eq.~\eqref{eq:equal_lam_add}, the ratio $p_{\lambda}(y|\bm{x}_i)/p_{\lambda}(Y \neq y_i|\bm{x}_i)$ remains constant for any $\lambda$ such that $p_{\lambda}(y|\bm{x}_i) \in \Delta_{\mathcal{Y}}$, and therefore the third term on the right-hand side of Eq.~\eqref{eq:obj_representation_lambda} is constant with respect to $\lambda$.
Furthermore, the first and second terms on the right-hand side of Eq.~\eqref{eq:obj_representation_lambda} correspond to the conditional KL divergence with respect to $Z_i = \mathbbm{1}_{[Y = y_i]}$, and are thus non-negative.
It follows that the value of $\lambda$ minimizing $D_{\mathrm{KL}}(p_{*}(Y|\bm{x}_i) || p_{\lambda}(Y|\bm{x}_i))$ is achieved when $p_{*}(y_i|\bm{x}_i) = p_{\lambda}(y_i|\bm{x}_i)$.
Since the domain of $\lambda(\bm{x}_i)$ is $\mathbb{R}_+$ for any $i \in [n]$, there exists a $\lambda$ such that $p_*(y_i|\bm{x}_i)=p_{\lambda}(y_i|\bm{x}_i)$ holds for any $i \in [n]$.
Therefore, the optimal $\lambda^* := \argmin_{\lambda}D_{\mathrm{KL}}(p_*(Y|\bm{x}_i)|| p_{\lambda}(y|\bm{x}_i))$ satisfies $p_*(y_i|\bm{x}_i)=p_{\lambda^*}(y_i|\bm{x}_i) ~ (= p_{\mathrm{opt}}(y_i|\bm{x}_i))$.
This establishes Eq.~\eqref{eq:equal_star_opt}:
\begin{align}
    p_*(y_i|\bm{x}_i)=p_{\mathrm{opt}}(y_i|\bm{x}_i), ~~~ \forall i \in [n]. \notag
\end{align}
This also implies that the following equation holds for any $i \in [n]$:
\begin{align}
    p_*(Y\neq y_i | \bm{x}_i) 
    = \sum_{y \in \mathcal{Y}\setminus \{y_i\}} p_*(y|\bm{x}_i) = 1 - p_*(y_i|\bm{x}_i) = 1- p_{\mathrm{opt}}(y_i|\bm{x}_i) = p_{\mathrm{opt}}(Y\neq y_i | \bm{x}_i).
\label{eq:equal_star_opt_neq}
\end{align}
where $p_{\mathrm{opt}}(Y\neq y_i|\bm{x}_i) := \sum_{y \in \mathcal{Y}\setminus\{y_i\}}p_{\mathrm{opt}}(y|\bm{x}_i)$.

\end{proof}

\subsection{Proof of Theorem~\ref{thm:transform_bias_variance}}
\label{apdx_subsec:proof_thm:transform_bias_variance}

\begin{proof}[Proof of Theorem~\ref{thm:transform_bias_variance}]

We now analyze the bias term in Theorem~\ref{thm:bias_var_decomposition}. For any $i \in [n]$, it can be rewritten as:
\begin{align}
    \text{Bias}
    &= \sum_{y \in \mathcal{Y}} p_*(y|\bm{x}_i) \log \frac{p_*(y|\bm{x}_i)}{p_{\mathrm{opt}}(y|\bm{x}_i)}  \notag \\
    &= \sum_{y \in \mathcal{Y}\setminus \{y_i\}} p_*(y|\bm{x}_i) \log \frac{p_*(y|\bm{x}_i)}{p_{\mathrm{opt}}(y|\bm{x}_i)}  \notag \\
    &= p_*(Y\neq y_i | \bm{x}_i) \sum_{y \in \mathcal{Y}\setminus \{y_i\}} \frac{p_*(y|\bm{x}_i)}{p_*(Y\neq y_i | \bm{x}_i)} \log \frac{p_*(y|\bm{x}_i)/p_*(Y\neq y_i|\bm{x}_i)}{p_{\mathrm{opt}}(y|\bm{x}_i)/p_{*}(Y\neq y_i|\bm{x}_i)}  \notag \\
    &= p_*(Y\neq y_i | \bm{x}_i) \sum_{y \in \mathcal{Y}\setminus \{y_i\}} \frac{p_*(y|\bm{x}_i)}{p_*(Y\neq y_i | \bm{x}_i)} \log \frac{p_*(y|\bm{x}_i)/p_*(Y\neq y_i|\bm{x}_i)}{p_{\mathrm{opt}}(y|\bm{x}_i)/p_{\mathrm{opt}}(Y\neq y_i|\bm{x}_i)} \notag \\
    &= p_*(Y\neq y_i | \bm{x}_i) \sum_{y \in \mathcal{Y}\setminus \{y_i\}} \frac{p_*(y|\bm{x}_i)}{p_*(Y\neq y_i | \bm{x}_i)} \log \frac{p_*(y|\bm{x}_i)/p_*(Y\neq y_i|\bm{x}_i)}{p_{\mathrm{A}}(y|\bm{x}_i)/p_{\mathrm{A}}(Y\neq y_i|\bm{x}_i)} \notag \\
    &=p_*(Y\neq y_i | \bm{x}_i) D_{\mathrm{KL}}(p_{*, \neq y_i}(Y|\bm{x}_i) || p_{\mathrm{A}, \neq y_i}(Y|\bm{x}_i)) \notag \\
    &=(1-p_*(y_i | \bm{x}_i)) D_{\mathrm{KL}}(p_{*, \neq y_i}(Y|\bm{x}_i) || p_{\mathrm{A}, \neq y_i}(Y|\bm{x}_i)).
\label{eq:bias_tmp1}
\end{align}
Here, the second equality uses Eq.~\eqref{eq:equal_star_opt} in Lemma~\ref{lem:dist_equations}, which establishes $p_{*}(y_i|\bm{x}_i) = p_{\mathrm{opt}}(y_i|\bm{x}_i)$.
The fourth equality uses Eq.~\eqref{eq:equal_star_opt_neq} in the proof of Lemma~\ref{lem:dist_equations}, which guarantees $p_{*}(Y \neq y_i|\bm{x}_i) = p_{\mathrm{opt}}(Y \neq y_i|\bm{x}_i)$, and the fifth equality uses Eq.~\eqref{eq:equal_lam_add} from the same lemma.
Hence, Eq.~\eqref{eq:bias_representation} is established.

We now consider the variance term in Theorem~\ref{thm:bias_var_decomposition}. For any $i \in [n]$, it can be rewritten as:
\begin{align}
    \text{Variance}
    &= \sum_{y \in \mathcal{Y}} p_{\mathrm{opt}}(y|\bm{x}_i) \log \frac{p_{\mathrm{opt}}(y|\bm{x}_i)}{p_{\lambda}(y|\bm{x}_i)} \notag \\
    &= p_{\mathrm{opt}}(y_i|\bm{x}_i) \log \frac{p_{\mathrm{opt}}(y_i|\bm{x}_i)}{p_{\lambda}(y_i|\bm{x}_i)}  
        + \sum_{y \in \mathcal{Y} \setminus \{y_i\}} p_{\mathrm{opt}}(y|\bm{x}_i) \log \frac{p_{\mathrm{opt}}(y|\bm{x}_i)}{p_{\lambda}(y|\bm{x}_i)}   \notag \\
    &= p_{\mathrm{opt}}(y_i|\bm{x}_i) \log \frac{p_{\mathrm{opt}}(y_i|\bm{x}_i)}{p_{\lambda}(y_i|\bm{x}_i)}  \notag \\
    &~~~ + \sum_{y \in \mathcal{Y} \setminus \{y_i\}} p_{\mathrm{opt}}(y|\bm{x}_i) \Bigg\{ \log \frac{p_{\mathrm{opt}}(y|\bm{x}_i) / p_{\mathrm{opt}}(Y\neq y_i|\bm{x}_i)}{p_{\lambda}(y|\bm{x}_i) / p_{\lambda}(Y\neq y_i|\bm{x}_i)}   
        + \log \frac{p_{\mathrm{opt}}(Y\neq y_i|\bm{x}_i)}{p_{\lambda}(Y\neq y_i|\bm{x}_i)}
        \Bigg\}  \notag  \\
    &= p_{\mathrm{opt}}(y_i|\bm{x}_i) \log \frac{p_{\mathrm{opt}}(y_i|\bm{x}_i)}{p_{\lambda}(y_i|\bm{x}_i)}  
        + \sum_{y \in \mathcal{Y} \setminus \{y_i\}} p_{\mathrm{opt}}(y|\bm{x}_i) \log \frac{p_{\mathrm{opt}}(Y\neq y_i|\bm{x}_i)}{p_{\lambda}(Y\neq y_i|\bm{x}_i)}
         \notag  \\
    &= p_{\mathrm{opt}}(y_i|\bm{x}_i) \log \frac{p_{\mathrm{opt}}(y_i|\bm{x}_i)}{p_{\lambda}(y_i|\bm{x}_i)}  
        + p_{\mathrm{opt}}(Y\neq y_i|\bm{x}_i) \log \frac{p_{\mathrm{opt}}(Y\neq y_i|\bm{x}_i)}{p_{\lambda}(Y\neq y_i|\bm{x}_i)} \notag  \\
    &= p_{\mathrm{opt}}(y_i|\bm{x}_i) \log \frac{p_{\mathrm{opt}}(y_i|\bm{x}_i)}{p_{\lambda}(y_i|\bm{x}_i)}  
        + (1- p_{\mathrm{opt}}(y_i|\bm{x}_i)) \log \frac{1 - p_{\mathrm{opt}}(y_i|\bm{x}_i)}{1- p_{\lambda}(y_i|\bm{x}_i)}. \notag  \\
    &= p_{*}(y_i|\bm{x}_i) \log \frac{p_{*}(y_i|\bm{x}_i)}{p_{\lambda}(y_i|\bm{x}_i)}  
        + (1- p_{*}(y_i|\bm{x}_i)) \log \frac{1 - p_{*}(y_i|\bm{x}_i)}{1- p_{\lambda}(y_i|\bm{x}_i)}. \notag  \\
\end{align}
Here, the fourth equality uses Eq.~\eqref{eq:equal_lam_add} in Lemma~\ref{lem:dist_equations}, and the final inequality uses Eq.~\eqref{eq:equal_star_opt}, which ensures $p_*(y_i|\bm{x}_i) = p_{\mathrm{opt}}(y_i|\bm{x}_i)$.
Thus, Eq.~\eqref{eq:variance_representation} is established.
    
\end{proof}

\section{Proof of Section~\ref{subsec:error_analysis}}
\label{apdx_sec:proof_error_analysis}

\subsection{Proof of Lemma~\ref{lem:risk_div_inequality}}
\label{apdx_subsec:proof_lem:risk_div_inequality}

\begin{proof}[Proof of Lemma~\ref{lem:risk_div_inequality}]

The left-hand side of Eq.~\eqref{eq:risk_div_inequality} can be upper-bounded as follows:
\begin{align}
    &|R_{l,p}(f)  - R_{l,q}(f)| \notag \\
    &= | \mathbb{E}_{p(\bm{x})p(y|\bm{x})}[l(f(\bm{X}), Y)]  - \mathbb{E}_{p(\bm{x})q(y|\bm{x})}[l(f(\bm{X}), Y)] | \notag \\
    &\le \mathbb{E}_{p(\bm{x})}\bigg[\sum_{y \in \mathcal{Y}} l(f(\bm{X}), y) |p(y|\bm{X}) - q(y|\bm{X}) | \bigg] \notag \\
    &=\mathbb{E}_{p(\bm{x})}\bigg[\sum_{y \in \mathcal{Y}} l(f(\bm{X}), y) \bigg(\sqrt{p(y|\bm{X})} + \sqrt{q(y|\bm{X})} \bigg)  \bigg|\sqrt{p(y|\bm{X})} - \sqrt{q(y|\bm{X})} \bigg|  \bigg] \notag \\
    &\le  \Bigg\{ \mathbb{E}_{p(\bm{x}) p(y|\bm{x})}[(l(f(\bm{X}),Y))^2 ] \times \mathbb{E}_{p(\bm{x})}\bigg[\sum_{y\in\mathcal{Y}} \bigg(\sqrt{p(y|\bm{X})} - \sqrt{q(y|\bm{X})} \bigg)^2 \bigg]   \Bigg\}^\frac{1}{2} \notag \\
    &~~~+ \Bigg\{ \mathbb{E}_{p(\bm{x}) q(y|\bm{x})}[(l(f(\bm{X}),Y))^2 ] \times \mathbb{E}_{p(\bm{x})}\bigg[\sum_{y\in\mathcal{Y}} \bigg(\sqrt{p(y|\bm{X})} - \sqrt{q(y|\bm{X})} \bigg)^2 \bigg]   \Bigg\}^\frac{1}{2} \notag \\
    &\le \bigg( \sqrt{R_{l,p}(f)} + \sqrt{R_{l,q}(f)}  \bigg) \Bigg\{ 2M_l \mathbb{E}_{p(\bm{x})}[ D_{\mathrm{H}}^2(p(Y|\bm{X}), q(Y|\bm{X}))] \Bigg\}^\frac{1}{2} \notag \\
    &\le \bigg( \sqrt{R_{l,p}(f)} + \sqrt{R_{l,q}(f)}  \bigg) \Bigg\{ 2M_l D_{\mathrm{KL}}(p(Y|\bm{X}) || q(Y|\bm{X})) \Bigg\}^\frac{1}{2}.
\end{align}
The first inequality follows from the triangle inequality, and the second inequality applies the Cauchy-Schwarz inequality.
The third inequality exploits the fact that the function $x \mapsto x^2$ is $2M_l$-Lipschitz continuous over the interval $[0, M_l]$, leading to the bound:
\begin{align}
    \mathbb{E}_{p(\bm{x})p(y|\bm{x})} \Big[ \big(l(f(\bm{X}),Y) \big)^2 - 0^2 \Big] 
    \le 2 M_l \mathbb{E}_{p(\bm{x})p(y|\bm{x})}[l(f(\bm{X}),Y)] = 2 M_l R_{l,p}(f), \notag \\
    \mathbb{E}_{p(\bm{x})q(y|\bm{x})} \Big[ \big(l(f(\bm{X}),Y) \big)^2 - 0^2 \Big] 
    \le 2 M_l \mathbb{E}_{p(\bm{x})q(y|\bm{x})}[l(f(\bm{X}),Y)] = 2 M_l R_{l,q}(f). \notag \\
\end{align}
Here, $D_{\mathrm{H}}$ denotes the Hellinger distance.
The final inequality uses the well-known relation between the Hellinger distance and the KL divergence.

Therefore, we obtain:
\begin{align}
    |R_{l,p}(f)  - R_{l,q}(f)|
    &\le \bigg( \sqrt{R_{l,p}(f)} + \sqrt{R_{l,q}(f)}  \bigg) \Bigg\{ 2M_l \mathbb{E}_{p(\bm{x})}[ D_{\mathrm{H}}^2(p(Y|\bm{X}), q(Y|\bm{X}))] \Bigg\}^\frac{1}{2} \label{eq:diff_bound_p_q_hellinger} \\
    &\le \bigg( \sqrt{R_{l,p}(f)} + \sqrt{R_{l,q}(f)}  \bigg) \Bigg\{ 2M_l D_{\mathrm{KL}}(p(Y|\bm{X}) || q(Y|\bm{X})) \Bigg\}^\frac{1}{2}. \label{eq:diff_bound_p_q_kl}
\end{align}

Finally, dividing both sides of Eqs.~\eqref{eq:diff_bound_p_q_hellinger} and \eqref{eq:diff_bound_p_q_kl} by $\sqrt{R_{l,p}(f)} + \sqrt{R_{l,q}(f)}$ yields the result, thereby proving Lemma~\ref{lem:risk_div_inequality}:
\begin{align}
    \Big|\sqrt{R_{l,p}(f)}  - \sqrt{R_{l,q}(f)} \Big|
    &\le \Big\{ 2M_l \mathbb{E}_{p(\bm{x})}[ D_{\mathrm{H}}^2(p(Y|\bm{X}), q(Y|\bm{X}))] \Big\}^\frac{1}{2} \label{eq:diff_bound_p_q_hellinger_sqrt} \\
    &\le  \Big\{ 2M_l D_{\mathrm{KL}}(p(Y|\bm{X}) || q(Y|\bm{X})) \Big\}^\frac{1}{2}. \label{eq:diff_bound_p_q_kl_sqrt}
\end{align}
    
\end{proof}

\subsection{Preparation for proof of Theorem~\ref{thm:error_bound}}
\label{apdx_subsec:prepareation_proof_thm:error_bound}

To prepare for the proof of Theorem~\ref{thm:error_bound}, we establish several lemmas.

\begin{lemma}
    For any measurable $f \in \mathcal{F}$, any loss function $l$ that is bounded ($l \le M_l$) and $L_l$-Lipschitz continuous, and any $\delta \in (0,1)$, the following inequality holds with probability at least $1 - \delta$:
    \begin{align}
        |R_{l}(f) - \widehat{R}_{l, S_*}(f)|
        &\le 2 L_l \mathfrak{R}_{n}(\mathcal{F}) + M_l\sqrt{\frac{\log(1/\delta)}{2n}}.
        \label{eq:diff_bound_true_best_rademacher}
    \end{align}

\label{lem:diff_bound_true_best_rademacher}
\end{lemma}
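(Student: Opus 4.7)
The plan is to establish the bound via the standard three-step uniform convergence argument: auxiliary reduction, concentration via McDiarmid's inequality, and symmetrization combined with Talagrand's contraction lemma.

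First, I would introduce the auxiliary function $g_f(\bm{x}) := \mathbb{E}_{p_*(y|\bm{x})}[l(f(\bm{x}), Y)]$, which realizes $R_l(f)$ as its population mean under $p_*(\bm{x})$ and $\widehat{R}_{l, S_*}(f)$ as its empirical mean over the i.i.d.\ sample $\bm{x}_1, \dots, \bm{x}_n$. Two properties will be essential: (i) boundedness of $l$ yields $g_f \in [0, M_l]$; and (ii) writing $g_f(\bm{x}) = \phi_{\bm{x}}(f(\bm{x}))$ with $\phi_{\bm{x}}(v) := \mathbb{E}_{p_*(y|\bm{x})}[l(v, Y)]$, the map $\phi_{\bm{x}}$ inherits $L_l$-Lipschitz continuity in $v$ from $l$. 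The original inequality thus reduces to a uniform deviation bound on the empirical mean of a bounded, Lipschitz-parametrized real-valued function class.

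Next, I would apply McDiarmid's inequality to $\Phi(S) := \sup_{f \in \mathcal{F}} (R_l(f) - \widehat{R}_{l, S_*}(f))$. Replacing any single $\bm{x}_i$ changes $\Phi(S)$ by at most $M_l/n$ since $g_f \le M_l$, so with probability at least $1 - \delta$:
\begin{align*}
\Phi(S) \le \mathbb{E}[\Phi(S)] + M_l \sqrt{\frac{\log(1/\delta)}{2n}}.
\end{align*}
The symmetric inequality $\sup_f (\widehat{R}_{l,S_*}(f) - R_l(f))$ admits identical treatment, yielding the corresponding two-sided statement for $|R_l(f) - \widehat{R}_{l, S_*}(f)|$.

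Finally, I would bound the expected supremum by the standard symmetrization argument followed by Talagrand's contraction: $\mathbb{E}[\Phi(S)] \le 2\, \mathbb{E}_{S, \bm{\sigma}}\bigl[ \sup_{f \in \mathcal{F}} \tfrac{1}{n}\sum_{i=1}^n \sigma_i\, g_f(\bm{x}_i) \bigr]$ with i.i.d.\ Rademacher $\sigma_i$, and applying contraction with respect to the $L_l$-Lipschitz maps $\phi_{\bm{x}_i}$ yields $\mathbb{E}_{S, \bm{\sigma}}\bigl[ \sup_f \tfrac{1}{n}\sum_i \sigma_i\, g_f(\bm{x}_i) \bigr] \le L_l\, \mathfrak{R}_n(\mathcal{F})$. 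Combining these ingredients gives the claimed bound. The main technical subtlety is that the contraction step relies on the scalar-valued nature of $f: \mathcal{X} \to \mathcal{Y} \subset \mathbb{R}$, so the classical one-dimensional Ledoux--Talagrand contraction applies directly without requiring vector-valued extensions such as Maurer's lemma; beyond this, the result is a routine consequence of Rademacher-complexity-based uniform convergence applied to the induced loss class.
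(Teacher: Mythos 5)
Your proposal is correct and follows essentially the same route as the paper: both reduce the problem to the induced class $\mathcal{G} = \{\bm{x} \mapsto \mathbb{E}_{p_*(y|\bm{x})}[l(f(\bm{x}), Y)]\}$, invoke Rademacher-complexity-based uniform convergence over $\mathcal{G}$, and use Talagrand's contraction lemma with the $L_l$-Lipschitz maps $v \mapsto \mathbb{E}_{p_*(y|\bm{x})}[l(v, Y)]$ to obtain $\mathfrak{R}_n(\mathcal{G}) \le L_l \mathfrak{R}_n(\mathcal{F})$. The only difference is that you explicitly unpack the McDiarmid and symmetrization steps that the paper delegates to a cited uniform-law-of-large-numbers theorem.
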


\begin{proof}[Proof of Lemma \ref{lem:diff_bound_true_best_rademacher}]
Define the hypothesis class:
$$\mathcal{G} := \{\bm{x} \mapsto \mathbb{E}_{p_*(y|\bm{x})}[l(f(\bm{x}), Y)] | f \in \mathcal{F} \}.$$
Then, for any $f \in \mathcal{F}$, the following holds:
\begin{align}
    |R_{l}(f) - \widehat{R}_{l, S_*}(f)|
    &= |\mathbb{E}_{p_{*}(\bm{x},y)}[l(f(\bm{X}), Y)] - \mathbb{E}_{p_{S}(\bm{x}) p_{*}(y|\bm{x})}[l(f(\bm{X}), Y)]| \notag \\
    &\le \sup_{g \in \mathcal{G}} \bigg| \mathbb{E}_{p_{*}(\bm{x})}[g(\bm{X})] - \mathbb{E}_{p_{S}(\bm{x})}[g(\bm{X})] \bigg|.
\end{align}
Since $p_S(\bm{x})$ is the empirical distribution obtained from samples drawn independently from $p_*(\bm{x})$, the uniform law of large numbers \cite{mohri2018foundations} ensures that, with probability at least $1 - \delta$: 
\begin{align}
    |R_{l}(f) - \widehat{R}_{l, S_*}(f)|
    &\le 2 \mathfrak{R}_{n}(\mathcal{G}) + M_l\sqrt{\frac{\log(1/\delta)}{2n}}.
\end{align}
Because $p_{*}(y|\bm{x})$ is a discrete distribution, the expectation $\mathbb{E}_{p_{*}(y|\bm{x})}[l(f(\bm{x}), Y)]$ is a weighted sum of $l(f(\bm{x}), Y)$.
Also, $l$ is $L_l$-Lipschitz continuous.
From these facts, the mapping $y' \mapsto \mathbb{E}_{p_*(y|\bm{x})}[l(y', Y)]$ is also $L_l$-Lipschitz continuous.
Therefore, by Talagrand's lemma \cite{mohri2018foundations}, we obtain:
\begin{align}
    \mathfrak{R}_{n}(\mathcal{G}) \le L_l \mathfrak{R}_{n}(\mathcal{F}),
\end{align}
and thus, for any $\delta \in (0,1)$, the following inequality holds with probability at least $1 - \delta$:
\begin{align}
    |R_{l}(f) - \widehat{R}_{l, S_*}(f)|
    &\le 2 L_l \mathfrak{R}_{n}(\mathcal{F}) + M_l\sqrt{\frac{\log(1/\delta)}{2n}}. \notag
\end{align}

\end{proof}

From Lemma~\ref{lem:risk_div_inequality}, we obtain the following result:

\begin{lemma}
    Let $p(\bm{x})$ be any probability distribution over $\mathcal{X}$, and let $p(y|\bm{x})$ and $q(y|\bm{x})$ be any conditional distributions over $\mathcal{Y}$.
    Then, for any bounded loss function $l \le M_l$ and any measurable $f: \mathcal{X} \to \mathcal{Y}$, the difference between the two risks 
    \begin{align}
        \textstyle R_{l,p}(f):= \mathbb{E}_{p(\bm{x})p(y|\bm{x})}[l(f(\bm{X}),Y)], ~~~ R_{l,q}(f):= \mathbb{E}_{p(\bm{x})q(y|\bm{x})}[l(f(\bm{X}),Y)] \notag
    \end{align}
    is bounded as follows:
    \begin{align}
    \begin{split}
        \bigg|\sqrt{R_{l, p}(f)} - \sqrt{R_{l ,q}(f)} \bigg|
        &\le \Bigg\{2\sqrt{R_{l, p}(f)} +  \Bigg( 2M_l \mathbb{E}_{p(\bm{x})}[ D_{\mathrm{H}}^2(p(Y|\bm{X}), q(Y|\bm{X}))] \Bigg)^\frac{1}{2} \Bigg\}  \\
        &~~~~~~~~\times \Bigg\{ 2M_l \mathbb{E}_{p(\bm{x})}[ D_{\mathrm{H}}^2(p(Y|\bm{X}), q(Y|\bm{X}))] \Bigg\}^\frac{1}{2},
    \end{split}
    \label{eq:diff_bound_p_q_hellinger_pred_ver}
    \end{align}

    \begin{align}
    \begin{split}
        \bigg|\sqrt{R_{l, p}(f)} - \sqrt{R_{l ,q}(f)} \bigg|
        &\le \Bigg\{2\sqrt{R_{l, p}(f)} +  \Bigg( 2M_l D_{\mathrm{KL}}(p_*(Y|\bm{X}) || p_{\lambda}(Y|\bm{X})) \Bigg)^\frac{1}{2} \Bigg\} \\
        &~~~~~~~~\times \Bigg\{ 2M_l D_{\mathrm{KL}}(p_*(Y|\bm{X}) || p_{\lambda}(Y|\bm{X})) \Bigg\}^\frac{1}{2}. 
    \end{split}
    \label{eq:diff_bound_p_q_kl_pred_ver}
    \end{align}
    
\label{lem:risk_div_inequality_sqrt_ver}
\end{lemma}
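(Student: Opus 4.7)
The plan is a one-step bootstrap of Lemma \ref{lem:risk_div_inequality}. Along its proof (Eqs.~\eqref{eq:diff_bound_p_q_hellinger} and \eqref{eq:diff_bound_p_q_kl}) that lemma produced the intermediate estimate
\[ |R_{l,p}(f) - R_{l,q}(f)| \le \bigl( \sqrt{R_{l,p}(f)} + \sqrt{R_{l,q}(f)} \bigr) \sqrt{2 M_l \Delta}, \]
with $\Delta$ equal to $\mathbb{E}_{p(\bm{x})}[D_{\mathrm{H}}^2(p(Y|\bm{X}), q(Y|\bm{X}))]$ in the Hellinger case and $D_{\mathrm{KL}}(p(Y|\bm{X}) || q(Y|\bm{X}))$ in the KL case. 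The goal is to replace the $\sqrt{R_{l,q}(f)}$ factor on the right, which depends on the unseen $q$-risk, by an expression involving only $\sqrt{R_{l,p}(f)}$ and $\sqrt{2 M_l \Delta}$, and then to reshape the left-hand side into the stated difference of square roots.

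Next I would invoke the already-established square-rooted version of Lemma \ref{lem:risk_div_inequality} (Eqs.~\eqref{eq:diff_bound_p_q_hellinger_sqrt} and \eqref{eq:diff_bound_p_q_kl_sqrt}), which gives the one-sided estimate $\sqrt{R_{l,q}(f)} \le \sqrt{R_{l,p}(f)} + \sqrt{2 M_l \Delta}$. Adding $\sqrt{R_{l,p}(f)}$ to both sides yields
\[ \sqrt{R_{l,p}(f)} + \sqrt{R_{l,q}(f)} \le 2\sqrt{R_{l,p}(f)} + \sqrt{2 M_l \Delta}, \]
which is exactly the first bracketed factor appearing on the target right-hand side. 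Applying the difference-of-squares identity $|R_{l,p}(f) - R_{l,q}(f)| = \bigl|\sqrt{R_{l,p}(f)} - \sqrt{R_{l,q}(f)}\bigr| \cdot \bigl(\sqrt{R_{l,p}(f)} + \sqrt{R_{l,q}(f)}\bigr)$ to the intermediate estimate, and then substituting the bootstrap bound above for the common factor that appears on the right, produces precisely the right-hand sides of Eqs.~\eqref{eq:diff_bound_p_q_hellinger_pred_ver} (Hellinger $\Delta$) and \eqref{eq:diff_bound_p_q_kl_pred_ver} (KL $\Delta$).

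The Hellinger and KL variants run in perfect parallel and never require re-invoking $D_{\mathrm{H}}^2 \le D_{\mathrm{KL}}$; each is a self-contained bootstrap of the matching conclusion of Lemma \ref{lem:risk_div_inequality}. The step that demands the most care is the final algebraic rearrangement that moves $\bigl|\sqrt{R_{l,p}(f)} - \sqrt{R_{l,q}(f)}\bigr|$ to the left-hand side: one must handle separately the degenerate case $R_{l,p}(f) = R_{l,q}(f) = 0$, in which the factor $\sqrt{R_{l,p}(f)} + \sqrt{R_{l,q}(f)}$ vanishes and both sides of the stated inequality reduce to zero, and check that in the non-degenerate case the factor of $\bigl(\sqrt{R_{l,p}(f)} + \sqrt{R_{l,q}(f)}\bigr)$ on the left can be cancelled cleanly against the same quantity absorbed on the right via the bootstrap estimate. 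Apart from this bookkeeping, the argument is a purely algebraic consequence of Lemma \ref{lem:risk_div_inequality} and poses no analytic obstacle.
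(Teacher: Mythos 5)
Your steps 1--2 reproduce the paper's own argument almost exactly: the paper likewise starts from the intermediate bound $|R_{l,p}(f)-R_{l,q}(f)|\le(\sqrt{R_{l,p}(f)}+\sqrt{R_{l,q}(f)})\sqrt{2M_l\Delta}$ of Eqs.~\eqref{eq:diff_bound_p_q_hellinger}--\eqref{eq:diff_bound_p_q_kl}, uses the square-rooted version \eqref{eq:diff_bound_p_q_hellinger_sqrt}--\eqref{eq:diff_bound_p_q_kl_sqrt} to bootstrap one of the two square-rooted risks inside that factor, and stops there. (The paper substitutes $\sqrt{R_{l,p}}\le\sqrt{R_{l,q}}+\sqrt{2M_l\Delta}$ and therefore ends with $2\sqrt{R_{l,q}(f)}$ in the bracket, which is the form actually invoked in Corollary~\ref{cor:diff_bound_best_pred_pred_ver} and in the proof of Theorem~\ref{thm:error_bound}; you substitute in the opposite direction and obtain $2\sqrt{R_{l,p}(f)}$, matching the bracket printed in the lemma statement. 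Either direction is a legitimate bootstrap, and the discrepancy is the paper's, not yours.) At the end of your step 2 you have correctly proved $|R_{l,p}(f)-R_{l,q}(f)|\le\{2\sqrt{R_{l,p}(f)}+\sqrt{2M_l\Delta}\}\sqrt{2M_l\Delta}$ --- and note that this, with the \emph{unsquare-rooted} difference on the left, is also all that every display in the paper's own proof of this lemma establishes.

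The genuine gap is your step 3. From $|\sqrt{R_{l,p}}-\sqrt{R_{l,q}}|\cdot(\sqrt{R_{l,p}}+\sqrt{R_{l,q}})\le\{2\sqrt{R_{l,p}}+\sqrt{2M_l\Delta}\}\sqrt{2M_l\Delta}$ you cannot ``cancel'' the factor $(\sqrt{R_{l,p}}+\sqrt{R_{l,q}})$: once it has been absorbed into the larger bracket on the right, dividing the left-hand side by it forces you to divide the right-hand side by it as well, and the quotient is the claimed bound only when $\sqrt{R_{l,p}}+\sqrt{R_{l,q}}\ge 1$, which is not guaranteed. Indeed the inequality with $|\sqrt{R_{l,p}(f)}-\sqrt{R_{l,q}(f)}|$ on the left is false in general: take $\mathcal{Y}=\{0,1\}$, $p(0|\bm{x})=1$, $q(0|\bm{x})=1-t$, $q(1|\bm{x})=t$, and $l(f(\bm{x}),y)=M_l\mathbbm{1}_{[y=1]}$; then $R_{l,p}(f)=0$, the left-hand side equals $\sqrt{M_l t}$, while both divergences are of order $t$, so the right-hand side is of order $M_l t$ and the inequality fails for small $t$. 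The square roots on the left of the printed statement are therefore a typo carried over from Lemma~\ref{lem:risk_div_inequality} (as is the appearance of $p_*$ and $p_{\lambda}$ in place of $p$ and $q$ in the KL variant); the downstream corollary and the proof of Theorem~\ref{thm:error_bound} only ever use the version with $|R_{l,p}(f)-R_{l,q}(f)|$ on the left. No repair of your step 3 is possible, and none is needed: the correct course is to stop after your step 2.
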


\begin{proof}[Proof of Lemma~\ref{lem:risk_div_inequality_sqrt_ver}]

From Eq.~\eqref{eq:diff_bound_p_q_hellinger_sqrt} in the proof of Lemma~\ref{lem:risk_div_inequality}, we obtain the following bound for any probability distribution $p(\bm{x})$ over $\mathcal{X}$, any conditional distributions $p(y|\bm{x})$ and $q(y|\bm{x})$ over $\mathcal{Y}$, any bounded loss function $l \le M_l$, and any measurable $f: \mathcal{X} \rightarrow \mathcal{Y}$:
\begin{align}
    \sqrt{R_{l,p}(f)}
    &= \sqrt{R_{l,q}(f)} - \sqrt{R_{l,q}(f)} + \sqrt{R_{l,p}(f)} \notag \\
    &\le \sqrt{R_{l,q}(f)} + \bigg|\sqrt{R_{l,p}(f)} - \sqrt{R_{l,q}(f)} \bigg| \notag \\
    &\le \sqrt{R_{l,q}(f)} + \Big\{ 2M_l \mathbb{E}_{p(\bm{x})}[ D_{\mathrm{H}}^2(p_*(Y|\bm{X}), p_{\lambda}(Y|\bm{X}))] \Big\}^\frac{1}{2}.
\end{align}
The first inequality uses the basic inequality for absolute values, and the second inequality follows directly from Eq.~\eqref{eq:diff_bound_p_q_hellinger_sqrt}.

By applying the above result to the right-hand side of Eq.~\eqref{eq:diff_bound_p_q_hellinger} in the proof of Lemma~\ref{lem:risk_div_inequality}, the following holds:
\begin{align}
\begin{split}
    |R_{l, p}(f)  - R_{l, q}(f)| 
        &\le \Bigg\{2\sqrt{R_{l, q}(f)} +  \Big( 2M_l \mathbb{E}_{p(\bm{x})}[ D_{\mathrm{H}}^2(p_*(Y|\bm{X}), p_{\lambda}(Y|\bm{X}))] \Big)^\frac{1}{2} \Bigg\} \notag \\
        &~~~~~~~~\times \Bigg\{ 2M_l \mathbb{E}_{p(\bm{x})}[ D_{\mathrm{H}}^2(p(Y|\bm{X}), q(Y|\bm{X}))] \Bigg\}^\frac{1}{2}.
\end{split}
\end{align}

Furthermore, by using Eqs.~\eqref{eq:diff_bound_p_q_kl} and \eqref{eq:diff_bound_p_q_kl_sqrt}, we can replace $D_{\mathrm{H}}^2$ with the KL divergence $D_{\mathrm{KL}}$, yielding the analogous bound:
\begin{align}
\begin{split}
    |R_{l, p}(f)  - R_{l, q}(f)|  
        &\le \Bigg\{2\sqrt{R_{l, q}(f)} +  \Big( 2M_l D_{\mathrm{KL}}(p(Y|\bm{X}) || q(Y|\bm{X})) \Big)^\frac{1}{2} \Bigg\} \notag \\
        &~~~~~~~~\times \Big\{ 2M_l D_{\mathrm{KL}}(p(Y|\bm{X}) || q(Y|\bm{X})) \Big\}^\frac{1}{2}.
\end{split}
\end{align}
    
\end{proof}

In Lemma~\ref{lem:risk_div_inequality_sqrt_ver}, by setting $p(\bm{x}) = p_S(\bm{x})$, $p(y|\bm{x}) = p_{*}(y|\bm{x})$, and $q(y|\bm{x}) = p_{\lambda}(y|\bm{x})$, we obtain $R_{l,p}(f)=\widehat{R}_{l, S_*}(f)$ and $R_{l,q}(f) = \widehat{R}_{l, \widehat{S}}(f)$. 
Consequently, the following corollary holds:

\begin{corollary}
    For any measurable $f \in \mathcal{F}$, the following inequalities hold:
    \begin{align}
    \begin{split}
        |\widehat{R}_{l, S_*}(f)  - \widehat{R}_{l, \widehat{S}}(f)| 
            &\le \Bigg\{2\sqrt{\widehat{R}_{l, \widehat{S}}(f)} +  \Bigg( 2M_l \frac{1}{n}\sum^n_{i=1} D_{\mathrm{H}}^2(p_*(Y|\bm{x}_i), p_{\lambda}(Y|\bm{x}_i)) \Bigg)^\frac{1}{2} \Bigg\}  \\
            &~~~~~~~~\times \Bigg\{ 2M_l \frac{1}{n}\sum^n_{i=1} D_{\mathrm{H}}^2(p_*(Y|\bm{x}_i), p_{\lambda}(Y|\bm{x}_i)) \Bigg\}^\frac{1}{2}, 
    \end{split}
    \label{eq:diff_bound_best_pred_hellinger_pred_ver}
    \end{align}
    
    \begin{align}
    \begin{split}
        |\widehat{R}_{l, S_*}(f)  - \widehat{R}_{l, \widehat{S}}(f)| 
            &\le \Bigg\{2\sqrt{\widehat{R}_{l, \widehat{S}}(f)} +  \Bigg( 2M_l \frac{1}{n}\sum^n_{i=1} D_{\mathrm{KL}}(p_*(Y|\bm{x}_i) || p_{\lambda}(Y|\bm{x}_i)) \Bigg)^\frac{1}{2} \Bigg\} \\
            &~~~~~~~~\times \Bigg\{ 2M_l \frac{1}{n}\sum^n_{i=1} D_{\mathrm{KL}}(p_*(Y|\bm{x}_i) || p_{\lambda}(Y|\bm{x}_i)) \Bigg\}^\frac{1}{2}. 
    \end{split}
    \label{eq:diff_bound_best_pred_kl_pred_ver}
    \end{align}

\label{cor:diff_bound_best_pred_pred_ver}
\end{corollary}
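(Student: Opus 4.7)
The plan is to obtain this corollary as an immediate specialization of Lemma~\ref{lem:risk_div_inequality_sqrt_ver}, since the corollary is a finite-sample version of that lemma applied to a particular choice of marginal and conditionals. Concretely, I would apply the lemma with $p(\bm{x}) = p_S(\bm{x}) = \frac{1}{n}\sum_{i=1}^n \delta_{\bm{x}_i}$, $p(y|\bm{x}) = p_{*}(y|\bm{x})$, and $q(y|\bm{x}) = p_{\lambda}(y|\bm{x})$. The boundedness assumption $l \le M_l$ is already in force, and no Lipschitz assumption is needed for this step.

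Under these choices I would first verify the two risk identifications on the left-hand side. Expanding against the empirical marginal gives $R_{l,p}(f) = \mathbb{E}_{p_S(\bm{x})p_*(y|\bm{x})}[l(f(\bm{X}),Y)] = \frac{1}{n}\sum_{i=1}^{n}\mathbb{E}_{p_*(y|\bm{x}_i)}[l(f(\bm{x}_i),Y)] = \widehat{R}_{l,S_*}(f)$, and similarly $R_{l,q}(f) = \widehat{R}_{l,\widehat{S}}(f)$, which matches the definitions stated just before the corollary. On the right-hand side, the expectation of the divergence against the empirical marginal collapses into a per-sample average: $\mathbb{E}_{p_S(\bm{x})}[D_{\mathrm{H}}^2(p_*(Y|\bm{X}), p_{\lambda}(Y|\bm{X}))] = \frac{1}{n}\sum_{i=1}^{n} D_{\mathrm{H}}^2(p_*(Y|\bm{x}_i), p_{\lambda}(Y|\bm{x}_i))$, and likewise for $D_{\mathrm{KL}}$. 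Substituting these identities into Eqs.~\eqref{eq:diff_bound_p_q_hellinger_pred_ver} and~\eqref{eq:diff_bound_p_q_kl_pred_ver} yields exactly Eqs.~\eqref{eq:diff_bound_best_pred_hellinger_pred_ver} and~\eqref{eq:diff_bound_best_pred_kl_pred_ver}.

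There is no real obstacle here: all the analytic content (triangle inequality, Cauchy--Schwarz, the Lipschitz bound on $x \mapsto x^2$ over $[0,M_l]$, and the Hellinger--KL inequality) has already been absorbed into Lemma~\ref{lem:risk_div_inequality_sqrt_ver}. The only point deserving a brief remark is that the lemma is stated for a general marginal $p(\bm{x})$ on $\mathcal{X}$ and its proof never uses absolute continuity of $p(\bm{x})$, so taking $p$ to be the finitely-supported empirical measure $p_S$ is admissible without modification. Hence the corollary follows from Lemma~\ref{lem:risk_div_inequality_sqrt_ver} by direct substitution.
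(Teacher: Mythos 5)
Your proposal is correct and is essentially identical to the paper's own derivation: the paper also obtains the corollary by specializing Lemma~\ref{lem:risk_div_inequality_sqrt_ver} with $p(\bm{x})=p_S(\bm{x})$, $p(y|\bm{x})=p_*(y|\bm{x})$, $q(y|\bm{x})=p_{\lambda}(y|\bm{x})$, and your verification that the risks and the divergence expectation collapse to per-sample averages under the empirical marginal is exactly the needed content. One caveat to your claim that substitution yields the corollary ``exactly'': the lemma as \emph{stated} bounds $|\sqrt{R_{l,p}(f)}-\sqrt{R_{l,q}(f)}|$ with $2\sqrt{R_{l,p}(f)}$ inside the first bracket, whereas the corollary bounds $|R_{l,p}(f)-R_{l,q}(f)|$ with $2\sqrt{R_{l,q}(f)}$ there; the corollary's form instead matches the final display in the lemma's \emph{proof}, so the mismatch is an inconsistency in the paper's lemma statement rather than a gap in your argument, but you should cite the form actually established in that proof (or rederive it via $\sqrt{R_{l,p}}\le\sqrt{R_{l,q}}+\{2M_l D\}^{1/2}$ from Lemma~\ref{lem:risk_div_inequality}) rather than the lemma's displayed statement.
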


\subsection{Proof of Theorem~\ref{thm:error_bound}}
\label{apdx_subsec:proof_thm:error_bound}

Using the above lemmas and corollary, we now prove Theorem~\ref{thm:error_bound} as follows.

\begin{proof}[Proof of Theorem~\ref{thm:error_bound}]

The left-hand side of Eq.~\eqref{eq:error_bound} in Theorem~\ref{thm:error_bound} can be upper-bounded as:
\begin{align}
    R_{l}(f_{\widehat{S}}) 
    &=R_{l}(f_{\widehat{S}}) - \widehat{R}_{l, \widehat{S}}(f_{\widehat{S}}) + \widehat{R}_{l, \widehat{S}}(f_{\widehat{S}}) -  R_{l}(f_{\mathcal{F}}) + R_{l}(f_{\mathcal{F}}) \notag \\
    &\le R_{l}(f_{\widehat{S}}) - \widehat{R}_{l, \widehat{S}}(f_{\widehat{S}}) + \widehat{R}_{l, \widehat{S}}(f_{\mathcal{F}}) -  R_{l}(f_{\mathcal{F}}) + R_{l}(f_{\mathcal{F}})\notag \\
    &\le \underset{\text{(a1)}}{\underbrace{|R_{l}(f_{\widehat{S}}) - \widehat{R}_{l, \widehat{S}}(f_{\widehat{S}})|}} 
                           + \underset{\text{(a2)}}{\underbrace{|\widehat{R}_{l, \widehat{S}}(f_{\mathcal{F}}) -  R_{l}(f_{\mathcal{F}})| }} + R_{l}(f_{\mathcal{F}}). 
\label{eq:error_bound_tmp1}
\end{align}
The first inequality follows from the definition of $f_{\widehat{S}}$ as the empirical risk minimizer, i.e., $\widehat{R}_{l,\widehat{S}}(f_{\widehat{S}}) \le \widehat{R}_{l,\widehat{S}}(f_{\mathcal{F}})$, and the second inequality follows from the basic property of absolute values.

For term (a1) in Eq.~\eqref{eq:error_bound_tmp1}, we have the following bound from Corollary~\ref{cor:diff_bound_best_pred_pred_ver}:
\begin{align}
    &|R_l(f_{\widehat{S}}) - \widehat{R}_{l, \widehat{S}}(f_{\widehat{S}})| \notag \\
    &= |R_{l}(f_{\widehat{S}}) - \widehat{R}_{l, S_*}(f_{\widehat{S}}) + \widehat{R}_{l, S_*}(f_{\widehat{S}})  - \widehat{R}_{l, \widehat{S}}(f_{\widehat{S}})| \notag \\
    &\le |R_{l}(f_{\widehat{S}}) - \widehat{R}_{l, S_*}(f_{\widehat{S}})| +  |\widehat{R}_{l, S_*}(f_{\widehat{S}})  - \widehat{R}_{l, \widehat{S}}(f_{\widehat{S}})|  \notag \\
    &\le \max_{f \in \mathcal{F}}|R_{l}(f) - \widehat{R}_{l, S_*}(f)| +  |\widehat{R}_{l, S_*}(f_{\widehat{S}})  - \widehat{R}_{l, \widehat{S}}(f_{\widehat{S}})|  \notag \\
    &\le \max_{f \in \mathcal{F}}|R_{l}(f) - \widehat{R}_{l, S_*}(f)|  + \Bigg\{2\sqrt{\widehat{R}_{l, \widehat{S}}(f_{\widehat{S}})} +  \Bigg( 2M_l \frac{1}{n}\sum^n_{i=1} D_{\mathrm{KL}}(p_*(Y|\bm{x}_i) || p_{\lambda}(Y|\bm{x}_i)) \Bigg)^\frac{1}{2} \Bigg\} \notag \\
        &~~~~~~~~\times \Bigg\{ 2M_l \frac{1}{n}\sum^n_{i=1} D_{\mathrm{KL}}(p_*(Y|\bm{x}_i) || p_{\lambda}(Y|\bm{x}_i)) \Bigg\}^\frac{1}{2}.  
\label{eq:error_bound_tmp_a1_1}
\end{align}
The first inequality uses the triangle inequality, and the second uses an inequality based on $\max$.
The third inequality uses Corollary~\ref{cor:diff_bound_best_pred_pred_ver}.
Moreover, $\widehat{R}_{l,\widehat{S}}(f_{\widehat{S}})$ in Eq.~\eqref{eq:error_bound_tmp_a1_1} can be upper-bounded as follows:
\begin{align}
    \widehat{R}_{l, \widehat{S}}(f_{\widehat{S}})
    &= \widehat{R}_{l, \widehat{S}}(f_{\widehat{S}}) - R_{l}(f_{\mathcal{F}}) + R_{l}(f_{\mathcal{F}}) \notag \\
    &\le \widehat{R}_{l, \widehat{S}}(f_{\mathcal{F}}) - R_{l}(f_{\mathcal{F}}) + R_{l}(f_{\mathcal{F}}) \notag \\
    &\le |\widehat{R}_{l, \widehat{S}}(f_{\mathcal{F}}) - R_{l}(f_{\mathcal{F}}) | + R_{l}(f_{\mathcal{F}}).
\label{eq:error_bound_tmp_a1_2}
\end{align}
The first inequality follows from the definition of $f_{\widehat{S}}$, i.e., $\widehat{R}_{l,\widehat{S}}(f_{\widehat{S}}) \le \widehat{R}_{l,\widehat{S}}(f_{\mathcal{F}})$, and the second inequality follows from the absolute value property.
Note that the first term on the right-hand side of Eq.~\eqref{eq:error_bound_tmp_a1_2} is the same as term (a2) in Eq.~\eqref{eq:error_bound_tmp1}.

Next, we consider term (a2) in Eq.~\eqref{eq:error_bound_tmp1}.
We begin with the following inequality:
\begin{align}
    |\widehat{R}_{l, \widehat{S}}(f_{\mathcal{F}}) - R_{l}(f_{\mathcal{F}}) |
    \le \max_{f \in \mathcal{F}}|R_{l}(f) - \widehat{R}_{l, \widehat{S}}(f)|.
\label{eq:error_bound_tmp_a2_1}
\end{align}
Defining $f' := \argmax_{f \in \mathcal{F}} |R_l(f) - \widehat{R}_{l,\widehat{S}}(f)|$, the following holds:
\begin{align}
    \max_{f \in \mathcal{F}}|R_{l}(f) - \widehat{R}_{l, \widehat{S}}(f)|
    &=|R_{l}(f') - \widehat{R}_{l, \widehat{S}}(f')| \notag \\
    &= |R_{l}(f') - \hat{R}_{l, S_*}(f') +  \widehat{R}_{l, S_*}(f')  - \widehat{R}_{l, \widehat{S}}(f')| \notag \\
    &\le |R_{l}(f') - \hat{R}_{l, S_*}(f')| +  |\widehat{R}_{l, S_*}(f')  - \widehat{R}_{l, \widehat{S}}(f')| \notag \\
    &\le \max_{f \in \mathcal{F}}|R_{l}(f) - \hat{R}_{l, S_*}(f)| +  |\widehat{R}_{l, S_*}(f')  - \widehat{R}_{l, \widehat{S}}(f')|.
\end{align}
The first inequality uses the triangle inequality, and the second uses the inequality involving maximum.
Applying this result to Eq.~\eqref{eq:error_bound_tmp_a2_1}, term (a2) can be upper-bounded as:
\begin{align}
    |\widehat{R}_{l, \widehat{S}}(f_{\mathcal{F}}) - R_{l}(f_{\mathcal{F}}) |
    \le \max_{f \in \mathcal{F}}|R_{l}(f) - \hat{R}_{l, S_*}(f)| +  |\widehat{R}_{l, S_*}(f')  - \widehat{R}_{l, \widehat{S}}(f')|.
\label{eq:error_bound_tmp_a2_2}
\end{align}
The second term on the right-hand side of Eq.~\eqref{eq:error_bound_tmp_a2_2} can be upper-bounded as follows:
\begin{align}
    &|\widehat{R}_{l, S_*}(f')  - \widehat{R}_{l, \widehat{S}}(f')|  \notag \\
    &= | \mathbb{E}_{p_{S}(\bm{x}) p_{*}(y|\bm{x})}[l(f'(\bm{X}), Y)]  - \mathbb{E}_{p_{S}(\bm{x}) p_{\lambda}(y|\bm{x})}[l(f'(\bm{X}), Y)] | \notag \\
    &\le \mathbb{E}_{p_{S}(\bm{x})}\bigg[\sum_{y \in \mathcal{Y}} l(f'(\bm{X}), y) |p_{*}(y|\bm{X}) - p_{\lambda}(y|\bm{X}) | \bigg] \notag \\
    &\le M_l \mathbb{E}_{p_{S}(\bm{x})}\bigg[\sum_{y \in \mathcal{Y}} |p_{*}(y|\bm{X}) - p_{\lambda}(y|\bm{X}) |  \bigg] \notag \\
    &\le M_l \frac{1}{n}\sum^n_{i=1} \bigg\{ 2D_{\mathrm{KL}}(p_*(Y|\bm{x}_i) || p_{\lambda}(Y|\bm{x}_i))  \bigg\}^{\frac{1}{2}} \notag \\ 
    &\le M_l \bigg\{ \frac{2}{n}\sum^n_{i=1} D_{\mathrm{KL}}(p_*(Y|\bm{x}_i) || p_{\lambda}(Y|\bm{x}_i))  \bigg\}^{\frac{1}{2}}, 
\end{align}
where the first inequality uses the triangle inequality, the second uses the boundedness $l \le M_l$, the third applies Pinsker’s inequality, and the fourth uses Jensen’s inequality.

By applying this to Eq.~\eqref{eq:error_bound_tmp_a2_2}, term (a2) in Eq.~\eqref{eq:error_bound_tmp1} can be upper-bounded as follows:
\begin{align}
    &|\widehat{R}_{l, \widehat{S}}(f_{\mathcal{F}}) - R_{l}(f_{\mathcal{F}}) |\notag \\
    &\le \max_{f \in \mathcal{F}}|R_{l}(f) - \hat{R}_{l, S_*}(f)| +  M_l \bigg\{ \frac{2}{n}\sum^n_{i=1} D_{\mathrm{KL}}(p_*(Y|\bm{x}_i) || p_{\lambda}(Y|\bm{x}_i))  \bigg\}^{\frac{1}{2}}.
\label{eq:error_bound_tmp_a2_3}
\end{align}
Applying this result to Eq.~\eqref{eq:error_bound_tmp_a1_2}, the following holds:
\begin{align}
    \widehat{R}_{l, \widehat{S}}(f_{\widehat{S}})
    &\le \max_{f \in \mathcal{F}}|R_{l}(f) - \hat{R}_{l, S_*}(f)| +  M_l \bigg\{ \frac{2}{n}\sum^n_{i=1} D_{\mathrm{KL}}(p_*(Y|\bm{x}_i) || p_{\lambda}(Y|\bm{x}_i))  \bigg\}^{\frac{1}{2}} + R_{l}(f_{\mathcal{F}}).
\end{align}
Furthermore, applying this result to Eq.~\eqref{eq:error_bound_tmp_a1_1}, term (a1) in Eq.~\eqref{eq:error_bound_tmp1} can be upper-bounded as follows:
\begin{align}
\begin{split}
    &|R_l(f_{\widehat{S}}) - \widehat{R}_{l, \widehat{S}}(f_{\widehat{S}})| \\
    &\le \max_{f \in \mathcal{F}}|R_{l}(f) - \widehat{R}_{l, S_*}(f)| \\
    &+ \Bigg\{2\Bigg( \max_{f \in \mathcal{F}}|R_{l}(f) - \hat{R}_{l, S_*}(f)| +  M_l \bigg\{ \frac{2}{n}\sum^n_{i=1} D_{\mathrm{KL}}(p_*(Y|\bm{x}_i) || p_{\lambda}(Y|\bm{x}_i))  \bigg\}^{\frac{1}{2}} + R_{l}(f_{\mathcal{F}}) \Bigg)^{\frac{1}{2}}  \\
    &~~~~~~~~~+\Bigg( 2M_l \frac{1}{n}\sum^n_{i=1} D_{\mathrm{KL}}(p_*(Y|\bm{x}_i) || p_{\lambda}(Y|\bm{x}_i)) \Bigg)^\frac{1}{2} \Bigg\} \\ 
    &~~~~~~~~~~~\times \Bigg\{ 2M_l \frac{1}{n}\sum^n_{i=1} D_{\mathrm{KL}}(p_*(Y|\bm{x}_i) || p_{\lambda}(Y|\bm{x}_i)) \Bigg\}^\frac{1}{2}.  
\end{split}
\label{eq:error_bound_tmp_a1_max}
\end{align}

Thus, applying Eqs.~\eqref{eq:error_bound_tmp_a1_max} and \eqref{eq:error_bound_tmp_a2_3} to terms (a1) and (a2) in Eq.~\eqref{eq:error_bound_tmp1}, respectively, the following holds:
\begin{align}
\begin{split}
    R_{l}(f_{\widehat{S}}) 
    &\le 2\max_{f \in \mathcal{F}}|R_{l}(f) - \widehat{R}_{l, S_*}(f)| \\
    &~~~+ \Bigg\{2\Bigg\{\max_{f \in \mathcal{F}}|R_{l}(f) - \hat{R}_{l, S_*}(f)| \\
    &~~~~~~~~~~~~~~~~~~~~~~~~+  M_l \bigg( \frac{2}{n}\sum^n_{i=1} D_{\mathrm{KL}}(p_*(Y|\bm{x}_i) || p_{\lambda}(Y|\bm{x}_i))  \bigg)^{\frac{1}{2}} + R_{l}(f_{\mathcal{F}}) \Bigg\}^{\frac{1}{2}}  \\
    &~~~~~~~~~~~~~~~~+\Bigg( 2M_l \frac{1}{n}\sum^n_{i=1} D_{\mathrm{KL}}(p_*(Y|\bm{x}_i) || p_{\lambda}(Y|\bm{x}_i)) \Bigg)^\frac{1}{2} \Bigg\} \\
    &~~~~~~~~~~~~~~~~~~~\times \Bigg\{ 2M_l \frac{1}{n}\sum^n_{i=1} D_{\mathrm{KL}}(p_*(Y|\bm{x}_i) || p_{\lambda}(Y|\bm{x}_i)) \Bigg\}^\frac{1}{2} \\
    &~~~+  M_l \bigg\{ \frac{2}{n}\sum^n_{i=1} D_{\mathrm{KL}}(p_*(Y|\bm{x}_i) || p_{\lambda}(Y|\bm{x}_i))  \bigg\}^{\frac{1}{2}} + R_{l}(f_{\mathcal{F}}).
\end{split}
\label{eq:error_bound_tmp2}
\end{align}

Finally, applying Lemma~\ref{lem:diff_bound_true_best_rademacher} to Eq.~\eqref{eq:error_bound_tmp2}, for any $\delta \in (0,1)$, the following holds with probability at least $1 - \delta$:
\begin{align}
    R_{l}(f_{\widehat{S}}) 
    &\le  4 L_l \mathfrak{R}_{n}(\mathcal{F}) + 2M_l\sqrt{\frac{\log(1/\delta)}{2n}} \notag \\
    &+ 2\Bigg\{ 2 L_l \mathfrak{R}_{n}(\mathcal{F}) + M_l\sqrt{\frac{\log(1/\delta)}{2n}} + R_l(f_{\mathcal{F}}) \notag \\
    &~~~~~~~~~~~~~~~~~~~~~~~+  M_l \bigg( \frac{2}{n}\sum^n_{i=1} D_{\mathrm{KL}}(p_*(Y|\bm{x}_i) || p_{\lambda}(Y|\bm{x}_i))  \bigg)^{\frac{1}{2}} \Bigg\}^{\frac{1}{2}} \notag \\
    &~~~~~~~~~~~~~~~\times   \Bigg\{ 2M_l \frac{1}{n}\sum^n_{i=1} D_{\mathrm{KL}}(p_*(Y|\bm{x}_i)|| p_{\lambda}(Y|\bm{x}_i)) \Bigg\}^\frac{1}{2} \notag \\
    &+ M_l \Bigg\{ \frac{2}{n}\sum^n_{i=1} D_{\mathrm{KL}}(p_*(Y|\bm{x}_i) || p_{\lambda}(Y|\bm{x}_i)) + \bigg( \frac{2}{n}\sum^n_{i=1} D_{\mathrm{KL}}(p_*(Y|\bm{x}_i) || p_{\lambda}(Y|\bm{x}_i))  \bigg)^{\frac{1}{2}} \Bigg\} \notag \\
    &+ R_l(f_{\mathcal{F}}). \notag 
\end{align}

Therefore, Theorem~\ref{thm:error_bound} is proved.

\end{proof}

\section{Detail of Experimental Settings}
\label{apdx:sec_detal_exp_settings}

\subsection{Datasets}
\label{apdx:subsec_dataset}

In this experiment, we used four image classification datasets-MNIST \cite{mnist}, fashion-MNIST \cite{fashion_mnist}, kuzushiji-MNIST \cite{kuzushiji_mnist}, and CIFAR-10 \cite{cifar10}, as well as four datasets from the UCI repository \cite{Dua:2019}: HAR \cite{har}, Letter \cite{letter}, Optdigits \cite{optical}, and Pendigits \cite{pendigits}.
The full names of the UCI datasets are as follows: HAR: ``Human Activity Recognition Using Smartphones'', Letter: ``Letter Recognition'', Optdigits: ``Optical Recognition of Handwritten Digits'', Pendigits: ``Pen-Based Recognition of Handwritten Digits''.
Table~\ref{tab:outline_datasets} summarizes the key information for each dataset.
Since the feature scales vary across dimensions in the UCI datasets, we normalized all features to fall within the range $[0,1]$.
An overview of each dataset is provided below:

\begin{table}[h]
    \centering
    \caption{Outline of datasets.
             }
    \begin{tabular}{c|c|c|c}
        \hline
         dataset         & data size  &  input dimension   &  classes \\ \hline \hline
         MNIST           & 60000 & 784    &  10  \\ \hline 
         fashion-MNIST   & 60000 & 784    &  10 \\ \hline
         kuzushiji-MNIST & 60000 & 784    &  10 \\ \hline
         CIFAR-10        & 60000 & 3072   &  10 \\ \hline
         HAR             & 10299 & 562    &  8  \\ \hline
         Letter          & 20000 & 16     &  26 \\ \hline
         Optdigits         & 5620  & 64     &  10 \\ \hline
         Pendigits       & 10992 & 16     &  10 \\ \hline
    \end{tabular}
    \label{tab:outline_datasets}
\end{table}

\textbf{MNIST} \cite{mnist}: 
A grayscale 32×32 pixel image classification dataset of handwritten digits from 0 to 9.
The dataset is available at \href{http://yann.lecun.com/exdb/mnist/}{http://yann.lecun.com/exdb/mnist/}.

\textbf{fashion-MNIST} \cite{fashion_mnist}: 
A grayscale 32×32 pixel image classification dataset of fashion items such as T-shirts and sneakers.
The dataset is available at \href{https://github.com/zalandoresearch/fashion-mnist}{https://github.com/zalandoresearch/fashion-mnist}.

\textbf{kuzushiji-MNIST} \cite{kuzushiji_mnist}:
A grayscale 32×32 pixel image classification dataset of handwritten Japanese characters.
The dataset is available at \href{https://github.com/rois-codh/kmnist}{https://github.com/rois-codh/kmnist}.

\textbf{CIFAR-10} \cite{cifar10}:
A color image classification dataset of objects such as cars and cats.
The dataset is available at \href{https://www.cs.toronto.edu/~kriz/cifar.html}{https://www.cs.toronto.edu/~kriz/cifar.html}.

\textbf{HAR} \cite{har}:
A dataset for classifying activities such as walking, based on accelerometer and gyroscope data from smartphones of 30 individuals.
The dataset is available at \href{https://archive.ics.uci.edu/dataset/240/human+activity+recognition+using+smartphones}{https://archive.ics.uci.edu/dataset/240/human+activity+recognition+using+smartphones}

\textbf{Letter} \cite{letter}:
A dataset for classifying alphabets based on 16-dimensional numerical features of handwritten letters.
The dataset is available at \href{https://archive.ics.uci.edu/dataset/59/letter+recognition}{https://archive.ics.uci.edu/dataset/59/letter+recognition}

\textbf{Optdigits} \cite{optical}:
A grayscale 8×8 image classification dataset of handwritten digits.
The dataset is available at \href{https://archive.ics.uci.edu/dataset/80/optical+recognition+of+handwritten+digits}{https://archive.ics.uci.edu/dataset/80/optical+recognition+of+handwritten+digits}

\textbf{Pendigits} \cite{pendigits}:
A dataset for classifying digits based on 16-dimensional feature vectors of pen-written handwritten numbers.
The dataset is available at \href{https://archive.ics.uci.edu/dataset/81/pen+based+recognition+of+handwritten+digits}{https://archive.ics.uci.edu/dataset/81/pen+based+recognition+of+handwritten+digits}

\subsection{Label Generation Models}
\label{apdx:subsec_label_generation}

For all datasets, a neural network with a single hidden layer of width $5000$ was adopted as the label generation model for each dataset.
For image datasets including MNIST, fashion-MNIST, kuzushiji-MNIST, and CIFAR-10, the model was trained using $50000$ training samples.
For the UCI datasets, $90$\% of the entire dataset was randomly sampled and used for training.
The supervision for this training was based on the original hard labels provided in each dataset.
Cross-entropy loss was employed as the loss function.
The optimization of each label generation model was performed using Adam \cite{kingma2014adam}, with the following hyperparameter settings: a learning rate of $0.0005$, batch size of $512$, and a weight decay of $0.0002$.
The number of training epochs was set to $20$ for MNIST, fashion-MNIST, and kuzushiji-MNIST, and to $100$ for the remaining datasets.
To prevent the label generation models from producing overly confident predictions, label smoothing \cite{Szegedy_2016_CVPR,muller2019does} was applied with a confidence level of $0.9$ for the hard-labeled classes.

Table~\ref{tab:outline_label_generation_model} summarizes the information about the label generation model for each dataset.
Figures~\ref{fig:label_gen_conf_image} and \ref{fig:label_gen_conf_uci} show the confidence distributions of the outputs produced by the label generation models.
In these figures, the maximum predicted probabilities were collected, and histograms of these values were plotted.
These visualizations help quantify the uncertainty involved in the label assignments produced by the label generation models for each dataset.
By comparing Figures~\ref{fig:label_gen_conf_image} and \ref{fig:label_gen_conf_uci} with Figure~\ref{fig:compare_size_acc_main}, which evaluates the performance of classification models using T1OC and T2OC, it can be observed that T1OC and T2OC remain effective regardless of variations in the confidence levels of the label generation model.

\begin{table}[!h]
    \centering
    \caption{Outline of label generation models. 
             Each evaluation metric is computed on the test data.
             Evaluation metrics other than accuracy are computed using macro-averaging.
             Here, the hard labels originally assigned to each dataset are used as the ground truth labels.
             For the four image classification datasets, $10000$ samples were used as test data, while for the four UCI datasets, $10$\% of the entire dataset was allocated for testing.
             }
    \begin{tabular}{c|c|c|c|c|c}
        \hline
         dataset         & Accuracy & F1 & Precision & Recall & AUROC \\ \hline \hline
         MNIST           & 0.9870   & 0.9870   &  0.9871   & 0.9870 & 0.9998 \\ \hline 
         fashion-MNIST   & 0.8930   & 0.8926   &  0.8928   & 0.8930 & 0.9912 \\ \hline
         kuzushiji-MNIST & 0.9297   & 0.9297   &  0.9307   & 0.9297 & 0.9948 \\ \hline
         CIFAR-10        & 0.5656   & 0.5641   &  0.5633   & 0.5656 & 0.9072 \\ \hline
         HAR             & 0.9893   & 0.9897   &  0.9896   & 0.9900 & 0.9998  \\ \hline
         Letter          & 0.8745   & 0.8738   &  0.8802   & 0.8742 & 0.9944 \\ \hline
         Optdigits       & 0.9929   & 0.9928   &  0.9923   & 0.9934 & 0.9999 \\ \hline
         Pendigits       & 0.9891   & 0.9890   & 0.9890    & 0.9890 & 0.9988 \\ \hline
    \end{tabular}
    \label{tab:outline_label_generation_model}
\end{table}

\begin{figure*}[!h]
    \centering
    \includegraphics[width=1.0\linewidth]{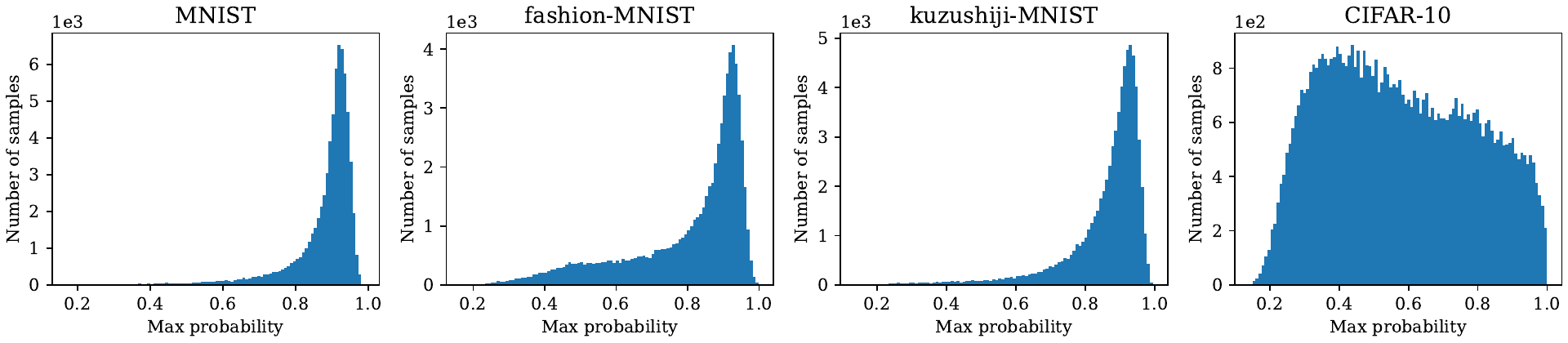}
    \caption{Max confidence distribution of label generation models (Image datasets)}
    \label{fig:label_gen_conf_image}
\end{figure*}

\begin{figure*}[!h]
    \centering
    \includegraphics[width=1.0\linewidth]{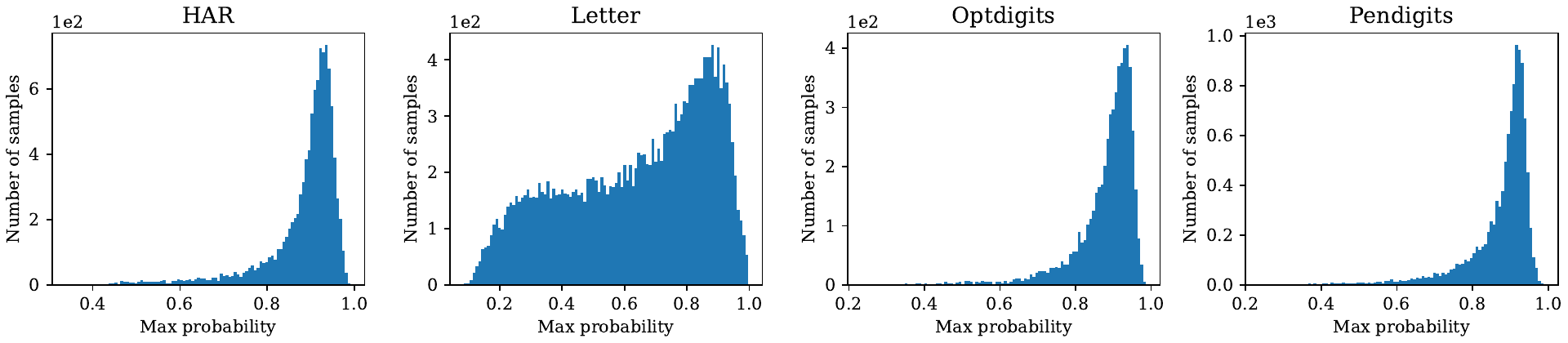}
    \caption{Max confidence distribution of label generation models (UCI datasets)}
    \label{fig:label_gen_conf_uci}
\end{figure*}

\subsection{Compute Resources}
\label{apdx:subsec_resources}

The experiments in this paper were conducted using the following computing environment:
OS: Rocky Linux 9.2
CPU: Intel(R) Xeon(R) Gold 6312U (24 cores, 48 threads, 2.40 GHz), 
RAM: 1.0 TB, 
SSD: 1.0 TB, 
GPU: NVIDIA RTX A6000 (48 GB VRAM), 
CUDA: 11.8, 
Python: 3.12.1.

\subsection{Detail of Compared Methods}
\label{apdx:subsec_detail_compared_method}

This section summarizes the compared methods and their specific configurations in our experiments.

\textbf{Label Smoothing (LS)} \cite{Szegedy_2016_CVPR}:
LS replaces hard labels, which are deterministic distributions, with soft labels by mixing in a uniform distribution with a small weight.
For example, if the confidence for the hard-labeled class is set to $0.9$, the remaining $0.1$ is uniformly distributed among the other classes to construct the soft label.

\textbf{Self-Distillation (SD)} \cite{furlanello2018born}:
SD involves training a teacher model on a dataset with hard labels and then using the output distributions of this teacher model as soft labels for training a student model.
It has been reported that the resulting student model can achieve better classification performance than the teacher model \cite{furlanello2018born}.
The training configuration for the student model is provided in ``Learning.'' of Section~\ref{subsec:exp_settings_and_result}.
The teacher model was trained with the same architecture, optimization method, and hyperparameter settings as the student model.

\textbf{Graph Laplacian Label Enhancement (GLLE)} \cite{xu2021label}:
GLLE is a Label Enhancement (LE) method that aims to recover label distributions by leveraging instance similarity information in the feature space.
It constructs soft labels using a fixed nonlinear transformation of the input features, followed by a linear transformation with a weight matrix $\bm{W}$, which is optimized during training.
To incorporate instance-level similarity information into the optimization, GLLE introduces regularization terms based on a similarity graph constructed as follows.
First, the similarity between instances is quantified using a Gaussian kernel.
A graph is then built by treating instances as graph nodes, connecting each instance to its $K$-nearest neighbors and assigning edge weights according to the computed similarities.
Using this graph, GLLE introduces a regularization term $\mathrm{Reg}_1$ that encourages similar instances to have similar soft label distributions.
Additionally, a second regularization term $\mathrm{Reg}_2$ incorporates label-wise similarity information.

GLLE has several hyperparameters that control the influence of these regularization terms and the graph construction: the regularization weights $\alpha_1 \in \mathbb{R}$ and $\alpha_2 \in \mathbb{R}$ for $\mathrm{Reg}_1$ and $\mathrm{Reg}_2$, respectively, the number of neighbors $K \in \mathbb{N}+$, and the scaling parameter $\sigma \in \mathbb{R}+$ for the Gaussian kernel.
In this experiment, following the setup of Xu et al. (2021), $K$ was set to the number of classes plus one, and $\sigma$ was set to $1$ \cite{xu2021label}.
Moreover, as the released code uses the same value for both regularization weights, the two were treated as a single hyperparameter $\alpha \in \mathbb{R}$, such that $\alpha_1 = \alpha_2 = \alpha$.
In Xu et al. (2021), $\alpha$ was tuned over the range $\{0.01, 0.1, \ldots, 100\}$ \cite{xu2021label}.
It was found that certain settings, such as $\alpha = 100$, degraded the performance of the resulting classification model.
Therefore, $\alpha = 0.01$ and $\alpha = 0.1$ were tested, and $\alpha = 0.01$ was adopted as it yielded better overall average performance across five trials.
Here, there was no significant difference in the classification performance between $\alpha = 0.01$ and $\alpha = 0.1$.
The original MATLAB implementation of GLLE is available at the following URL: \href{https://github.com/palm-ml/GLLE/tree/main}{https://github.com/palm-ml/GLLE/tree/main}.
The original implementation was re-implemented in Python and used for the comparative experiments.

\textbf{Label Information Bottleneck (LIB)} \cite{zheng2023label}:
LIB is a LE method that aims to reconstruct label distributions using neural networks.
Specifically, it learns a latent representation $\bm{H}$ from input features $\bm{X}$ that retains the most label-relevant information, and then reconstructs the label distribution from $\bm{H}$.
The latent representation is obtained using a neural network consisting of an encoder (Enc) that maps $\bm{X}$ to $\bm{H}$, and a decoder (Dec) that predicts the hard labels from $\bm{H}$.
Separately, a different neural network, referred to as LE-Net, is trained to reconstruct the soft label distribution from $\bm{H}$.
At inference time, the reconstructed label distribution is obtained by feeding $\bm{X}$ into Enc and passing the resulting $\bm{H}$ through LE-Net.
The training objective comprises three components: $\mathcal{L}_{as}$, $\mathcal{L}_{gap}$, and $I(\bm{X}, \bm{H})$.
The term $\mathcal{L}_{as}$ encourages the latent representation to capture more information relevant to the hard-label assignment.
In contrast, $\mathcal{L}_{gap}$ acts as a regularization term, penalizing cases where the reconstructed soft label distribution significantly deviates from the original hard labels.
$I(\bm{X}, \bm{H})$ controls the amount of mutual information shared between $\bm{X}$ and $\bm{H}$ to prevent over-reliance on the input features.
In practice, Enc and Dec are first pretrained, after which LE-Net is trained using the fixed Enc.

The hyperparameters of LIB include the regularization coefficients $\beta_1 \in \mathbb{R}$ and $\beta_2 \in \mathbb{R}$, which control the contribution of $\mathcal{L}_{gap}$ and $I(\bm{X}, \bm{H})$, respectively.
Additional hyperparameters include the hidden layer sizes of Enc, Dec, and LE-Net, as well as training configurations for the pretraining and final training phases.
Zheng et al. (2023) reported that LIB's label reconstruction performance is robust to a wide range of values for $\beta_1$ and $\beta_2$, specifically within $\{0.001, 0.01, \dots, 10\}$ \cite{zheng2023label}.
Accordingly, $\beta_1 = 0.01$ and $\beta_2 = 0.1$ were fixed in our experiments.
The hidden layer size for Enc, Dec, and LE-Net was set to $256$.
For pretraining Enc and Dec, a batch size of $32$, $200$ training epochs, and a learning rate of $0.005$ were used.
For training LE-Net, a batch size of $32$, $100$ epochs, and a learning rate of $0.001$ were employed.
The official implementation of LIB is publicly available at \href{https://github.com/qinghai-zheng/LIBLE/tree/main}{https://github.com/qinghai-zheng/LIBLE/tree/main}.
This implementation was used in the comparative experiments.

\section{Detail of Experimental Result}
\label{apdx:sec_detal_exp_result}

\subsection{Comparison of different parameters in T1OC and T2OC}
\label{apdx:subsec_compare_lambda}

Figures~\ref{fig:compare_lambda_T1OC_image}, \ref{fig:compare_lambda_T2OC_image}, \ref{fig:compare_lambda_T1OC_UCI}, and \ref{fig:compare_lambda_T2OC_UCI} show the results of varying the mixing coefficient $\lambda$ of $p_{\mathrm{A}}(y|\bm{x})$ obtained from T1OC and T2OC on image and UCI datasets.
From Figures~\ref{fig:compare_lambda_T1OC_image} and \ref{fig:compare_lambda_T2OC_image}, it can be observed that, for image datasets, the classification performance remains largely unaffected when $\lambda$ is chosen from the set $\{0.1, 0.15, 0.2, 0.25\}$ for both T1OC and T2OC.
A closer inspection reveals that, for MNIST and fashion-MNIST, the variation in prediction accuracy across different $\lambda$ values becomes more pronounced as the number of training samples decreases.
Figures~\ref{fig:compare_lambda_T1OC_UCI} and \ref{fig:compare_lambda_T2OC_UCI} show that, for UCI datasets, the classification accuracy tends to improve as $\lambda$ increases within the set $\{0.1, 0.15, 0.2, 0.25\}$ for both T1OC and T2OC.
In the method comparison experiments in Section~\ref{sec:experiments}, $\lambda$ was fixed at $0.1$, and the results suggest that further improvements in classification performance may be achievable through appropriate tuning of $\lambda$.

\subsection{Comparison with LE methods}
\label{apdx:subsec_compare_le}

Tables \ref{tab:size_perf_detail_mnist_part1} through \ref{tab:size_perf_detail_pendigits_part2} present the test accuracy of classification models obtained in the experiments described in Section~\ref{subsec:exp_settings_and_result}, for all comparison methods including GLLE and LIB as LE methods.
Each value shown in these tables represents the mean and twice the standard deviation over five trials.
From these results, it is evident that LIB exhibits significantly higher variability in accuracy compared to other methods, with its standard deviation reaching up to 100 times greater in some cases. This indicates that LIB suffers from considerable instability in performance.
Moreover, the classification performance achieved by LIB is generally inferior to that of LS and {\it Hard}, and in some cases, substantially worse.
Due to the instability of LIB, its inclusion in the visualization would hinder interpretability.
Therefore, we omit LIB from the figures and instead include GLLE, which demonstrates relatively more stable performance among the LE-based methods. 
The comparative results among methods, including GLLE, are shown in Figure \ref{fig:compare_size_acc_all}.
As shown in Figure~\ref{fig:compare_size_acc_all}, GLLE also tends to yield classification models with lower overall accuracy.
This is attributed to the difficulty in appropriately defining neighborhood relationships in GLLE.
Specifically, the high dimensionality of the instances and the heterogeneous characteristics of features likely lead to inadequately estimated distances, resulting in suboptimal neighbor selection.
These findings confirm that the introduced methods based on our framework, T1OC and T2OC, outperform both GLLE and LIB among the LE-based approaches.

\begin{figure*}[!h]
    \centering
    \includegraphics[width=\linewidth]{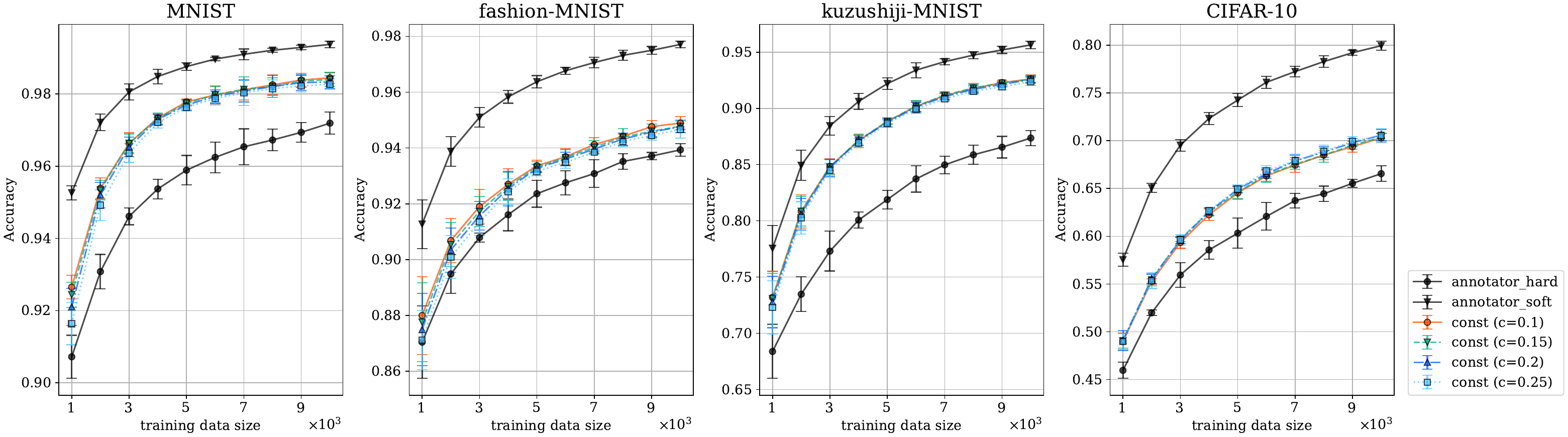}
    \caption{Comparison of different $\lambda$ in T1OC using image datasets.
             The plotted points represent the average of five trials, and the error bars indicate twice the standard deviation.}
    \label{fig:compare_lambda_T1OC_image}
\end{figure*}

\begin{figure*}[!h]
    \centering
    \includegraphics[width=\linewidth]{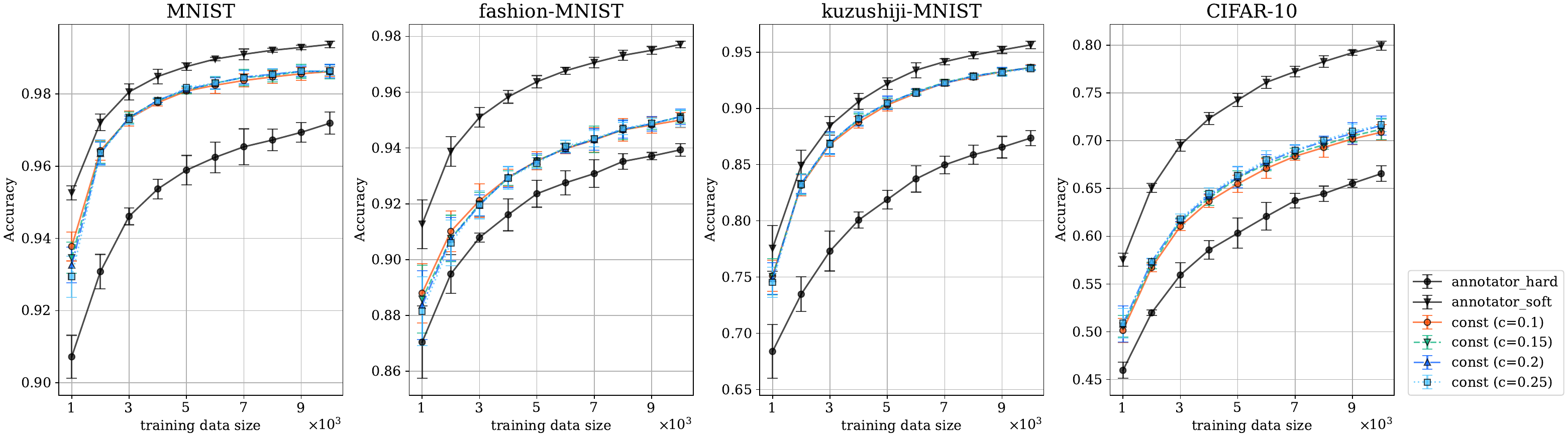}
    \caption{Comparison of different $\lambda$ in T2OC using image datasets.
             The plotted points represent the average of five trials, and the error bars indicate twice the standard deviation.
             }
    \label{fig:compare_lambda_T2OC_image}
\end{figure*}

\begin{figure*}[!h]
    \centering
    \includegraphics[width=\linewidth]{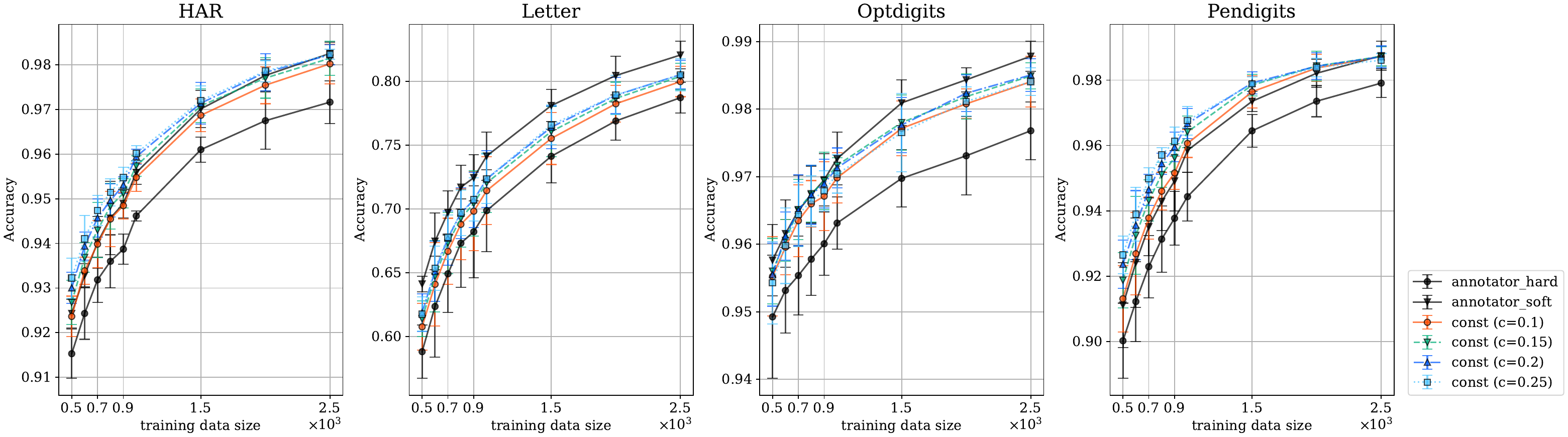}
    \caption{Comparison of different $\lambda$ in T1OC using UCI datasets.
             The plotted points represent the average of five trials, and the error bars indicate twice the standard deviation.
             }
    \label{fig:compare_lambda_T1OC_UCI}
\end{figure*}

\begin{figure*}[!h]
    \centering
    \includegraphics[width=\linewidth]{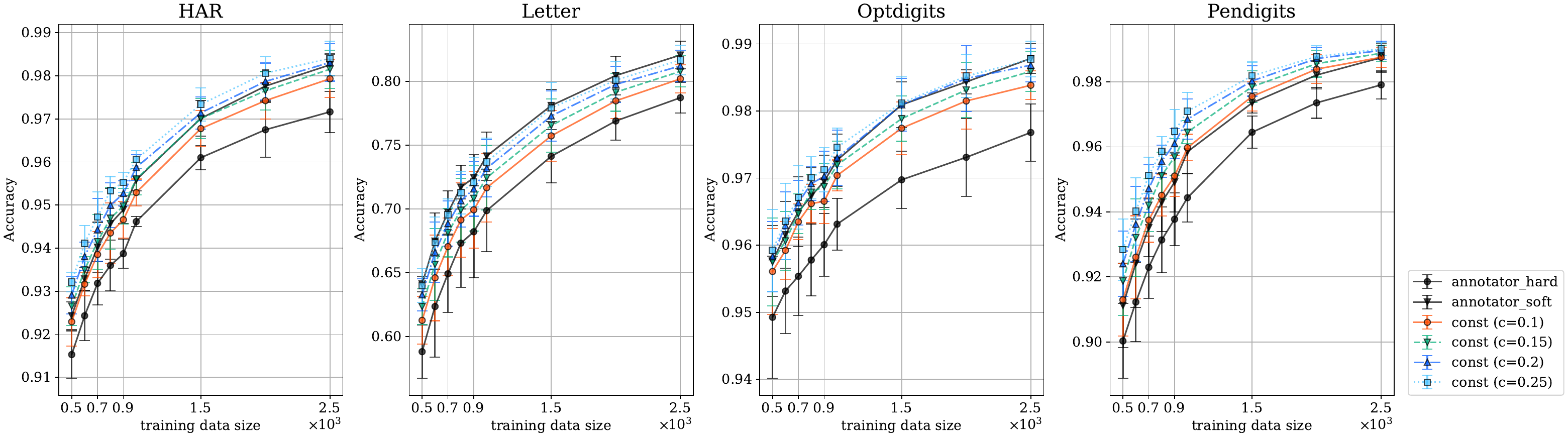}
    \caption{Comparison of different $\lambda$ in T2OC using UCI datasets.
             The plotted points represent the average of five trials, and the error bars indicate twice the standard deviation.
             }
    \label{fig:compare_lambda_T2OC_UCI}
\end{figure*}

\begin{figure*}[!h]
    \centering
    \includegraphics[width=\linewidth]{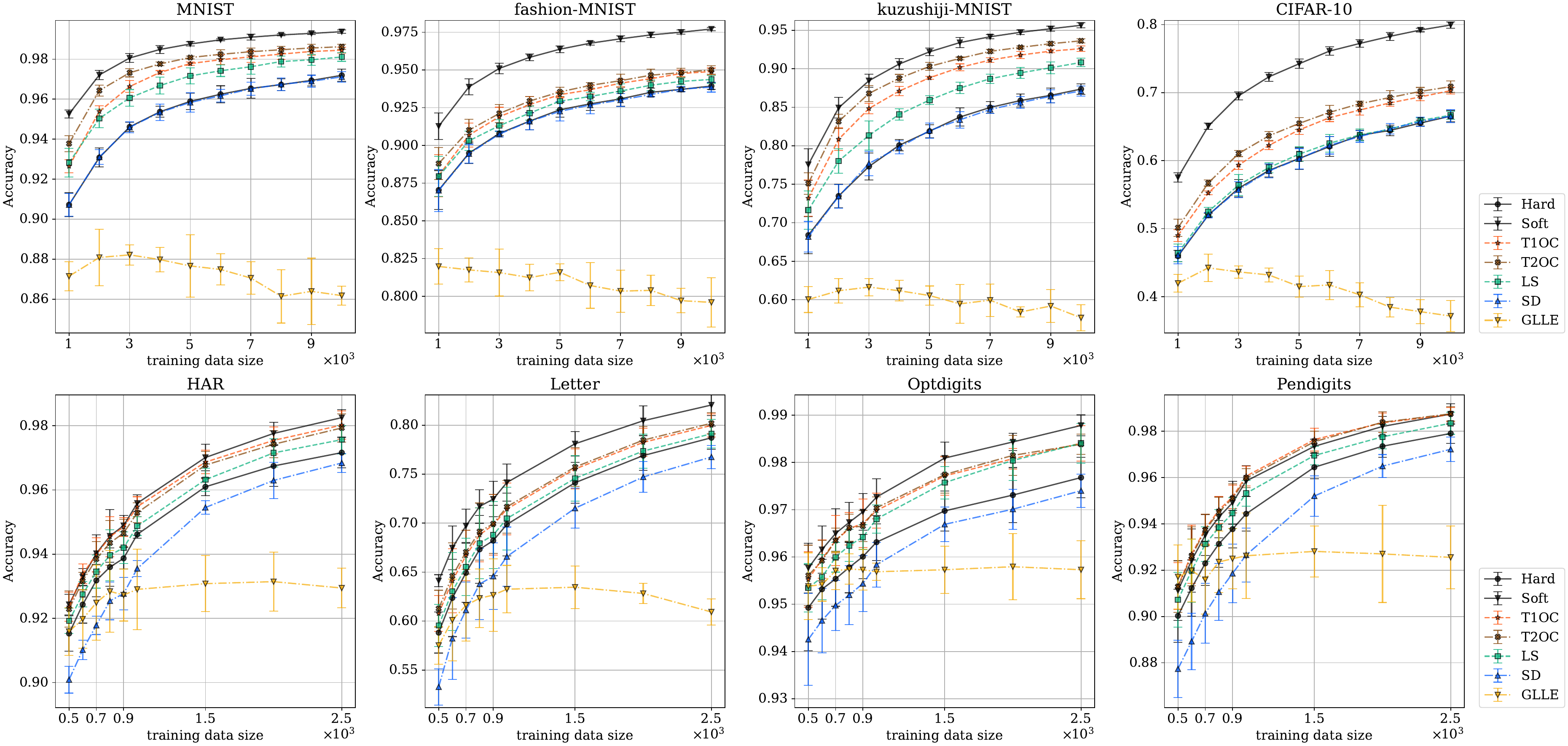}
    \caption{
             Comparison of the classification accuracy of models trained using the labels produced by each method on the image datasets (upper) and UCI datasets (lower).
             This figure includes the result of GLLE as a LE method.
             The plotted points represent the average of five trials, and the error bars indicate twice the standard deviation.}
    \label{fig:compare_size_acc_all}
\end{figure*}

\begin{table}[!h]
    \caption{Detail experimental result using MNIST (Part 1). 
        The first line indicates the number of training data. 
        The score shown is test accuracy. 
        The each value is the mean and twice the standard deviation of five trials. 
        For each number of training data, the value of the method with the highest mean value is bolded. 
}
    \centering
    \begin{tabular}{c|c|c|c|c|c} 
    \hline
     & 1000 & 2000 & 3000 & 4000 & 5000 \\ 
    \hline \hline 
    Hard & 0.907 ±0.006 & 0.931 ±0.005 & 0.946 ±0.002 & 0.954 ±0.003 & 0.959 ±0.004 \\ 
    T1OC & 0.926 ±0.003 & 0.954 ±0.003 & 0.966 ±0.003 & 0.973 ±0.001 & 0.978 ±0.001 \\ 
    T2OC & {\bf 0.938 ±0.004} & {\bf 0.964 ±0.003} & {\bf 0.973 ±0.002} & {\bf 0.978 ±0.001} & {\bf 0.981 ±0.001} \\ 
    LS & 0.928 ±0.007 & 0.950 ±0.005 & 0.961 ±0.004 & 0.967 ±0.004 & 0.972 ±0.004 \\ 
    SD & 0.907 ±0.006 & 0.931 ±0.004 & 0.946 ±0.003 & 0.953 ±0.004 & 0.958 ±0.005 \\ 
    GLLE & 0.871 ±0.007 & 0.881 ±0.014 & 0.882 ±0.005 & 0.880 ±0.006 & 0.877 ±0.016 \\ 
    LIB & 0.603 ±0.791 & 0.478 ±0.789 & 0.769 ±0.433 & 0.945 ±0.081 & 0.970 ±0.014 \\ 
    \hline
    Soft & 0.953 ±0.001 & 0.972 ±0.001 & 0.981 ±0.001 & 0.985 ±0.001 & 0.988 ±0.001 \\ 
    \hline
    \end{tabular}
    \label{tab:size_perf_detail_mnist_part1}
\end{table}

\begin{table}[!h]
    \caption{Detail experimental result using MNIST (Part 2). 
        The first line indicates the number of training data. 
        The score shown is test accuracy. 
        The each value is the mean and twice the standard deviation of five trials. 
        For each number of training data, the value of the method with the highest mean value is bolded. 
}
    \centering
    \begin{tabular}{c|c|c|c|c|c} 
    \hline
     & 6000 & 7000 & 8000 & 9000 & 10000 \\ 
    \hline \hline 
    Hard & 0.962 ±0.004 & 0.965 ±0.005 & 0.967 ±0.003 & 0.969 ±0.003 & 0.972 ±0.003 \\ 
    T1OC & 0.980 ±0.002 & 0.981 ±0.003 & 0.983 ±0.003 & 0.984 ±0.002 & 0.984 ±0.001 \\ 
    T2OC & {\bf 0.982 ±0.002} & {\bf 0.984 ±0.002} & {\bf 0.985 ±0.002} & {\bf 0.986 ±0.002} & {\bf 0.986 ±0.001} \\ 
    LS & 0.974 ±0.003 & 0.976 ±0.004 & 0.979 ±0.003 & 0.980 ±0.003 & 0.981 ±0.002 \\ 
    SD & 0.962 ±0.003 & 0.965 ±0.003 & 0.967 ±0.003 & 0.969 ±0.003 & 0.971 ±0.003 \\ 
    GLLE & 0.875 ±0.008 & 0.871 ±0.008 & 0.861 ±0.013 & 0.864 ±0.017 & 0.862 ±0.005 \\ 
    LIB & 0.849 ±0.496 & 0.782 ±0.685 & 0.976 ±0.016 & 0.977 ±0.010 & 0.978 ±0.011 \\ 
    \hline
    Soft & 0.990 ±0.000 & 0.991 ±0.001 & 0.992 ±0.000 & 0.993 ±0.000 & 0.994 ±0.000 \\ 
    \hline
    \end{tabular}
    \label{tab:size_perf_detail_mnist_part2}
\end{table}

\begin{table}[!h]
    \caption{Detail experimental result using fashion-MNIST (Part 1). 
        The first line indicates the number of training data. 
        The score shown is test accuracy. 
        The each value is the mean and twice the standard deviation of five trials. 
        For each number of training data, the value of the method with the highest mean value is bolded. 
}
    \centering
    \begin{tabular}{c|c|c|c|c|c} 
    \hline
     & 1000 & 2000 & 3000 & 4000 & 5000 \\ 
    \hline \hline 
    Hard & 0.870 ±0.013 & 0.895 ±0.007 & 0.908 ±0.002 & 0.916 ±0.006 & 0.924 ±0.005 \\ 
    T1OC & 0.880 ±0.014 & 0.907 ±0.008 & 0.919 ±0.006 & 0.927 ±0.005 & 0.934 ±0.002 \\ 
    T2OC & {\bf 0.888 ±0.011} & {\bf 0.910 ±0.007} & {\bf 0.921 ±0.006} & {\bf 0.929 ±0.003} & {\bf 0.935 ±0.003} \\ 
    LS & 0.880 ±0.013 & 0.903 ±0.006 & 0.913 ±0.004 & 0.921 ±0.005 & 0.929 ±0.005 \\ 
    SD & 0.870 ±0.014 & 0.894 ±0.006 & 0.907 ±0.002 & 0.916 ±0.006 & 0.923 ±0.006 \\ 
    GLLE & 0.820 ±0.012 & 0.818 ±0.008 & 0.816 ±0.016 & 0.812 ±0.009 & 0.816 ±0.006 \\ 
    LIB & 0.719 ±0.621 & 0.898 ±0.006 & 0.709 ±0.473 & 0.468 ±0.753 & 0.896 ±0.085 \\ 
    \hline
    Soft & 0.913 ±0.004 & 0.939 ±0.003 & 0.951 ±0.002 & 0.958 ±0.001 & 0.964 ±0.001 \\ 
    \hline
    \end{tabular}
    \label{tab:size_perf_detail_fashion_mnist_part1}
\end{table}

\begin{table}[!h]
    \caption{Detail experimental result using fashion-MNIST (Part 2). 
        The first line indicates the number of training data. 
        The score shown is test accuracy. 
        The each value is the mean and twice the standard deviation of five trials. 
        For each number of training data, the value of the method with the highest mean value is bolded. 
}
    \centering
    \begin{tabular}{c|c|c|c|c|c} 
    \hline
     & 6000 & 7000 & 8000 & 9000 & 10000 \\ 
    \hline \hline 
    Hard & 0.928 ±0.004 & 0.931 ±0.005 & 0.935 ±0.003 & 0.937 ±0.001 & 0.939 ±0.002 \\ 
    T1OC & 0.937 ±0.003 & 0.941 ±0.003 & 0.944 ±0.001 & 0.948 ±0.002 & 0.949 ±0.002 \\ 
    T2OC & {\bf 0.940 ±0.002} & {\bf 0.943 ±0.004} & {\bf 0.947 ±0.004} & {\bf 0.948 ±0.003} & {\bf 0.950 ±0.003} \\ 
    LS & 0.932 ±0.003 & 0.936 ±0.005 & 0.940 ±0.003 & 0.942 ±0.002 & 0.944 ±0.003 \\ 
    SD & 0.927 ±0.006 & 0.930 ±0.004 & 0.934 ±0.002 & 0.937 ±0.002 & 0.939 ±0.003 \\ 
    GLLE & 0.807 ±0.015 & 0.803 ±0.014 & 0.804 ±0.010 & 0.797 ±0.008 & 0.796 ±0.016 \\ 
    LIB & 0.925 ±0.023 & 0.936 ±0.006 & 0.939 ±0.003 & 0.796 ±0.580 & 0.943 ±0.004 \\ 
    \hline
    Soft & 0.968 ±0.001 & 0.971 ±0.001 & 0.973 ±0.001 & 0.975 ±0.001 & 0.977 ±0.001 \\ 
    \hline
    \end{tabular}
    \label{tab:size_perf_detail_fashion_mnist_part2}
\end{table}

\begin{table}[!h]
    \caption{Detail experimental result using kuzushiji-MNIST (Part 1). 
        The first line indicates the number of training data. 
        The score shown is test accuracy. 
        The each value is the mean and twice the standard deviation of five trials. 
        For each number of training data, the value of the method with the highest mean value is bolded. 
}
    \centering
    \begin{tabular}{c|c|c|c|c|c} 
    \hline
     & 1000 & 2000 & 3000 & 4000 & 5000 \\ 
    \hline \hline 
    Hard & 0.684 ±0.024 & 0.735 ±0.015 & 0.773 ±0.018 & 0.801 ±0.007 & 0.819 ±0.008 \\ 
    T1OC & 0.732 ±0.023 & 0.808 ±0.015 & 0.848 ±0.007 & 0.871 ±0.006 & 0.888 ±0.002 \\ 
    T2OC & {\bf 0.751 ±0.014} & {\bf 0.832 ±0.009} & {\bf 0.868 ±0.010} & {\bf 0.888 ±0.005} & {\bf 0.903 ±0.005} \\ 
    LS & 0.716 ±0.025 & 0.780 ±0.016 & 0.813 ±0.019 & 0.841 ±0.007 & 0.859 ±0.006 \\ 
    SD & 0.682 ±0.020 & 0.734 ±0.015 & 0.777 ±0.016 & 0.798 ±0.008 & 0.820 ±0.010 \\ 
    GLLE & 0.600 ±0.017 & 0.612 ±0.016 & 0.616 ±0.011 & 0.612 ±0.013 & 0.606 ±0.012 \\ 
    LIB & 0.470 ±0.608 & 0.531 ±0.622 & 0.618 ±0.557 & 0.702 ±0.571 & 0.819 ±0.142 \\ 
    \hline
    Soft & 0.776 ±0.010 & 0.849 ±0.007 & 0.884 ±0.004 & 0.906 ±0.004 & 0.922 ±0.003 \\ 
    \hline
    \end{tabular}
    \label{tab:size_perf_detail_kuzushiji_mnist_part1}
\end{table}

\begin{table}[!h]
    \caption{Detail experimental result using kuzushiji-MNIST (Part 2). 
        The first line indicates the number of training data. 
        The score shown is test accuracy. 
        The each value is the mean and twice the standard deviation of five trials. 
        For each number of training data, the value of the method with the highest mean value is bolded. 
}
    \centering
    \begin{tabular}{c|c|c|c|c|c} 
    \hline
     & 6000 & 7000 & 8000 & 9000 & 10000 \\ 
    \hline \hline 
    Hard & 0.837 ±0.012 & 0.850 ±0.008 & 0.859 ±0.008 & 0.865 ±0.010 & 0.874 ±0.007 \\ 
    T1OC & 0.902 ±0.005 & 0.911 ±0.003 & 0.918 ±0.005 & 0.923 ±0.002 & 0.926 ±0.004 \\ 
    T2OC & {\bf 0.913 ±0.002} & {\bf 0.923 ±0.003} & {\bf 0.928 ±0.002} & {\bf 0.932 ±0.003} & {\bf 0.936 ±0.003} \\ 
    LS & 0.875 ±0.007 & 0.887 ±0.006 & 0.895 ±0.007 & 0.901 ±0.008 & 0.908 ±0.006 \\ 
    SD & 0.834 ±0.011 & 0.847 ±0.005 & 0.856 ±0.008 & 0.864 ±0.008 & 0.871 ±0.007 \\ 
    GLLE & 0.595 ±0.025 & 0.599 ±0.021 & 0.584 ±0.007 & 0.592 ±0.021 & 0.577 ±0.017 \\ 
    LIB & 0.879 ±0.011 & 0.735 ±0.638 & 0.851 ±0.121 & 0.905 ±0.007 & 0.910 ±0.008 \\ 
    \hline
    Soft & 0.934 ±0.003 & 0.941 ±0.001 & 0.947 ±0.002 & 0.952 ±0.002 & 0.956 ±0.002 \\ 
    \hline
    \end{tabular}
    \label{tab:size_perf_detail_kuzushiji_mnist_part2}
\end{table}

\begin{table}[!h]
    \caption{Detail experimental result using CIFAR-10 (Part 1). 
        The first line indicates the number of training data. 
        The score shown is test accuracy. 
        The each value is the mean and twice the standard deviation of five trials. 
        For each number of training data, the value of the method with the highest mean value is bolded. 
}
    \centering
    \begin{tabular}{c|c|c|c|c|c} 
    \hline
     & 1000 & 2000 & 3000 & 4000 & 5000 \\ 
    \hline \hline 
    Hard & 0.460 ±0.008 & 0.520 ±0.003 & 0.559 ±0.013 & 0.586 ±0.010 & 0.603 ±0.016 \\ 
    T1OC & 0.490 ±0.009 & 0.553 ±0.003 & 0.594 ±0.006 & 0.623 ±0.006 & 0.645 ±0.007 \\ 
    T2OC & {\bf 0.501 ±0.013} & {\bf 0.567 ±0.005} & {\bf 0.610 ±0.004} & {\bf 0.637 ±0.006} & {\bf 0.655 ±0.009} \\ 
    LS & 0.464 ±0.013 & 0.525 ±0.006 & 0.564 ±0.016 & 0.590 ±0.007 & 0.610 ±0.011 \\ 
    SD & 0.461 ±0.012 & 0.520 ±0.004 & 0.557 ±0.012 & 0.585 ±0.010 & 0.603 ±0.015 \\ 
    GLLE & 0.420 ±0.013 & 0.442 ±0.020 & 0.436 ±0.009 & 0.432 ±0.010 & 0.415 ±0.015 \\ 
    LIB & 0.173 ±0.117 & 0.236 ±0.173 & 0.239 ±0.046 & 0.230 ±0.122 & 0.191 ±0.145 \\ 
    \hline
    Soft & 0.575 ±0.003 & 0.651 ±0.002 & 0.695 ±0.003 & 0.723 ±0.003 & 0.743 ±0.003 \\ 
    \hline
    \end{tabular}
    \label{tab:size_perf_detail_cifar10_part1}
\end{table}

\begin{table}[!h]
    \caption{Detail experimental result using CIFAR-10 (Part 2). 
        The first line indicates the number of training data. 
        The score shown is test accuracy. 
        The each value is the mean and twice the standard deviation of five trials. 
        For each number of training data, the value of the method with the highest mean value is bolded. 
}
    \centering
    \begin{tabular}{c|c|c|c|c|c} 
    \hline
     & 6000 & 7000 & 8000 & 9000 & 10000 \\ 
    \hline \hline 
    Hard & 0.621 ±0.014 & 0.637 ±0.008 & 0.645 ±0.008 & 0.655 ±0.004 & 0.665 ±0.008 \\ 
    T1OC & 0.663 ±0.006 & 0.675 ±0.008 & 0.685 ±0.005 & 0.694 ±0.006 & 0.703 ±0.005 \\ 
    T2OC & {\bf 0.671 ±0.010} & {\bf 0.684 ±0.004} & {\bf 0.693 ±0.010} & {\bf 0.702 ±0.006} & {\bf 0.709 ±0.008} \\ 
    LS & 0.626 ±0.013 & 0.638 ±0.009 & 0.646 ±0.003 & 0.659 ±0.003 & 0.667 ±0.006 \\ 
    SD & 0.623 ±0.013 & 0.636 ±0.008 & 0.647 ±0.008 & 0.657 ±0.007 & 0.665 ±0.010 \\ 
    GLLE & 0.417 ±0.021 & 0.403 ±0.017 & 0.385 ±0.014 & 0.378 ±0.018 & 0.372 ±0.023 \\ 
    LIB & 0.246 ±0.104 & 0.167 ±0.095 & 0.249 ±0.101 & 0.188 ±0.140 & 0.284 ±0.090 \\ 
    \hline
    Soft & 0.761 ±0.003 & 0.772 ±0.003 & 0.783 ±0.003 & 0.792 ±0.001 & 0.799 ±0.002 \\ 
    \hline
    \end{tabular}
    \label{tab:size_perf_detail_cifar10_part2}
\end{table}

\begin{table}[!h]
    \caption{Detail experimental result using HAR (Part 1). 
        The first line indicates the number of training data. 
        The score shown is test accuracy. 
        The each value is the mean and twice the standard deviation of five trials. 
        For each number of training data, the value of the method with the highest mean value is bolded. 
}
    \centering
    \begin{tabular}{c|c|c|c|c|c} 
    \hline
     & 500 & 600 & 700 & 800 & 900 \\ 
    \hline \hline 
    Hard & 0.915 ±0.006 & 0.924 ±0.006 & 0.932 ±0.005 & 0.936 ±0.006 & 0.939 ±0.003 \\ 
    T1OC & {\bf 0.924 ±0.005} & {\bf 0.934 ±0.003} & {\bf 0.940 ±0.005} & {\bf 0.945 ±0.006} & {\bf 0.948 ±0.003} \\ 
    T2OC & 0.923 ±0.006 & 0.932 ±0.003 & 0.939 ±0.005 & 0.944 ±0.007 & 0.947 ±0.004 \\ 
    LS & 0.919 ±0.004 & 0.927 ±0.004 & 0.935 ±0.005 & 0.940 ±0.008 & 0.942 ±0.005 \\ 
    SD & 0.901 ±0.004 & 0.910 ±0.003 & 0.918 ±0.003 & 0.925 ±0.006 & 0.928 ±0.005 \\ 
    GLLE & 0.916 ±0.007 & 0.920 ±0.009 & 0.925 ±0.012 & 0.928 ±0.013 & 0.927 ±0.008 \\ 
    LIB & 0.850 ±0.337 & 0.656 ±0.704 & 0.801 ±0.504 & 0.811 ±0.570 & 0.689 ±0.507 \\ 
    \hline
    Soft & 0.924 ±0.002 & 0.933 ±0.001 & 0.940 ±0.003 & 0.946 ±0.004 & 0.949 ±0.002 \\ 
    \hline
    \end{tabular}
    \label{tab:size_perf_detail_har_part1}
\end{table}

\begin{table}[!h]
    \caption{Detail experimental result using HAR (Part 2). 
        The first line indicates the number of training data. 
        The score shown is test accuracy. 
        The each value is the mean and twice the standard deviation of five trials. 
        For each number of training data, the value of the method with the highest mean value is bolded. 
}
    \centering
    \begin{tabular}{c|c|c|c|c} 
    \hline
     & 1000 & 1500 & 2000 & 2500 \\ 
    \hline \hline 
    Hard & 0.946 ±0.001 & 0.961 ±0.003 & 0.967 ±0.006 & 0.972 ±0.005 \\ 
    T1OC & {\bf 0.955 ±0.003} & {\bf 0.969 ±0.004} & {\bf 0.975 ±0.004} & {\bf 0.980 ±0.004} \\ 
    T2OC & 0.953 ±0.003 & 0.968 ±0.004 & 0.974 ±0.004 & 0.979 ±0.004 \\ 
    LS & 0.949 ±0.003 & 0.963 ±0.004 & 0.972 ±0.004 & 0.976 ±0.004 \\ 
    SD & 0.936 ±0.003 & 0.955 ±0.002 & 0.963 ±0.006 & 0.968 ±0.003 \\ 
    GLLE & 0.929 ±0.013 & 0.931 ±0.009 & 0.931 ±0.009 & 0.929 ±0.006 \\ 
    LIB & 0.624 ±0.670 & 0.813 ±0.624 & 0.684 ±0.701 & 0.878 ±0.407 \\ 
    \hline
    Soft & 0.956 ±0.001 & 0.970 ±0.002 & 0.978 ±0.002 & 0.983 ±0.001 \\ 
    \hline
    \end{tabular}
    \label{tab:size_perf_detail_har_part2}
\end{table}

\begin{table}[!h]
    \caption{Detail experimental result using Letter (Part 1). 
        The first line indicates the number of training data. 
        The score shown is test accuracy. 
        The each value is the mean and twice the standard deviation of five trials. 
        For each number of training data, the value of the method with the highest mean value is bolded. 
}
    \centering
    \begin{tabular}{c|c|c|c|c|c} 
    \hline
     & 500 & 600 & 700 & 800 & 900 \\ 
    \hline \hline 
    Hard & 0.588 ±0.021 & 0.624 ±0.040 & 0.649 ±0.030 & 0.673 ±0.035 & 0.682 ±0.036 \\ 
    T1OC & 0.608 ±0.018 & 0.641 ±0.033 & 0.667 ±0.026 & 0.688 ±0.028 & 0.698 ±0.031 \\ 
    T2OC & {\bf 0.613 ±0.019} & {\bf 0.646 ±0.034} & {\bf 0.670 ±0.022} & {\bf 0.691 ±0.029} & {\bf 0.699 ±0.030} \\ 
    LS & 0.596 ±0.022 & 0.630 ±0.040 & 0.655 ±0.029 & 0.679 ±0.036 & 0.688 ±0.034 \\ 
    SD & 0.533 ±0.018 & 0.582 ±0.042 & 0.611 ±0.029 & 0.638 ±0.036 & 0.646 ±0.034 \\ 
    GLLE & 0.575 ±0.019 & 0.601 ±0.042 & 0.616 ±0.037 & 0.623 ±0.030 & 0.627 ±0.037 \\ 
    LIB & 0.279 ±0.365 & 0.261 ±0.286 & 0.286 ±0.377 & 0.377 ±0.467 & 0.279 ±0.422 \\ 
    \hline
    Soft & 0.641 ±0.003 & 0.675 ±0.011 & 0.697 ±0.008 & 0.717 ±0.009 & 0.725 ±0.009 \\ 
    \hline
    \end{tabular}
    \label{tab:size_perf_detail_letter_part1}
\end{table}

\begin{table}[!h]
    \caption{Detail experimental result using Letter (Part 2). 
        The first line indicates the number of training data. 
        The score shown is test accuracy. 
        The each value is the mean and twice the standard deviation of five trials. 
        For each number of training data, the value of the method with the highest mean value is bolded. 
}
    \centering
    \begin{tabular}{c|c|c|c|c} 
    \hline
     & 1000 & 1500 & 2000 & 2500 \\ 
    \hline \hline 
    Hard & 0.699 ±0.032 & 0.741 ±0.021 & 0.769 ±0.015 & 0.787 ±0.012 \\ 
    T1OC & 0.714 ±0.026 & 0.755 ±0.020 & 0.782 ±0.014 & 0.800 ±0.012 \\ 
    T2OC & {\bf 0.717 ±0.027} & {\bf 0.757 ±0.020} & {\bf 0.785 ±0.014} & {\bf 0.802 ±0.011} \\ 
    LS & 0.705 ±0.032 & 0.746 ±0.023 & 0.774 ±0.017 & 0.791 ±0.015 \\ 
    SD & 0.666 ±0.030 & 0.715 ±0.020 & 0.747 ±0.016 & 0.768 ±0.012 \\ 
    GLLE & 0.633 ±0.024 & 0.635 ±0.022 & 0.628 ±0.010 & 0.609 ±0.013 \\ 
    LIB & 0.313 ±0.456 & 0.637 ±0.116 & 0.187 ±0.416 & 0.320 ±0.451 \\ 
    \hline
    Soft & 0.741 ±0.009 & 0.781 ±0.006 & 0.804 ±0.008 & 0.821 ±0.005 \\ 
    \hline
    \end{tabular}
    \label{tab:size_perf_detail_letter_part2}
\end{table}

\begin{table}[!h]
    \caption{Detail experimental result using Optdigits (Part 1). 
        The first line indicates the number of training data. 
        The score shown is test accuracy. 
        The each value is the mean and twice the standard deviation of five trials. 
        For each number of training data, the value of the method with the highest mean value is bolded. 
}
    \centering
    \begin{tabular}{c|c|c|c|c|c} 
    \hline
     & 500 & 600 & 700 & 800 & 900 \\ 
    \hline \hline 
    Hard & 0.949 ±0.009 & 0.953 ±0.006 & 0.955 ±0.006 & 0.958 ±0.005 & 0.960 ±0.005 \\ 
    T1OC & 0.955 ±0.006 & {\bf 0.960 ±0.004} & {\bf 0.963 ±0.005} & 0.966 ±0.003 & {\bf 0.967 ±0.005} \\ 
    T2OC & {\bf 0.956 ±0.006} & 0.959 ±0.004 & {\bf 0.963 ±0.003} & {\bf 0.966 ±0.003} & 0.967 ±0.003 \\ 
    LS & 0.953 ±0.005 & 0.956 ±0.005 & 0.960 ±0.003 & 0.962 ±0.003 & 0.964 ±0.003 \\ 
    SD & 0.943 ±0.010 & 0.947 ±0.007 & 0.950 ±0.005 & 0.952 ±0.006 & 0.954 ±0.006 \\ 
    GLLE & 0.954 ±0.007 & 0.954 ±0.004 & 0.957 ±0.004 & 0.957 ±0.003 & 0.957 ±0.004 \\ 
    LIB & 0.789 ±0.690 & 0.793 ±0.694 & 0.821 ±0.601 & 0.681 ±0.721 & 0.862 ±0.458 \\ 
    \hline
    Soft & 0.958 ±0.003 & 0.962 ±0.002 & 0.965 ±0.003 & 0.967 ±0.002 & 0.970 ±0.002 \\ 
    \hline
    \end{tabular}
    \label{tab:size_perf_detail_optical_part1}
\end{table}

\begin{table}[!h]
    \caption{Detail experimental result using Optdigits (Part 2). 
        The first line indicates the number of training data. 
        The score shown is test accuracy. 
        The each value is the mean and twice the standard deviation of five trials. 
        For each number of training data, the value of the method with the highest mean value is bolded. 
}
    \centering
    \begin{tabular}{c|c|c|c|c} 
    \hline
     & 1000 & 1500 & 2000 & 2500 \\ 
    \hline \hline 
    Hard & 0.963 ±0.004 & 0.970 ±0.004 & 0.973 ±0.006 & 0.977 ±0.004 \\ 
    T1OC & 0.970 ±0.004 & 0.977 ±0.004 & 0.981 ±0.002 & {\bf 0.984 ±0.004} \\ 
    T2OC & {\bf 0.970 ±0.002} & {\bf 0.977 ±0.004} & {\bf 0.981 ±0.004} & 0.984 ±0.002 \\ 
    LS & 0.968 ±0.003 & 0.976 ±0.003 & 0.980 ±0.004 & {\bf 0.984 ±0.004} \\ 
    SD & 0.958 ±0.005 & 0.967 ±0.004 & 0.970 ±0.004 & 0.974 ±0.004 \\ 
    GLLE & 0.957 ±0.002 & 0.957 ±0.005 & 0.958 ±0.007 & 0.957 ±0.006 \\ 
    LIB & 0.666 ±0.606 & 0.658 ±0.719 & 0.979 ±0.020 & 0.925 ±0.234 \\ 
    \hline
    Soft & 0.973 ±0.002 & 0.981 ±0.002 & 0.984 ±0.001 & 0.988 ±0.001 \\ 
    \hline
    \end{tabular}
    \label{tab:size_perf_detail_optical_part2}
\end{table}

\begin{table}[!h]
    \caption{Detail experimental result using Pendigits (Part 1). 
        The first line indicates the number of training data. 
        The score shown is test accuracy. 
        The each value is the mean and twice the standard deviation of five trials. 
        For each number of training data, the value of the method with the highest mean value is bolded. 
}
    \centering
    \begin{tabular}{c|c|c|c|c|c} 
    \hline
     & 500 & 600 & 700 & 800 & 900 \\ 
    \hline \hline 
    Hard & 0.900 ±0.011 & 0.912 ±0.012 & 0.923 ±0.010 & 0.931 ±0.010 & 0.938 ±0.008 \\ 
    T1OC & 0.913 ±0.010 & 0.927 ±0.013 & {\bf 0.938 ±0.006} & {\bf 0.946 ±0.006} & {\bf 0.952 ±0.005} \\ 
    T2OC & 0.913 ±0.011 & 0.926 ±0.013 & 0.937 ±0.007 & 0.945 ±0.006 & 0.951 ±0.006 \\ 
    LS & 0.907 ±0.012 & 0.920 ±0.014 & 0.931 ±0.008 & 0.938 ±0.008 & 0.945 ±0.007 \\ 
    SD & 0.877 ±0.012 & 0.889 ±0.012 & 0.901 ±0.013 & 0.911 ±0.012 & 0.919 ±0.013 \\ 
    GLLE & {\bf 0.917 ±0.014} & 0.920 ±0.014 & 0.916 ±0.015 & 0.924 ±0.013 & 0.925 ±0.013 \\ 
    LIB & 0.768 ±0.663 & {\bf 0.943 ±0.013} & 0.799 ±0.394 & 0.746 ±0.435 & 0.726 ±0.579 \\ 
    \hline
    Soft & 0.911 ±0.006 & 0.924 ±0.007 & 0.935 ±0.004 & 0.943 ±0.004 & 0.949 ±0.005 \\ 
    \hline
    \end{tabular}
    \label{tab:size_perf_detail_pendigits_part1}
\end{table}

\begin{table}[!h]
    \caption{Detail experimental result using Pendigits (Part 2). 
        The first line indicates the number of training data. 
        The score shown is test accuracy. 
        The each value is the mean and twice the standard deviation of five trials. 
        For each number of training data, the value of the method with the highest mean value is bolded. 
}
    \centering
    \begin{tabular}{c|c|c|c|c} 
    \hline
     & 1000 & 1500 & 2000 & 2500 \\ 
    \hline \hline 
    Hard & 0.944 ±0.007 & 0.964 ±0.005 & 0.974 ±0.005 & 0.979 ±0.004 \\ 
    T1OC & {\bf 0.961 ±0.004} & {\bf 0.976 ±0.005} & 0.984 ±0.004 & 0.987 ±0.003 \\ 
    T2OC & 0.960 ±0.004 & 0.976 ±0.005 & {\bf 0.984 ±0.004} & {\bf 0.988 ±0.003} \\ 
    LS & 0.953 ±0.006 & 0.970 ±0.004 & 0.978 ±0.005 & 0.983 ±0.005 \\ 
    SD & 0.927 ±0.012 & 0.952 ±0.009 & 0.965 ±0.005 & 0.972 ±0.005 \\ 
    GLLE & 0.926 ±0.018 & 0.928 ±0.011 & 0.927 ±0.021 & 0.926 ±0.014 \\ 
    LIB & 0.915 ±0.141 & 0.732 ±0.654 & 0.950 ±0.093 & 0.826 ±0.616 \\ 
    \hline
    Soft & 0.959 ±0.003 & 0.974 ±0.002 & 0.982 ±0.002 & 0.987 ±0.002 \\ 
    \hline
    \end{tabular}
    \label{tab:size_perf_detail_pendigits_part2}
\end{table}

\clearpage

\section{Limitation}
\label{apdx_sec:limitation}

The framework introduced in Section~\ref{sec:formulation}, along with the theoretical analysis in Section~\ref{subsec:interpret_formulation}, assumes a fixed set of labeled classes, i.e., no unseen classes exist and the emergence of new classes is not considered.
As such, the current framework does not address scenarios where classes are continuously added over time.
Extending our framework and theoretical analysis to accommodate the presence of unknown or emerging classes is an important future direction, and our results are considered to provide a solid foundation for such extensions.

The generalization error analysis conducted in Section~\ref{subsec:error_analysis} assumes that samples are independently and identically distributed (i.i.d.).
Therefore, the results in Section~\ref{subsec:error_analysis} are not directly applicable to cases where samples exhibit temporal or other dependencies.
Nevertheless, the theoretical results we establish here are expected to serve as a useful starting point when designing proofs under relaxed assumptions that allow for such dependencies.

Furthermore, the analysis in Section~\ref{subsec:error_analysis} assumes that the loss function is Lipschitz continuous and bounded.
Many loss functions used in practice satisfy these assumptions.
Although certain loss functions such as the cross-entropy loss do not strictly meet these assumptions, they are often effectively bounded in real-world applications, due to the use of finite prediction values and the addition of small constants to prevent logarithmic divergence.
Thus, this assumption does not present a practical limitation in most usage scenarios.

\section{About Code of Ethics and Broader Impacts}
\label{apdx_sec:code_of_ethics_broader_impacts}

\textbf{About Potential Harms.}
This study does not involve human subjects, and all datasets used are publicly available.

\textbf{Potential Positive Societal Impacts.}
The findings of this paper offer a new direction, namely the use of additional supervision, as a viable alternative for improving classification accuracy in real-world applications where increasing the number of labeled data points is infeasible.
Such constraints on data acquisition are common across a wide range of fields, including healthcare and industrial applications.
In the healthcare domain, for instance, acquiring additional instances can be particularly difficult in tasks such as rare disease classification, where patient samples are inherently limited.
Moreover, when the input features involve long-term pathological changes, collecting more data can require extended periods of observation, making rapid dataset expansion impractical.
In industrial settings, the cost of collecting even a single additional instance point may be prohibitively high.
Our results can therefore contribute to improving model performance in these practically constrained scenarios.

\textbf{Potential Negative Societal Impacts.}
Because our framework allows for the integration of additional supervision into datasets with hard labels, there is a potential risk that this additional supervision could be misused to construct biased or malicious classification models.
For example, one might intentionally encode biased predictions with respect to specific attributes of personal information included in the input features.
As a precaution in deployment scenarios, we recommend that model developers obtain additional supervision only from trusted sources and rigorously validate its content before use.

\end{document}